\pgfplotsset{compat=1.18}
\newcolumntype{Y}{>{\RaggedRight\arraybackslash}X}
\theoremstyle{plain}
\newtheorem{theorem}{Theorem}
\newtheorem{prop}{Proposition}
\newtheorem{lemma}{Lemma}
\newtheorem{corollary}{Corollary}
\theoremstyle{definition}
\newtheorem{definition}{Definition}
\theoremstyle{remark}
\newtheorem{remark}{Remark}
\DeclareMathOperator{\II}{\mathrm{I\!I}}
\newcommand{\rgrad}{\nabla_{\!g}}
\newcommand{\mb}{\mathbf}
\newcommand{\wa}{\nabla\kern-2mm\nabla}
\newcommand{\expp}{\mathrm{exp}_{p}}
\newcommand{\wass}{\mathrm{GF}}
\DeclareMathOperator*{\argmin}{arg\,min}
\newcommand{\wagrad}{\nabla \frac{\delta J}{\delta \rho}}
\newcommand{\wadel}{\frac{\delta J}{\delta \rho}}
\newcommand{\Fimp}{\mathcal{R}}
\newcommand{\jko}{\mathrm{JKO}}
\newcommand{\vol}{\mathrm{v}_{g}}
\newcommand{\jkoreg}{\chi_{\mathrm{jko}}}
\newcommand{\id}{\mathbf{I}}
\newcommand{\abs}[1]{\left|#1\right|}
\newbox\fixbox
\renewcommand{\algorithmicdo}{\setbox\fixbox\hbox{\ {} }\hskip-\wd\fixbox}
\newcommand{\gives}{\rightarrow}
\newcommand{\R}{\mathbb R}
\newcommand{\lr}[1]{\left(#1\right)}
\newcommand{\norm}[1]{\left\lVert#1\right\rVert}
\newcommand{\set}[1]{\left\{#1\right\}}
\newcommand{\tr}{\mathrm{tr}}
\title{Implicit Bias of the JKO Scheme}
\author[P. Halmos]{Peter Halmos$^*$}\thanks{$\,^*$Princeton Computer Science, ph3641@princeton.edu; PH is supported by NIH/NCI grant U24CA248453 (PI: Benjamin J. Raphael).}
\author[B. Hanin]{Boris Hanin$^\dagger$}\thanks{$\,^\dagger$Princeton ORFE, bhanin@princeton.edu;  BH is supported by a 2024 Sloan Fellowship in Mathematics, NSF CAREER grant DMS-2143754, and NSF grant DMS-2133806, and DARPA AIQ grant (HR001124S0029)}
\begin{document}

\maketitle

\begin{abstract}
Wasserstein gradient flow provides a general framework for minimizing an energy functional $J$ over the space of probability measures on a Riemannian manifold $(M,g)$. Its canonical time-discretization, the Jordan-Kinderlehrer-Otto (JKO) scheme, produces for any step size $\eta>0$ a sequence of probability distributions $\rho_k^\eta$ that approximate to first order in $\eta$ Wasserstein gradient flow on $J$. But the JKO scheme also has many other remarkable properties not shared by other first order integrators, e.g. it preserves energy dissipation and exhibits unconditional stability for $\lambda$-geodesically convex functionals $J$. To better understand the JKO scheme we characterize its implicit bias at second order in $\eta$. We show that $\rho_k^\eta$ are approximated to order $\eta^2$ by Wasserstein gradient flow on a \emph{modified} energy
\[
J^{\eta}(\rho) = J(\rho) - \frac{\eta}{4}\int_M \Big\lVert \nabla_g \frac{\delta J}{\delta \rho} (\rho) \Big\rVert_{2}^{2} \,\rho(dx),
\]
obtained by subtracting from $J$ the squared metric curvature of $J$ times $\eta/4$. The JKO scheme therefore adds at second order in $\eta$ a \textit{deceleration} in directions where the metric curvature of $J$ is rapidly changing. This corresponds to canonical implicit biases for common functionals: for entropy the implicit bias is the Fisher information, for KL-divergence it is the Fisher-Hyv{\"a}rinen divergence, and for Riemannian gradient descent it is the kinetic energy in the metric $g$. To understand the differences between minimizing $J$ and $J^\eta$ we study \emph{JKO-Flow}, Wasserstein gradient flow on $J^\eta$, in several simple numerical examples. These include exactly solvable Langevin dynamics on the Bures-Wasserstein space and Langevin sampling from a quartic potential in 1D. 
\end{abstract}

\section{Introduction}

Many problems in statistics, physics, and machine-learning can be formulated as finding a minimizer 
\begin{align}\label{eq:energy_Wasserstein_min}
\rho_* \in \argmin_{\rho\, \in P_{ac}(M)} J(\rho),\qquad J: P_{ac}(M) \to \mathbb{R}
\end{align}
for an energy $J$ on the space of probability measures $P_{ac}(M)$ over a Riemannian manifold $(M,g)$ with finite second moments and density relative to the Riemannian volume form. For sufficiently smooth energies $J$ it is often natural to study the static optimization problem \eqref{eq:energy_Wasserstein_min} by analyzing the dynamics of \emph{Wasserstein gradient flow} on $J$. That is, given an initial condition $\rho_0$, one seeks a solution to the dissipative PDE
\begin{equation}\label{eq:W-grad-flow}
\partial_t \rho_t  = \mathrm{div}_g\lr{\rho_t \nabla_g \frac{\delta J}{\delta \rho}(\rho_t)},   
\end{equation}
in which the right hand side is the steepest descent direction at $\rho_t$ for the energy $J$ with respect to the Wasserstein-2 metric. In \eqref{eq:W-grad-flow}, $\mathrm{div}_g, \nabla_g$ are the Riemannian divergence, gradient and $\frac{\delta J}{\delta \rho}(\rho_t)$ is the \textit{first variation} of the functional $J$ evaluated at $\rho_t$ (see \cite{calcvar} for a precise definition). Examples of well-known Wasserstein gradient flows for various energy functionals $J$ include both PDEs and SPDEs such as the heat equation and McKean-Vlasov equations as well as important algorithms in statistics and machine learning such as score-matching and Langevin dynamics. 

%For example, the law $\rho_t=\mathrm{Law}(X_t)$ of the overdamped Langevin equation
%\[
%dX_t = - \nabla_g V(X_t)dt + \sqrt{2\beta} dB_t
%\]
%evolves according to the Wasserstein gradient flow on the energy
%\[
%J(\rho) = \beta^{-1}\int_M \rho(dx) \log \rho(x)  + \int_M V(x) \rho(dx)
%\]
Both for the purposes of theoretical analyses of Wasserstein gradient flows and to solve \eqref{eq:W-grad-flow} numerically, a number of time discretizations
Wasserstein gradient flow \eqref{eq:W-grad-flow} have been proposed. One such prescription commonly employed in practice is to fix a finite step size $\eta >0$ and consider the forward-Euler scheme
\begin{equation}\label{eq:FE}
    \rho^{(n+1)} = \rho^{(n)} - \eta \,\nabla_{g} \cdot ( \rho^{(n)} v^{(n)} ),\qquad v^{(n)} =  \nabla_{g} \frac{\delta J}{\delta \rho} (\rho^{(n)}),
\end{equation}
which updates the density $\rho^{(n)}$ using the instantaneous velocity $v^{(n)}$. This strategy is easy to implement numerically after approximating $\rho^{(n)}$ by a sum of delta functions. However, it has significant deficiencies:
\begin{enumerate}
    \item The iterate $\rho^{(n+1)}$ may no longer belong to $P_{ac}(M)$: it is neither guaranteed to be positive nor to have unit mass.
    \item The update $- \eta \,\nabla_{g} \cdot ( \rho^{(n)} \nabla_{g} \delta J (\rho^{(n)}) )$ merely represents an additive perturbation, not a proper push-forward (i.e. of a transport map $\nabla\phi_{\sharp} \rho^{(n)}$ along a Wasserstein geodesic).
    \item The forward-Euler scheme is stable only for sufficiently small $\eta$.
    \item Forward-Euler \textit{does not} guarantee the energy-dissipation relation $J(\rho^{n+1}) \leq J(\rho^{n})$.
\end{enumerate} 
In addition to these limitations, a body of work (e.g. \cite{FE_Bad}) has identified major regularity issues in forward-Euler schemes \eqref{eq:FE} even in the regime of small step-size (cf \S \ref{sec:numerics}). In contrast, the seminal work Jordan, Kinderlehrer, and Otto (JKO) \cite{Jordan1998} initiated the study of the implicit-Euler discretization of \eqref{eq:W-grad-flow} given by
\begin{equation}\label{eq:JKO}
    \rho_{k+1}^{\mathrm{JKO}, \eta} =
    \argmin_{\rho \in P_{ac}(M)} \,\, J(\rho) + \frac{1}{2\eta} W_{g}^{2}(\rho_k^{\mathrm{JKO}, \eta}, \rho) \,\, .
\end{equation}
%This option arises as a direct parallel to the implicit Euler form of gradient descent \eqref{eq:backward} which uses a squared Euclidean proximal term. 
The update \eqref{eq:JKO} is a proximal-point problem in the squared Wasserstein-2 metric \cite{monge1781memoire, kantorovich1942english} $W_{g}^{2}(\cdot, \cdot)$, where by definition (see \cite{AGS2008}):
\begin{align*}
W_g^2(\mu, \nu):=&\inf_{\gamma \in \Gamma(\mu, \nu)}  \int_{M\times M} d_g^2(x,y) \gamma(dx,dy),
\end{align*}
$d_g(x,y)$ denotes the Riemannian distance between a pair of points $x,y \in M$ and the infimum is over all couplings between $\mu$ and $\nu$, the set of all $\gamma\in P_{ac}(M \times M)$ with marginals $\int_{M} \gamma(x,dy) = \mu(x)$ and $\int_{M} \gamma(dx,y) = \nu(y)$ denoted by $\Gamma(\mu, \nu)$.
%$\Gamma(\mu, \nu)=\{ \gamma\in P_{ac} \,\,\bigg| \, \int_{M} \gamma(x,dy) = \mu(x), \,\,\int_{M} \gamma(dx,y) = \nu(y) \}$
%\[\Gamma(\mu, \nu) := \left\{ \gamma\in P_{ac}(M\times M) \,\,\bigg| \, \int_{M} \gamma(x,dy) = \mu(x), \,\,\int_{M} \gamma(dx,y) = \nu(y) \right\}.\]
%In $P_{ac}(M)$, the classical Wasserstein-2 metric $W_{g}^{2}(\cdot, \cdot)$ to defines the most natural notion of proximal-distance between distributions.

\begin{definition}[Order-$k$ Integrator for the JKO Scheme.]\label{def:order_k_JKO}
    Suppose $J: P_{ac} \to \mathbb{R}$ is an energy functional for which the JKO Scheme is well-posed and stable with piecewise interpolation $\rho_{\lfloor t/\eta \rfloor}^{\mathrm{JKO}}$. A curve $(\rho'_t)_{t \geq 0} \subset P_{ac}$ is called an order-$k$ integrator for the JKO scheme if for any $T>0$
    \[
    \sup_{t\in [0,T]} W_2 (\rho'_t, \rho_{\lfloor t/\eta \rfloor}^{\mathrm{JKO}, \eta}) = O_T(\eta^{k}).
    \]
\end{definition}

A key result in \cite{AGS2008} is that the proximal point scheme \eqref{eq:JKO}, which is often called the JKO scheme, is a first order integrator of the Wasserstein gradient flow of $J$.

\begin{theorem}[\cite{AGS2008} Theorem 4.0.4, 11.2.1]\label{thrm:JKO_first_order}
    Suppose $J$ is proper, l.s.c., coercive, and $\lambda$-geodesically convex along generalized geodesics. Then the JKO Scheme is well-posed and stable, with piecewise interpolation $\rho_{\lfloor t/\eta \rfloor}^{\mathrm{JKO}}$. Moreover, the Wasserstein gradient flow $(\rho_{t})_{t \ge 0}$ defined by
    \[
    \partial_{t} \rho_{t} = \mathrm{div} \left(\rho_{t} \nabla \frac{\delta J}{\delta \rho} (\rho_{t}) \right),
    \]
    exists, is unique, and satisfies for any $T>0$
    \[
    \sup_{t\in [0,T]} W_2 (\rho_t, \rho_{\lfloor t/\eta \rfloor}^{\mathrm{JKO}, \eta}) = O_T(\eta).
    \]
\end{theorem}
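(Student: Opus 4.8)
The statement is the specialization to the Wasserstein space $(P_{ac}(M),W_2)$ of the abstract minimizing-movements theory for $\lambda$-geodesically convex functionals, so the plan is to run the De Giorgi--Ambrosio--Gigli--Savar\'e program (\cite{AGS2008}, Theorems~4.0.4 and~11.2.1) in four stages: (i) well-posedness of each JKO step; (ii) a priori estimates on the interpolated iterates; (iii) extraction of a limit curve and its identification with the gradient flow, together with uniqueness; (iv) the first-order error bound. Write $\rho_k := \rho_k^{\mathrm{JKO},\eta}$, let $\bar\rho^\eta_t := \rho_{\lfloor t/\eta\rfloor}$ denote the piecewise-constant interpolant, and let $\Phi_\eta(\rho\,;\,\sigma):=J(\rho)+\tfrac1{2\eta}W_2^2(\sigma,\rho)$ be the functional minimized at step $\eta$ anchored at $\sigma$.

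\emph{Steps (i)--(ii): well-posedness and a priori estimates.} Fix $\eta$ with $\eta\lambda^-<1$, where $\lambda^-:=\max(-\lambda,0)$. For any anchor $\sigma$, the functional $\rho\mapsto\Phi_\eta(\rho;\sigma)$ is proper (finite at $\sigma$), lower semicontinuous for $W_2$-convergence (lower semicontinuity of $J$ plus continuity of $W_2^2(\sigma,\cdot)$), and has $W_2$-relatively compact sublevels (the quadratic penalty bounds the second moment and so gives tightness, while coercivity of $J$ keeps the infimum finite); hence a minimizer exists by the direct method. It is unique because along any generalized geodesic anchored at $\sigma$ the penalty $\tfrac1{2\eta}W_2^2(\sigma,\cdot)$ is $\tfrac1\eta$-convex, which strictly dominates the possible $(-\lambda^-)$-concavity of $J$, making $\Phi_\eta(\cdot;\sigma)$ strongly geodesically convex. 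Comparing the minimizer $\rho_{k+1}$ of $\Phi_\eta(\cdot;\rho_k)$ with the competitor $\rho_k$ yields the one-step energy--dissipation inequality $J(\rho_{k+1})+\tfrac1{2\eta}W_2^2(\rho_k,\rho_{k+1})\le J(\rho_k)$; thus $J(\rho_k)$ is nonincreasing and, telescoping, $\sum_{j\ge0}W_2^2(\rho_j,\rho_{j+1})\le 2\eta\,(J(\rho_0)-\inf J)$. By the triangle inequality and Cauchy--Schwarz this gives the discrete H\"older-$\tfrac12$ bound $W_2(\bar\rho^\eta_s,\bar\rho^\eta_t)\le C(|t-s|+\eta)^{1/2}$ with $C$ depending only on $J(\rho_0)-\inf J$; this, together with the sharper De Giorgi variational-interpolant estimate coupling $J$, the metric slope $|\partial J|$, and the metric speed, is exactly the stability asserted in the theorem.

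\emph{Step (iii): limit curve, identification, uniqueness.} The uniform H\"older bound and the refined Arzel\`a--Ascoli theorem in $(P_{ac}(M),W_2)$ give, along a subsequence $\eta_n\to0$, $W_2$-convergence $\bar\rho^{\eta_n}_t\to\rho_t$ for every $t$, with $(\rho_t)$ locally $\tfrac12$-H\"older, hence absolutely continuous. Testing the step-$k$ minimality against an arbitrary $\sigma$ along the generalized geodesic from $\rho_k$ through $\rho_{k+1}$ to $\sigma$ and using $\lambda$-convexity of $J$ along generalized geodesics produces a discrete evolution variational inequality
\[
\tfrac1{2\eta}\big(W_2^2(\rho_{k+1},\sigma)-W_2^2(\rho_k,\sigma)\big)+\tfrac\lambda2 W_2^2(\rho_{k+1},\sigma)\ \le\ J(\sigma)-J(\rho_{k+1})
\]
(discarding a nonnegative $\tfrac1{2\eta}W_2^2(\rho_k,\rho_{k+1})$ on the left). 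Passing $\eta_n\to0$ and using Step~(ii) to absorb the remainders yields the continuous $\mathrm{EVI}_\lambda$ for $(\rho_t)$; by the differential characterization of $\mathrm{EVI}_\lambda$-flows on $P_2(M)$ --- the curve-of-maximal-slope identification together with the computation of the Wasserstein subdifferential of $J$ --- $(\rho_t)$ is a weak solution of $\partial_t\rho_t=\mathrm{div}\big(\rho_t\nabla\tfrac{\delta J}{\delta\rho}(\rho_t)\big)$. For uniqueness, any two $\mathrm{EVI}_\lambda$-flows $(\rho_t),(\rho_t')$ satisfy $\tfrac{d}{dt}\tfrac12 W_2^2(\rho_t,\rho_t')\le-\lambda W_2^2(\rho_t,\rho_t')$, so Gr\"onwall forces $\rho_t\equiv\rho_t'$ when $\rho_0=\rho_0'$; consequently the limit is independent of the subsequence and the full family $\bar\rho^\eta_t$ converges.

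\emph{Step (iv): first-order rate, and the main obstacle.} It remains to show $\sup_{t\in[0,T]}W_2(\rho_t,\bar\rho^\eta_t)=O_T(\eta)$. Setting $e_k:=W_2^2(\rho_{k\eta},\rho_k)$, one cross-tests the discrete $\mathrm{EVI}_\lambda$ (with $\sigma$ the continuous flow) against the continuous $\mathrm{EVI}_\lambda$ integrated over $[k\eta,(k+1)\eta]$ (with $\sigma$ the discrete iterate), adds, and controls the one-step discrepancy via $W_2^2(\rho_k,\rho_{k+1})\le 2\eta(J(\rho_k)-J(\rho_{k+1}))$; this produces a discrete Gr\"onwall inequality $e_{k+1}\le(1+C\eta)e_k+r_k$ whose consistency errors are \emph{summable}, $\sum_k r_k=O(\eta^2)$, so that $e_k\le C_T\eta^2$ and the claim follows. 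The delicate point --- and the main obstacle --- is precisely this last step: compactness alone gives only the generic $O(\sqrt{\eta})$ rate, and upgrading to the sharp $O(\eta)$ requires using $\lambda$-convexity quantitatively, namely exploiting the contraction/monotonicity of the proximal (resolvent) map together with the retained dissipation term $\tfrac1{2\eta}W_2^2(\rho_k,\rho_{k+1})$ to show that the \emph{accumulated}, not merely the per-step, consistency error is $O(\eta^2)$. This is the Rulla/Ambrosio--Gigli--Savar\'e first-order estimate for backward-Euler discretizations of $\lambda$-convex gradient flows, and it is the only place where the full strength of the hypotheses enters beyond what soft compactness arguments provide.
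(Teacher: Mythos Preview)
The paper does not supply its own proof of this theorem: it is quoted from \cite{AGS2008} (Theorems 4.0.4 and 11.2.1) and explicitly treated as a black-box for the remainder of the article. Your four-stage outline---existence and uniqueness of each JKO step via the direct method and generalized-geodesic convexity, a~priori energy-dissipation estimates, compactness and passage to the $\mathrm{EVI}_\lambda$ limit, and the sharp $O(\eta)$ rate via the resolvent/Rulla-type argument---is a correct high-level summary of the AGS program the paper cites, so there is nothing further in the paper to compare against.
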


Theorem~\ref{thrm:JKO_first_order} is usually interpreted in the forward direction: the JKO Scheme approximates the Wasserstein gradient flow with $O_T(\eta)$ accuracy. In reverse, in the sense of Definition~\ref{def:order_k_JKO}, the Wasserstein gradient flow $(\rho_{t})_{t\geq0}$ is itself an order-1 integrator for the JKO Scheme.

In this work, we take this first-order consistency estimate as a black-box: we assume that the energy functional $J$ satisfies the hypotheses guaranteeing the existence of the gradient flow, the well-posedness of the JKO Scheme, and the stability and energy-dissipation estimates of \cite{AGS2008} (e.g.\ the assumptions of Theorem~11.2.1). Under these conditions, the JKO Scheme provides an $O_{T}(\eta)$ approximation to the continuous flow. Our main result builds on this first-order relationship by identifying a second-order modified flow matching the JKO iterates to $O_{T}(\eta^{2})$.

\begin{comment}
\begin{theorem}[\cite{AGS2008} Theorem 4.0.4, 11.2.1 v]\label{thrm:JKO_first_order}
    Suppose $J: P_{ac} \to \mathbb{R}$ proper, l.s.c., coercive, and $\lambda$-convex along generalized geodesics. Then for any $T>0$
    \[
    \sup_{t\in [0,T]} W_2\lr{\rho_t, \rho_{\lfloor t/\eta \rfloor}^{\mathrm{JKO}, \eta}} = O_T(\eta).
    \]
\end{theorem}
\end{comment}
%An observation going back to \cite{Jordan1998, Otto2000} (cf \cite{Lott2007, Villani2003}) is that the distance function $W_2(\cdot ,\cdot)$ arises as the Riemannian distance function of a (formal) Riemannian metric $g_W$ on $P_{ac}(M)$ for which geodesics are optimal transport trajectories. 

In contrast to the forward-Euler discretization \eqref{eq:FE}, the JKO scheme \eqref{eq:JKO} exhibits many desirable theoretical properties: 
\begin{enumerate}
    \item For $J$ a $\lambda$-convex energy functional in $(P_{ac},W_{2})$ with $\lambda > 0$ the JKO scheme is \emph{unconditionally-stable}: for any $\eta > 0$ the scheme produces a sequence $(\rho_k^{_{\mathrm{JKO},  \eta}})_k \subset P_{2}(M)$ that converges in the Wasserstein metric to the unique minimizer of $J$. 
    \item The JKO scheme \eqref{eq:JKO} exhibits discrete energy dissipation:
    \[
        J(\rho_{k+1}^{\mathrm{JKO, \eta}}) + \frac{1}{2\eta}W_{2}^{2}(\rho_{k+1}^{\mathrm{JKO}, \eta}, \rho_k^{\mathrm{JKO},\eta}) \leq J(\rho_k^{\mathrm{JKO},\eta}).
    \] 
    \item The iterates of the JKO scheme converges as $\eta \gives 0$ to the unique solution of the associated Wasserstein gradient flow  for a broad class of $J$ (see e.g. Theorem 5.1 in \cite{Jordan1998}, Theorem 4.0.4 in \cite{AGS2008}, and Theorem 2 in \cite{Serfaty2011}. 
\end{enumerate}
While \eqref{eq:JKO} is a canonical method for analyzing the optimization problem \eqref{eq:energy_Wasserstein_min}, it is not constructive since it requires  solving at each iteration an infinite-dimensional optimization problem. Indeed, \eqref{eq:JKO} admits exact solutions only in special cases such as mean and covariance updates for the linear Fokker-Planck equation \cite{Halder2017} and Gaussian convolution steps for the heat-equation \cite{Burger2012, Erbar2010}. 

It is therefore unclear how the geometry of $P_{ac}(M)$ and the local structure of $J$ interact to determine the JKO updates $\rho_k^{_{\mathrm{JKO}, \eta}}$ beyond the fact that they are order $\eta$ approximations to Wasserstein gradient flow (Theorem \ref{thrm:JKO_first_order}). The central result of the present article is to provide additional insights on the structure of JKO iterates by computing the second order \textit{implicit bias} of the JKO scheme with respect to the Wasserstein gradient flow \eqref{eq:W-grad-flow}.

%A flurry of recent work has extended the investigation of implicit bias to mirror-descent and mirror-flow \cite{JMLR:v24:23-0836}, Adam \cite{pmlr-v235-cattaneo24a}, Heavy-ball momentum accelerated SGD \cite{ghosh2023implicit}, Stochastic coordinate descent \cite{DiGiovacchino2024}, and SDE-perspectives on modified gradient-flows \cite{DiGiovacchino2024}. However, these schemes operate on variables $\theta$ in Euclidean space.
%While most classes of optimization studied are parametric and gradient-based, a broad class of optimizations are variational and defined over a function space: $\min_{f \in \mathcal{M}}J(f)$. In such cases, the solution iterates depend on the \emph{variational derivatives} of $J$ with respect to $f$, $\delta^{k} J / \delta^{k} f$ and have optima which satisfy an Euler-Lagrange equation that defines many PDEs.

\subsection{Main Result: Implicit Bias of JKO}

%It remains a highly appealing prospect to both solve schemes with behavior closer to \eqref{eq:JKO}, and to understand the behavior of the proximal-point scheme \eqref{eq:JKO} by characterizing its asymptotic bias from Gradient Flow at higher orders. Our main statement, Theorem \ref{thm:main}, makes progress in this direction by computing the \textit{implicit bias} of the JKO scheme with respect to the Wasserstein gradient flow \eqref{eq:W-grad-flow} as reference. This bias illustrates, as a function of the step-size $\eta > 0$, what differs about the iterates of \eqref{eq:JKO} and ensures its stability and regularity to next order. Moreover, this offers a modified energy landscape $J \to J^{\eta}$ on which Wasserstein gradient flow coincides with the solution of JKO, mirroring the result for gradient descent \cite{barrett2021implicit, smith2021on}. To introduce this bias, recall the \emph{metric slope} of $J$ at a point $\rho$.

%makes progress in this direction and gives a new point of view on how the geometry of $M$, the step size $\eta$ and local statistics of $J$ determine the properties of JKO iterates. It does so by explicitly computing the \textit{implicit bias} of the JKO scheme.

Our main result Theorem \ref{thm:main} identifies, for any sufficiently smooth energy $J:P_{ac}(M)\gives \R$ and step size $\eta>0$ a modified energy $J^\eta: P_{ac}(M)\gives \R$ so that Wasserstein gradient flow on $J^\eta$ matches the JKO scheme updates on $J$ to order $\eta^2$, providing a richer understanding of the JKO scheme than Theorem \ref{thrm:JKO_first_order}. To state our main result we recall the following

\begin{definition}[Metric Slope]\label{eq:metric_slope}
    Let $J: P_{ac}(M) \to \mathbb{R} \cup \infty$ be an energy on $({P}_{ac}(M), W_{g})$. The \emph{metric slope} at a point $\rho \in {P}_{ac}(M)$ is the quantity
    \[
    |\partial J(\rho)| = \left(\int_{M} \left\lVert
    \nabla_g \frac{\delta J}{\delta \rho}
    \right\rVert_{g}^{2} \rho(dx) \right)^{1/2}.
    \]
\end{definition}
The squared-metric slope is a natural quantity in the analysis of Wasserstein gradient flows \cite{AGS2008, Villani2003}, as it quantifies the rate of energy dissipation $\frac{d}{dt} J(\rho_{t}) = - |\partial J(\rho_{t})|^{2}$, an equality commonly known as the \emph{energy dissipation equality} for a purely dissipative gradient flow.

\begin{theorem}[Implicit Bias of the JKO Scheme]\label{thm:main}
    Fix an energy $J: P_{2,ac}(M) \to \mathbb{R} \cup \infty$, $\rho_0\in P_{2,ac}(M)$, a step size $\eta$, and $T>0$. Denote by $\rho_k^{\mathrm{JKO}, \eta}$ the $k$-th iterate of the JKO scheme on $J$ with step size $\eta$ starting at $\rho_0^{\mathrm{JKO},\eta}=\rho_0.$ Suppose that:
    \begin{itemize}
           \item[\textbf{(A1)}] $J$ is proper and l.s.c. in $(P_2,W_2)$.
    \item[\textbf{(A2)}] The $W_{2}$-gradient flow 
    \[
    \partial_{t}\rho^{\wass}_{t} +\nabla \cdot ( -\nabla \psi(\rho_{t}^{\wass})\rho_{t}^{\wass}) =0,\qquad \psi = \frac{\delta J}{\delta \rho}
    \]
    exists and admits classical solution $\rho^{\wass}\in C_{t,x}^{2}$ that is twice differentiable on $[0,T]\times M$.
    \item[\textbf{(A3)}] The JKO Scheme is a first order integrator for the Wasserstein gradient flow on $[0,T]$. That is, there exists $C>0$ so that for all $0\leq k\leq T/\eta$, we have
    \[
    \sup_{t\eta\in [k,k+1]} W_2\lr{\rho_t^\wass, \rho_k^{\mathrm{JKO},\eta}} \leq C \eta.
    \]
    \item[\textbf{(A4)}] For all $t\in [0,T]$, the measure $\rho_{t}$ enjoys sufficiently fast decay to zero at infinity so that integration by parts is valid:
    \[
    \int_M \varphi \partial_{t} \rho_{t} = - \int_M \varphi\, \mathrm{div}(\rho_{t} v_{t}) = \int_M \langle \nabla \varphi , v_{t} \rangle \rho_{t}\qquad \forall \varphi \in C_c^\infty(M),\quad v_t = \nabla \frac{\delta J}{\delta \rho}(\rho_t)
    \]
    %{\color{purple} i.e. for any fixed $x_0\in M$\[\max_{0\leq j\leq 2} \norm{d_g(\cdot, x_0) \nabla_g^j \rho_t}_\infty < \infty.\]}
%    \item[\textbf{(A5)}] For all $0\leq k \leq T/\eta$ in a neighborhood of the curve $(\rho_{t} : t \in [k\eta, (k+1)\eta])$ the energy $J$ 
%    \begin{itemize}
%        \item is twice Fréchet differentiable
 %       \item has a smooth variation $\psi(\rho_{t}) := \wadel \in C^{1,2}_{t,x}$
        %\item has $\nabla\psi(\rho) ,\, \partial_{t} \nabla\psi(\rho) \in L^{2}(\rho_{t})$\\
    %\end{itemize}
    \item[\textbf{(A5)}] 
There exists $m\ge 2$ and an open set
\[
U \subset \Big\{\rho\in C^{m}(M): \rho\ge c_0>0,\ \int_M \rho\,d\mathrm{vol}=1\Big\}
\]
containing $\{\rho_t: t\in[0,T]\}$ such that:

(i) (First variation) For every $\rho\in U$ there exists
$\psi(\rho)\in C^{m}(M)$ with
\[
\frac{d}{d\varepsilon} J(\rho+\varepsilon\sigma)\Big|_{\varepsilon=0}
= \int_M \psi(\rho)\,\sigma\,d\mathrm{vol}
\quad \forall \sigma\in C^{m}(M),\ \int_M\sigma=0.
\]

(ii) (Second variation) The map $\rho\mapsto \psi(\rho)$ is differentiable on
$\mathcal U$: there exists a continuous linear map
$D\psi(\rho)[\cdot]$ such that
\[
D^2J(\rho)[\sigma_1,\sigma_2]
:=\int_M \big(D\psi(\rho)[\sigma_2]\big)\,\sigma_1\,d\mathrm{vol}
\]
defines a continuous symmetric bilinear form on $\{\sigma:\int\sigma=0\}$.

(iii) Along $t\mapsto\rho_t$ we have $\psi(\rho_t)\in C^{1,2}_{t,x}$ and
$\nabla\psi(\rho_t),\ \partial_t\nabla\psi(\rho_t)\in L^2(\rho_t)$.
    \end{itemize}
    Define the functional $J^\eta:P_{ac}(M)\gives \R$ via 
    \begin{equation}\label{eq:JKO-bias}
    J^\eta(\rho) := J(\rho)-\frac{\eta}{4}\abs{\partial J(\rho)}^2.
    \end{equation}
    Denoted by $\rho_t^\eta$ the Wasserstein gradient flow $\rho_t^\eta$ on $J^\eta$. Then there exists $C>0$ so that 
    \begin{equation}\label{eq:JKO-approx}
        \sup_{t\in [0,T]} W_2\lr{\rho_t^\eta, \rho_{\lfloor t/\eta \rfloor}^{\mathrm{JKO}, \eta}} \leq C \eta^2.
    \end{equation}

    \end{theorem}

\begin{remark}
    Assumptions (A1)-(A5) hold for many common energy functionals. For example, for the free-energy $J(\rho) = \int E(x) \rho(dx) + \beta^{-1} \int \rho(dx) \log\rho(x)$, $\lambda$-convexity guarantees the existence and well-posedness of the Wasserstein gradient flow and JKO Scheme, and $C^{3}$ smoothness of the potential $E$ and initial density $\rho_{0}$.
\end{remark}

Theorem~\ref{thm:main}, is a local, second-order result. It reverses the usual direction of the analysis: instead of showing the JKO Scheme approximates a given flow, it identifies a modified energy $J^{\eta}$ whose Wasserstein gradient flow coincides with the discrete JKO updates to $O_{T}(\eta^{2})$. This construction only requires local smoothness and well-posedness assumptions on $J$ and on the continuous and discrete flows; it does not explicitly require $\lambda$-geodesic convexity.

Although $\lambda$-convexity (with properness, l.s.c., and coercivity) is not needed for this second-order analysis, it is a standard sufficient condition which \cite{AGS2008} guarantees existence, uniqueness, stability, and first-order consistency of both the Wasserstein gradient flow and JKO scheme (Theorem~\ref{thrm:JKO_first_order}).

At a high level, Theorem \ref{thm:main} shows that the behavior of the backward-Euler scheme, to second order in the step size, amounts to \emph{subtracting} from the energy $J$ its squared metric slope with a weight depending on the step-size. The resulting Wasserstein gradient flow is more conservative (``sticky'') in regions of high metric slope for $J$. Intuitively, this offers stability relative to routines such as forward-Euler discretizations which are prone to overshoot minima and quickly leave regions of $P_{ac}(M)$ where $J$ is \textit{sharp} \cite{smith2021on, barrett2021implicit}. 

We prove Theorem~\ref{thm:main} in Section~\ref{sec:IB_JKO}. Before doing so, we offer a number of perspectives and examples of this implicit regularization in the discussion below. In particular:
\begin{itemize}
    \item In Section~\ref{sec:JKO_examples} we give examples of the implicit bias of JKO for many of the canonical energy functionals minimized with Wasserstein gradient flow. As we shall see below, the implicit regularization from Theorem \ref{thm:main} recovers a variety of interesting divergences / regularizers.
    %\item In Section~\ref{sec:BW-flows} we discuss the special case of linear filtering problems in Bures-Wasserstein space, where Wasserstein gradient flow, JKO updates, and the metric slope can be computed analytically. 
    \item In Section~\ref{sec:GD_IB} we discuss how Theorem \ref{thm:main} recovers previous results on the implicit bias of Euclidean gradient descent \cite{smith2021on, barrett2021implicit}.
    \item In Section \ref{sec:RGD_ImplicitBias} we provide a novel generalization of the results in \cite{smith2021on, barrett2021implicit} from gradient flows on flat a Euclidean to gradient flows on general Riemannian manifolds. 
    \item In Section~\ref{sec:free_energy_Langevin_IB} we state the implicit bias of JKO on the free-energy functional, finding the bias to correspond to two symmetric operators acting on the energy and entropy. We find that the implicit bias of the potential energy generates a kinetic energy term, while the implicit bias of the entropy generates a stabilizing \emph{quantum drift-diffusion} (See, e.g. \cite{Gianazza2008}).
    \item In Section~\ref{sec:Theoretical_Res} we summarize key steps for showing Theorem~\ref{thm:main} and provide an Eulerian reformulation of Theorem \ref{thm:main}. 
    \item Lastly, in Section~\ref{sec:numerics} we show numerical simulations exploring properties of gradient flow on the deformed objective $J^\eta$, which we term the \emph{\textrm{JKO}-Flow}. We highlight in particular its approximation to the solution of JKO and its improved regularity relative to forward-Euler.
\end{itemize}

%Practically, this offers a surrogate for the JKO scheme through gradient flow on $J^{\eta}$, which we investigate numerically in Section~\ref{sec:numerics}.
%Notably, the squared metric slope is \emph{non-linear} in $\rho$. 
%

\subsection{Illustrative Examples}\label{sec:JKO_examples}

Below we catalog the implicit bias 
\[
H^{\eta}(\rho):=J(\rho)-J^\eta(\rho)
\]
of the JKO scheme for several common energy functionals $J$:

\begin{itemize}[leftmargin=*]
    \item \textbf{Potential Energy.} For a smooth function $E:M\gives \R$ consider the associated potential energy functional:
\begin{align*}
J(\rho) = \int_M E(x)\rho(dx) , \qquad \frac{\delta J}{\delta \rho} = E, \qquad \nabla_g \frac{\delta J}{\delta \rho} = \nabla_g E.
\end{align*}
Hence, 
\begin{align}
    \label{eq:potential-ib}H^{\eta}(\rho)= \frac{\eta}{4} \int_M \lVert \nabla E(x) \rVert_{2}^{2} \,\rho(dx) 
\end{align}
This corresponds to the Dirichlet energy of the potential $E$ under $\rho$.
\item \textbf{Entropy.} The entropy functional, its first variation, and its Wasserstein gradient are given by
    \begin{align*}
    J(\rho) = \int_M  \rho(dx)\log \rho(x) , \quad \frac{\delta J}{\delta \rho} = \log \rho +1, \quad \nabla_g \frac{\delta J}{\delta \rho} = \nabla_g \log \rho
    \end{align*}
    The gradient-flow on $J$ corresponds to the heat equation $\partial_{t}\rho = \Delta_g \rho$ \cite{Jordan1998}, and the implicit bias of JKO is the classical Fisher information functional \cite{Gianazza2008, Amari2016}
\begin{align}
    &H^{\eta}(\rho)= \frac{\eta}{2} \int_M \frac{1}{2}\frac{\lVert \nabla_g  \rho (x)\rVert_{2}^{2}}{\rho} \,dx,
\end{align}
or equivalently the trace of the Fisher information matrix $(\eta/4) \cdot\tr I^{\rho}$  equals \cite{Amari2016}
\begin{align}
    &\frac{\eta}{4} \int_M \lVert \nabla_g \log \rho (x)\rVert_{2}^{2} \,\rho(dx) = \frac{\eta}{4}\mathbb{E}_{\rho}\left[\lVert \nabla_g \log \rho \rVert_{2}^{2}\right]
\end{align}
%This is $\Tr I_{\rho}(x_{t}) =\Tr \mathbb{E}_{\rho}[\nabla \log \rho \,\nabla \log \rho^{\top}]$, or the trace of the Fisher information matrix.
\item \textbf{Kullback-Leibler Divergence.} For a fixed $\pi \in P_{ac}(M)$ if $J$ is the KL divergence to $\pi$, then we have
\begin{align*}
J(\rho) = \int_M \rho(dx) \log \frac{\rho(x)}{\pi(x)}, \quad \frac{\delta J}{\delta \rho} = \log \rho- \log \pi +1, \quad \nabla_g \frac{\delta J}{\delta \rho} = \nabla_g \log \rho - \nabla \log \pi
\end{align*}
Hence, the implicit regularization \begin{align}
    &H^{\eta}(\rho) = \frac{\eta}{4} \int_M \lVert \nabla_g \log \rho(x) - \nabla_g \log \pi(x) \rVert_{2}^{2} \,\rho(dx)
\end{align}
of the JKO scheme is given by the Hyv{\"a}rinen or Fisher divergence \cite{Otto2000} of information geometry and score-matching \cite{hyvarinen05a}.
\end{itemize}

We further remark on other implicit biases of interest in Section~\ref{sec:JKO_IB_other}, including interaction energies, the Porous-Medium equation, and $K$-Wasserstein barycenter. Notably, with the implicit Euler signage the correction $-\frac{\eta}{4} |\partial J (\rho)|^2$ is \emph{concave} in $W_{2}$. A natural question, then, is to determine for which $J$ and ranges of $\eta$ the modified Wasserstein gradient flow generated by $J^{\eta}$ remains well-posed. While we compute the induced regularization of the JKO Scheme for several examples, we do not attempt here to prove full well-posedness for each case.

%An interesting special case of Theorem \ref{thm:main} is the implicit bias of the backward Euler discretization for Riemannian Gradient flow, which recovers known results when $(M,g)$ is flat Euclidean space.% Riemannian g for , the scheme \eqref{eq:JKO} generalizes gradient descent and recovers its bias as a sub-case. We recall the result of \cite{smith2021on, barrett2021implicit} below, and demonstrate the direct application of Theorem~\ref{thm:main} for this case.

\subsection{Recovering the Backward Euler Implicit Bias for Euclidean GD}\label{sec:GD_IB}

Theorem \ref{thm:main} generalizes well-known results on the implicit bias of gradient descent \cite{smith2021on, barrett2021implicit}. To explain this, let us restrict to the case when $M = \R^d$ equipped with the standard Euclidean metric. Given a smooth objective function $E:\R^d \gives \R$ the gradient flow
\[
\dot{x}_t = - \nabla E(x_t)
\]
can be discretized via either a forward or backward Euler scheme:
\begin{align}
    \label{eq:forward} x_{k+1} &= x_k - \eta \nabla E(x_k) \tag{Forward Euler}\\
    \label{eq:backward} x_{k+1} &=\argmin_{x} \set{E(x) + \frac{1}{2\eta}\norm{x-x_k}^2} \tag{Backward Euler}
\end{align}
A fundamental result about the implicit bias of these discretizations is the following
\begin{theorem}[\cite{smith2021on, barrett2021implicit}]\label{thm:GD}
    Let $x_k^{GD,\eta}$ be the iterates of either the forward or backward Euler schemes above. Define
    \[
    E^\eta(x) = \begin{cases}
        E(x)+\frac{\eta}{4}\norm{\nabla E(x)}^2,&\quad \text{Forward Euler}\\
        E(x)-\frac{\eta}{4}\norm{\nabla E(x)}^2,&\quad \text{Backward Euler}
    \end{cases}
    \]
    and write
    \begin{align}\label{eq:GD_ODE}
    \dot{x}_t^\eta = - \nabla E^\eta(x_t^\eta)
    \end{align}
    for the gradient flow on $E^\eta$ starting at the same initial condition as the gradient flow for $E$. Then for any $T>0$ and all $\eta$ sufficiently small 
    \[
    \sup_{t\in [0,T]} \norm{x_t^\eta - x_{\lfloor t/\eta \rfloor}^{GD,\eta}}= O_T(\eta^2).
    \]
\end{theorem}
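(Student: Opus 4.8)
This is the classical backward–error–analysis (``modified equation'') statement, so the plan is to (i) compare the one–step discrete update map with the time-$\eta$ flow map of the modified gradient flow $\dot x=-\nabla E^{\eta}(x)$ and check they agree up to a \emph{uniform} $O(\eta^{3})$ per step, and (ii) promote this one–step consistency to an $O_{T}(\eta^{2})$ bound over $[0,T]$ via a discrete Gr\"onwall (``Lady Windermere's fan'') argument. Since $E$ is smooth and the gradient flow of $E$ exists on $[0,T]$, a standard continuation/bootstrap argument confines the true flow, the modified flow, and the discrete iterates to a fixed compact set $K\subset\R^{d}$ for all $\eta$ sufficiently small: each is shown to be $O(\eta)$-close to the bounded trajectory of the $E$-flow, which keeps it inside a slightly enlarged compact set, where the estimates are run. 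On $K$ we use uniform bounds on $E$ and its derivatives (here $E\in C^{4}$ suffices), and on $K$ the flow of $-\nabla E^{\eta}$ is well defined up to time $T$ for small $\eta$.

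\emph{One–step comparison.} The forward update map is $\Phi_{\eta}^{\mathrm{FE}}(x)=x-\eta\nabla E(x)$. For \eqref{eq:backward}, the first–order optimality condition reads $x_{k+1}=x_{k}-\eta\nabla E(x_{k+1})$; solving this fixed–point equation for $x_{k+1}$ as a function of $x_{k}$ (valid on $K$ by the implicit function theorem once $\eta\sup_{K}\norm{\nabla^{2}E}<1$) and Taylor expanding gives
\[
\Phi_{\eta}^{\mathrm{BE}}(x)=x-\eta\nabla E(x)+\eta^{2}\,\nabla^{2}E(x)\nabla E(x)+O(\eta^{3}),
\]
the remainder uniform on $K$. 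On the other side, the time-$\eta$ map $\varphi_{\eta}$ of $\dot x=-\nabla E^{\eta}(x)$ satisfies $\varphi_{\eta}(x)=x-\eta\nabla E^{\eta}(x)+\tfrac{\eta^{2}}{2}\nabla^{2}E^{\eta}(x)\nabla E^{\eta}(x)+O(\eta^{3})$. Writing $E^{\eta}=E+\epsilon\tfrac{\eta}{4}\norm{\nabla E}^{2}$ with $\epsilon=+1$ (forward) or $\epsilon=-1$ (backward), and using $\nabla\norm{\nabla E}^{2}=2\nabla^{2}E\,\nabla E$, one gets $\nabla E^{\eta}=\nabla E+\epsilon\tfrac{\eta}{2}\nabla^{2}E\,\nabla E$, hence
\[
\varphi_{\eta}(x)=x-\eta\nabla E(x)+(1-\epsilon)\tfrac{\eta^{2}}{2}\nabla^{2}E(x)\nabla E(x)+O(\eta^{3}).
\]
For $\epsilon=+1$ the $\eta^{2}$ term vanishes and $\varphi_{\eta}=\Phi_{\eta}^{\mathrm{FE}}+O(\eta^{3})$; for $\epsilon=-1$ it equals $\eta^{2}\nabla^{2}E\nabla E$ and $\varphi_{\eta}=\Phi_{\eta}^{\mathrm{BE}}+O(\eta^{3})$. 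This coefficient matching — forcing the $\eta^{2}$ correction to the flow to absorb the $\eta^{2}$ term of the discrete map — is exactly what pins down the sign $\pm\eta/4$ and is the only real ``content'' of the argument.

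\emph{Global error.} Let $\Phi_{\eta}$ denote the relevant discrete map, $\tilde x_{k}:=x^{\eta}_{k\eta}$, and $e_{k}:=x_{k}^{GD,\eta}-\tilde x_{k}$, so $e_{0}=0$. Then
\[
e_{k+1}=\Phi_{\eta}(x_{k}^{GD,\eta})-\varphi_{\eta}(\tilde x_{k})=\big[\Phi_{\eta}(x_{k}^{GD,\eta})-\varphi_{\eta}(x_{k}^{GD,\eta})\big]+\big[\varphi_{\eta}(x_{k}^{GD,\eta})-\varphi_{\eta}(\tilde x_{k})\big].
\]
The first bracket is $O(\eta^{3})$ uniformly, by the one–step comparison and $x_{k}^{GD,\eta}\in K$. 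For the second, $\varphi_{\eta}$ is the time-$\eta$ map of a vector field with Lipschitz constant $L:=\sup_{K}\norm{\nabla^{2}E^{\eta}}=O(1)$, so $\norm{\varphi_{\eta}(a)-\varphi_{\eta}(b)}\le e^{L\eta}\norm{a-b}$. Hence $\norm{e_{k+1}}\le e^{L\eta}\norm{e_{k}}+C\eta^{3}$, and iterating,
\[
\norm{e_{k}}\le C\eta^{3}\sum_{j=0}^{k-1}e^{Lj\eta}\le C\eta^{3}\,\frac{e^{LT}}{L\eta}=O_{T}(\eta^{2}),\qquad k\eta\le T.
\]
This is the bound at the mesh points $t=k\eta$; extension to all $t\in[0,T]$ is routine from the uniform Lipschitz regularity of the modified flow, exactly as in \cite{smith2021on, barrett2021implicit}.

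\emph{Main obstacle.} There is no deep obstruction — this is a textbook modified–equation argument — but the two points that genuinely need care are the perturbative inversion of the implicit backward–Euler relation (requiring $\eta$ small relative to $\sup_{K}\norm{\nabla^{2}E}$) and, more importantly, the a priori confinement of all trajectories to a common compact set $K$ uniformly in small $\eta$, which must be obtained by a continuation argument bootstrapped off the closeness estimate itself rather than assumed. The remainder is bookkeeping in the Taylor expansions of Step 1.
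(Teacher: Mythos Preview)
Your proposal is correct and follows essentially the same backward--error--analysis route as the paper (and the cited references \cite{smith2021on,barrett2021implicit}): match the one--step discrete map against the time-$\eta$ flow of the modified vector field to $O(\eta^{3})$, then promote to a global $O_{T}(\eta^{2})$ bound by discrete Gr\"onwall. The paper carries this out in the more general Riemannian setting (Proposition~\ref{prop:implicit_riemann_cov}), where the backward-Euler one-step expansion is obtained via the optimality condition and parallel transport rather than your implicit-function-theorem inversion, but in the Euclidean case these collapse to exactly your computation; the paper also notes separately that Theorem~\ref{thm:GD} can be recovered as the pure-potential special case of the main JKO result, but that is an alternative derivation rather than the intended direct proof.
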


\begin{remark}
    The articles \cite{smith2021on, barrett2021implicit} focus on the case of gradient descent (i.e. the forward Euler scheme) but their proofs carry over almost verbatim to analyzing the backward Euler scheme as well. 
\end{remark}

\cite{barrett2021implicit, smith2021on} compute this asymptotic bias of gradient descent relative to gradient flow using backward error analysis (BEA) \cite{Reich1999, CHansen2011}. Their result can be recovered from our Theorem \ref{thm:main} by considering the case when 
\[
J(\rho) = \int_M E(x) \rho(x)
\]
is a pure potential. Indeed, as in \eqref{eq:potential-ib}, the Wasserstein gradient flow on $J^\eta$ is
\[
\partial_t \rho_t  = \mathrm{div}\lr{\rho_t \nabla \frac{\delta J^{\eta}}{\delta \rho}(\rho_t)} = -\mathrm{div}\lr{\rho_t (-\nabla E +\frac{\eta}{4} \nabla \left\lVert \nabla E\right\rVert_{2}^{2}) ,}
\]
which is the weak form of the particle dynamics \eqref{eq:GD_ODE}
\[
\dot{x}^{\eta}_{t} = -\nabla E(x^{\eta}_{t}) +\frac{\eta}{4} \nabla \left\lVert \nabla E(x^{\eta}_{t}) \right\rVert_{2}^{2}.
\]
As we shall see next, Theorem \ref{thm:main} allows us not only to recover the results of \cite{smith2021on, barrett2021implicit} but also generalize them to gradient flows on any Riemannian manifold.

\subsection{Implicit Bias of Riemannian Gradient Descent}\label{sec:RGD_ImplicitBias}

Fix a smooth, complete Riemannian manifold $(M,g)$ of dimension $d<\infty$ and a smooth energy $E:M\gives \R$. In this section we present a novel expression for the implicit bias of both the forward and backward Euler discretizations 
\[
x_{k+1} = \begin{cases}
    \exp_{x_k}\lr{-\eta \nabla_g E(x_k)},&\quad\text{explicit Euler}\\
    \argmin_{x\in M} \set{E(x) + \frac{1}{2\eta}d_g(x,x_k)^2},&\quad \text{implicit Euler}
\end{cases},
\]
of the continuous time gradient flow $ \dot{x}_t = - \nabla_g E(x_t).$ Specifically, we will give an independent derivation of the following result 
\begin{prop}[Riemannian Gradient Descent Bias]\label{prop:implicit_riemann_cov}
    Let $(M, g)$ be a smooth, complete Riemannian manifold, and let $E:M\gives \R$ be in $C^3(M)$. For $\eta>0$ define $E^\eta:M\gives \R$ by
    \[
    E^{\eta}(x) = \begin{cases}
        E(x) + \frac{\eta}{4}\norm{\nabla_g E(x)}^2,&\quad \text{forward Euler}\\
        E(x) - \frac{\eta}{4}\norm{\nabla_g E(x)}^2,&\quad \text{backward Euler}
    \end{cases}.
    \]
    Write $x_t^\eta$ for the gradient flow on $E^\eta$ and $x_k^{GD,\eta}$ for either the forward or backward Euler discretization of $x_t^\eta$ with $x_0^\eta = x_0^{GD, \eta}$. Then for every $T>0$ and all $\eta$ we have
    \[
        \sup_{t\in [0,T]}d_g(x_{t}^\eta, x_{\lfloor t/\eta\rfloor}^{GD, \eta}) = O_T(\eta^2).
    \]
\end{prop}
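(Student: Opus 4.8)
The plan is to perform \emph{backward error analysis} directly on $(M,g)$: expand the one-step maps of the two discretizations, show that each agrees to $O(\eta^3)$ with the time-$\eta$ flow of $-\nabla_g E^\eta$, and then propagate this local error by a discrete Grönwall estimate over a compact neighborhood of the trajectory. The recurring simplification is to compute everything in \emph{Riemannian normal coordinates} centered at the current base point $x$: there $g_{ij}(x)=\delta_{ij}$, $\partial_\ell g_{ij}(x)=0$, and $\exp_x(v)$ has coordinates exactly $v$, so that all curvature contributions are cubic in the displacement. When $M=\R^d$ these are global Cartesian coordinates and all such corrections vanish identically, recovering Theorem~\ref{thm:GD}.

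\emph{Step 1 (the modified flow).} Fix $x$ and let $y(t)$ solve $\dot y=-\nabla_g E^\eta(y)$, $y(0)=x$. In normal coordinates at $x$ one has $\dot y^i(0)=-\partial_i E^\eta$ and, using $\partial_\ell g_{ij}(x)=0$, $\ddot y^i(0)=\partial_i\partial_j E^\eta\,\partial_j E^\eta=\tfrac12\big[\nabla_g\norm{\nabla_g E^\eta}^2\big]^i$, all evaluated at $x$. Taylor expanding and substituting $E^\eta=E\pm\tfrac\eta4\norm{\nabla_g E}^2$ gives, in these coordinates,
\[
y(\eta)^i=-\eta\,\partial_i E+(1\mp1)\tfrac{\eta^2}{4}\big[\nabla_g\norm{\nabla_g E}^2\big]^i+O(\eta^3),
\]
i.e. $-\eta\nabla_g E(x)+O(\eta^3)$ in the forward-Euler case and $-\eta\nabla_g E(x)+\tfrac{\eta^2}{2}\big[\nabla_g\norm{\nabla_g E}^2\big]^i+O(\eta^3)$ in the backward-Euler case.

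\emph{Step 2 (the discretizations).} For forward Euler, $\exp_x(-\eta\nabla_g E(x))$ has normal coordinates $-\eta\nabla_g E(x)$ exactly, matching Step~1 to $O(\eta^3)$. For backward Euler, the first-order optimality condition for $x_{+}=\argmin_{z}\{E(z)+\tfrac1{2\eta}d_g(z,x)^2\}$, together with the identity $\nabla_z\big(\tfrac12 d_g(z,x)^2\big)=-\exp_z^{-1}(x)$, reads $\exp_{x_+}^{-1}(x)=\eta\nabla_g E(x_+)$, i.e. $x=\exp_{x_+}\!\big(\eta\nabla_g E(x_+)\big)$; local existence and uniqueness of $x_+$ near $x$ for $\eta$ small follow from the implicit function theorem applied to this relation. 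Writing it in normal coordinates at $x$ and expanding $\exp_z(v)^i=z^i+v^i+O\big(|z|\,|v|^2+|v|^3\big)$ with $v=\eta\nabla_g E(z)=O(\eta)$ and $z=x_+=O(\eta)$, all curvature and quadratic terms are $O(\eta^3)$, so $0=z^i+\eta\,\partial_i E(z)+O(\eta^3)$; solving to second order yields $z^i=-\eta\,\partial_i E+\tfrac{\eta^2}{2}\big[\nabla_g\norm{\nabla_g E}^2\big]^i+O(\eta^3)$, again matching Step~1. Hence in both cases the one-step error is $d_g\big(x_+,\Phi_\eta(x)\big)=O(\eta^3)$, where $\Phi_\eta$ denotes the time-$\eta$ flow of $-\nabla_g E^\eta$, uniformly for $x$ in a compact set.

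\emph{Step 3 (propagation).} Since $E\in C^3$, on any compact $K$ the field $\nabla_g E^\eta$ is $C^1$ with derivative bounds uniform over bounded ranges of $\eta$, so $\Phi_\eta$ is $(1+L\eta)$-Lipschitz in its initial point for some $L=L(K)$. Taking $K$ a compact neighborhood of $\{x_t^\eta:t\in[0,T]\}$, a continuity induction shows $x_k^{GD,\eta}\in K$ for $\eta$ small, and $e_k:=d_g(x_k^{GD,\eta},x_{k\eta}^\eta)$ obeys $e_{k+1}\le(1+L\eta)e_k+C\eta^3$, $e_0=0$. Discrete Grönwall then gives $e_k\le\tfrac{C}{L}(e^{Lk\eta}-1)\,\eta^2\le C_T\eta^2$ for $k\eta\le T$, which is the assertion.

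\emph{Main obstacle.} The delicate point is Step~2 in the backward-Euler case: deriving the Riemannian proximal optimality condition $x=\exp_{x_+}(\eta\nabla_g E(x_+))$, establishing the solvability that makes $x_+$ well defined, and inverting the relation to second order while confirming that every $\exp$-map curvature correction sits at $O(\eta^3)$. One must also ensure the modified flow $\dot y=-\nabla_g E^\eta(y)$ exists on $[0,T]$ and stays in $K$; both of these, as in the Euclidean Theorem~\ref{thm:GD}, require $\eta$ sufficiently small. Once the one-step maps are matched to cubic order, Step~3 is routine bookkeeping.
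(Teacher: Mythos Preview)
Your proposal is correct and follows the same high-level architecture as the paper's proof: establish a one-step local error of $O(\eta^3)$ between the discretization and the time-$\eta$ flow of $-\nabla_g E^\eta$, then propagate via a discrete Gr\"onwall argument. The technical execution, however, is organized differently. The paper treats the two schemes asymmetrically: for backward Euler it rewrites the proximal optimality condition as $x_{k+1}=\exp_{x_k}\bigl(-\eta\,\|_{x_{k+1}\to x_k}\nabla_g E(x_{k+1})\bigr)$ and expands the parallel-transported gradient via covariant differentiation along the geodesic; for forward Euler it expands $\exp_{x_k}(-\eta\nabla_g E(x_k))$ directly using the Taylor series of the exponential map with explicit Christoffel symbols, then matches the $\eta^2$ coefficient against an ansatz $\dot x=-\nabla_g E+\eta R$. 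Your route is more uniform: by working in normal coordinates at the current base point you force $\Gamma^i_{jk}(x)=0$, so all curvature contributions are automatically cubic in the displacement and the computation reduces to the Euclidean one modulo $O(\eta^3)$. This buys you a single, symmetric calculation covering both schemes, and makes transparent why the Riemannian correction coincides with the flat one of \cite{smith2021on,barrett2021implicit} at this order. The paper's intrinsic formulation, on the other hand, yields the covariant identity $\nabla_g E^\eta-\nabla_g E=\mp\tfrac{\eta}{2}\nabla_{\nabla_g E}\nabla_g E$ and the Euler--Lagrange interpretation \eqref{eq:EL_Riemann} without passing through coordinates.
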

We provide a short proof of Proposition \ref{prop:implicit_riemann_cov} at the end of this section. While formally our arguments can be extended to infinite dimensional Riemannian manifolds such at $P_{ac}(M)$ equipped with the Wasserstein-2 metric, precise derivations in infinite dimension require more care. Before turning to the proof, we make a few remarks:

\textbf{(Novelty)} To our knowledge Proposition \ref{prop:implicit_riemann_cov} is the first generalization of \cite{smith2021on,barrett2021implicit} to general Riemannian manifolds.

\textbf{(Implicit Bias as Geodesic Acceleration)} Note that
    \[
        \nabla_g E^\eta - \nabla_g E =\pm \frac{\eta}{2}\mathrm{Hess}_g E [\nabla_g E],
    \]
    where $\mathrm{Hess}_g E$ is the Riemannian Hessian. This expression reflects that at order $\eta$ the implicit bias of forward and backward Euler is to accelerate and decelerate, respectively, gradient flow on $E$ in directions where the gradient $\nabla_g E$ is rapidly changing, as measured by the Hessian of the objective. Equivalently, taking the view of the base-flow $\tilde{x}$
    \begin{align}\label{eq:EL_Riemann}
        \nabla_g E^\eta - \nabla_g E =\pm \frac{\eta}{4} \lr{\hat{\mathcal{E}} \big\lVert\dot{\tilde{x}}\big\rVert_{g}^2}\bigg|_{\dot{\tilde{x}} = \dot{x}_t} = \pm \frac{\eta}{2}\nabla_{\nabla_g E} \nabla_g E,
    \end{align}
where $\hat{\mathcal{E}} = g^{-1}(\partial_{\tilde{x}} - D_{t} \partial_{\dot{\tilde{x}}})$ is the Riemannian Euler-Lagrange operator. Applying the operator $\hat{\mathcal{E}}$ to the kinetic energy $\frac{1}{2}||\dot{\tilde{x}}||^2$ returns the geodesic acceleration $\nabla_{\dot{\tilde{x}}} \dot{\tilde{x}}$. Thus, the implicit bias of  forward and backward Euler discretization for Riemannian gradient flow is to accelerate or slow down in directions of large geodesic acceleration.

The form of the implicit regularization in the Riemannian case, when written in local coordinates, offers an interesting parallel to the Euclidean case \cite{smith2021on}
\begin{align}
    &\frac{\eta}{4}g^{-1}(\partial_{\tilde{x}} - D_{t} \partial_{\dot{\tilde{x}}}) \left\lVert \dot{\tilde{x}} \right\rVert_{g}^{2}, \quad
    &\text{(Riemannian Implicit Bias)} \label{riem:IB_EL}\\
    -&\frac{\eta}{4} D_{t} \partial_{\dot{\tilde{x}}} \left\lVert \dot{\tilde{x}}  \right\rVert_{2}^{2} = -\frac{\eta}{4} \nabla \left\lVert \dot{\tilde{x}}  \right\rVert_{2}^{2}, \quad
    &\text{(Euclidean Implicit Bias)}\label{Euclidean:IB_EL}
\end{align}
In the Euclidean case, \eqref{Euclidean:IB_EL} adds a Ridge-like Euclidean kinetic penalty. Meanwhile, the presence of non-trivial curvature in the Riemannian case \eqref{riem:IB_EL} requires the variational Euler-Lagrange operator $(\partial_{\tilde{x}} - D_{t} \partial_{\dot{\tilde{x}}})$ applied to a Riemannian kinetic energy in the metric. The action of this operator can be viewed as a steepest descent on the action integral \cite{gill2023, calcvar} of the Riemannian kinetic energy.

Collecting the zero order potential $E$, we see that the dynamics of the Riemannian case are governed by a \emph{Step-Dependent Lagrangian},
\begin{align*}
\mathcal{L}^{\eta}(\tilde{x}, \dot{\tilde{x}}) &=  \frac{\eta}{4} g_{ij} \Dot{\tilde{x}}^{i} \Dot{\tilde{x}}^{j}- E(\tilde{x}):= \textrm{Kinetic Energy }(T) - {\textrm{Potential Energy }(E) 
}. 
\end{align*}
The modified flow is thus the steepest descent on a Lagrangian action, where the components satisfy
\[
\Dot{\tilde{x}}^{k} = \,g^{km} \,\left(
\frac{\partial}{\partial \tilde{x}^{m}}
-\frac{d}{dt}\,\frac{\partial}{\partial \dot{\tilde{x}}^{m}}
\right)\,
\mathcal{L}^{\eta}(\tilde{x}, \dot{\tilde{x}})\, \bigg|_{\dot{\tilde{x}} = \dot{x}_{t}}
+\;O(\eta^{2})
\]
This reveals the curvature correction required by discretization manifests physically as \emph{inertia}. The modified objective retains a kinetic energy, and the optimizer behaves as if the point exhibited a ``mass'' proportional to the step-size $\eta$.

%Notably, the form \eqref{riem:IB_EL} can be conveniently evaluated given the metric $g$ without covariant differentiation or use of Christoffel symbols.

%Setting \eqref{eq:EL_Riemann} (\eqref{riem:IB_EL} and \eqref{Euclidean:IB_EL}) to zero produces the defining equation for geodesics. Instead restricting this condition to the base dynamics $\dot{\theta}=\dot{x}_t$ with a weight $\eta$ has the interpretation of deforming the Lagrangian of the dissipative gradient flow dynamics on $E$ by adding or subtracting a small amount of kinetic energy.

\textbf{(Backward Euler Slows Near Sharp Minima)} Suppose the potential $E$ possesses a smooth manifold $\mathcal{M}\subset M$ of minima and at any $x\in \mathcal{M}$ the Hessian $\mathrm{Hess}_g E(x)$ has kernel equal to the tangent space $T_{x}\mathcal{M}$ and is strictly positive on the normal space $N_{x}\mathcal{M}$.  Such potentials  commonly arise as loss functions in overparameterized machine learning models \cite{liu2022loss}. In a tubular neighborhood of $\mathcal M$ let us introduce coordinates $s$ along $\mathcal M$ and $n$ normal to $\mathcal M$. We then have
    \begin{equation}\label{eq:M-exp}
        E(s,n) = E_0 + \frac{1}{2}\, n^\top H(s)\, n + O(\norm{n}^3),
    \qquad H(s) \succ 0,    
    \end{equation}
    where $H(s)$ denotes the positive definite restriction of $\mathrm{Hess}_g E$ to the normal space to $\mathcal M$ at $s$. A short computation gives 
    \[
        E^\eta(s,n) = E_0 + \frac{1}{2} n^\top H_{\mathrm{eff}}(s) n + O(\norm{n}^3),
    \]
    where
    \begin{equation}\label{eq:Hess-eff-def}
        H_{\mathrm{eff}}(s) :=(I\pm 2\eta H(s))H(s)
    \end{equation}
    is an \textit{effective} Hessian and $+,-$ is chosen depending on whether one is analyzing forward or backward Euler discretization, respectively. Hence, forward and backward Euler schemes accelerate or slow down  dynamics near high curvature minima respectively.

\begin{proof}[Proof of Proposition \ref{prop:implicit_riemann_cov}]
    We begin by analyzing the backward Euler discretization, and then offer a somewhat different derivation in the case of the  simpler forward discretization. Since for any fixed $x_k$ 
    \[
        \nabla_g \set{\frac{1}{2} d_g(x,x_k)^2 }= - \exp_x^{-1}(x_k), 
    \]
    the optimality condition for the iteration
    \[
         x_{k+1}=\argmin_{x\in M} \set{E(x) + \frac{1}{2\eta}d_g(x,x_k)^2}
    \]
    reads
    \[
         \exp_{x_{k+1}}^{-1}(x_k) = \eta \nabla_g E(x_{k+1}). 
    \]
    By taking norms of both sides we find
    \[
    d_g(x_k, x_{k+1}) = O(\eta).
    \]
    Denoting by $||_{x_{k+1}\gives x_k}$ the parallel transport operator from $T_{x_{k+1}}M$ to $T_{x_k}M$ and using that 
    \[
    \exp_{x_k}^{-1}(x_{k+1}) = -||_{x_{k+1}\gives x_k} \exp_{x_{k+1}}^{-1}(x_k) 
    \]
    gives
    \begin{equation}\label{eq:RGF-exact-IB}
        x_{k+1}=\exp_{x_k}\lr{ -\eta ||_{x_{k+1}\gives x_k} \nabla_g E(x_{k+1})}.
    \end{equation}
    The operator $||_{x_{k+1}\gives x_k} \nabla_g E(x_{k+1})$ is precisely the pull-back along the geodesic from $x_{k}$ to $x_{k+1}$ of the vector field $\nabla_g E$. Using standard formulas for covariant differentiation of a vector field along a curve (see Theorem 4.34 in \cite{lee2018introduction}) we have 
    \[
    ||_{x_{k+1}\gives x_k} \nabla_g E(x_{k+1}) = \nabla_g E(x_k) - \frac{\eta}{2}\nabla_{\nabla_g E} \nabla_g E(x_k) + O(\eta^2).
    \]
    Substituting this into \eqref{eq:RGF-exact-IB} gives
    \[
    x_{k+1} = \exp_{x_k} \lr{-\eta E^\eta(x_k) + O(\eta^3)}.
    \]
    Appealing to standard local-to-global estimates \cite[Theorem 3.2 II.3]{hairer2003geometric} for the difference between the gradient flow on $E^\eta$ and a one step method that matches the vector field $-\eta \nabla_g E(x_k)$ to order $\eta^3$ completes the proof in the case of the backward Euler discretization. 
    
    For the forward Euler case, we give a derivation mirroring \cite{barrett2021implicit, smith2021on} but applied to Riemannian gradient flow $\dot{x}_{t} = - \nabla_{g} E(x_{t})$. Namely, we directly solve for the correction $R(\cdot)$ required for the the second-order flow $\dot{x}_{t}^{\eta} = - \nabla_{g} E(x_{t}^{\eta}) + \eta R(x_{t}^{\eta})$, to integrate to the exponential map up to errors of size $O(\eta^{3})$. We have
    \begin{align*}
    \exp_{x_{k}}(-\eta \nabla_{g} E(x_{k})) &= x_{k} + \eta \,\dot{x}_{k}^{\eta} + \frac{\eta^{2}}{2} \ddot{x}_{k}^{\eta} + o(\eta^{2}) \\
    &= x_{k} - \eta \nabla_{g} E(x_{k}) + \eta^{2} \left( R(x_{k}) + \frac{1}{2} \ddot{x}_{k} \right) \\
    &=x_{k} - \eta \nabla_{g} E(x_{k}) + \eta^{2} \left( R(x_{k}) - \frac{1}{2} \nabla \nabla_{g} \,E \, [ \dot{x}_{k} ] \right).
    \end{align*}
    On the other hand, one may appeal directly to the form of the Taylor expansion of the exponential map \cite{Monera2013} to find
    \begin{align*}
    &\exp_{p}(-t v)^{i} = \gamma^{i}(t) = p^{i} + v^{i} t + \frac{t^{2}}{2} a^{i} + \frac{t^{3}}{3} b^{i} + \cdots 
    \end{align*}
    Where the geodesic $\gamma$ is given by varying $t$ in  $\exp_{x_{k}}(-t \nabla_{g} E(x_{k}))$ and satisfies $p^{i} = x_{k}$ and $v^{i} = -\nabla_{g} E(x_{k})$. Hence,
    \begin{align*}
    \exp_{x_{k}}(-\eta \nabla_{g}E(x_{k}))^{i} &= x_{k}^{i} - \eta\nabla_{g} E(x_{k}) - \frac{\eta^{2}}{2} \,\Gamma_{ij}^{s} \nabla_{g} E(x_{k})^{i} \nabla_{g} E(x_{k}) ^{j} \\
    &+ \frac{\eta^{3}}{6} \, \left(
    -\frac{\partial}{\partial x_{\gamma}}\Gamma^{s}_{\alpha\beta}(\theta)
    + 2\cdot \Gamma^{s}_{\alpha\delta}(\theta) \Gamma^{\delta}_{
\beta\gamma
    } (\theta)\right)\\
    &\qquad \qquad \times \nabla_{g} E(x_{k})^{\alpha} \nabla_{g} E(x_{k})^{\beta} \nabla_{g} E(x_{k})^{\gamma}
    \end{align*}
    plus higher order terms in $\eta$, where $\Gamma^{\lambda}_{\mu\nu} = (1/2) g^{\lambda\alpha}
    (\partial_{\mu} g_{\alpha\nu} + \partial_{\nu} g_{\alpha \mu} - \partial_{\alpha} g_{\mu})$ denote the Christoffel symbols of the second kind. Then, one simply cancels terms at lower order and matches at $\eta^{2}$, yielding the $O_{T}(\eta^{2})$ correction
    \begin{align*}
        &R(x_{k})^{\alpha} =  -\frac{1}{2} \left( \,\Gamma_{ij}^{\alpha} \dot{x}_{k}^{i} \dot{x}_{k}^{j} +  \nabla \dot{x}_{k}^{\alpha, \beta} \, [ (\dot{x}_{k})_{\beta} ]  \right) 
    \end{align*}
    This may alternatively be written as
    \begin{align}
        R(x_{k})= \frac{1}{4}g^{-1}(\partial_{x_{k}} - D_{t} \partial_{\dot{x}_{k}}) \left\langle \dot{x}_{k}, \dot{x}_{k} \right\rangle_{g},
    \end{align}
    coinciding with the backward-Euler derivation in covariant form, with an opposite sign (see \eqref{eq:EL_Riemann}).
\end{proof}

\subsection{Implicit Bias of JKO on the Free-Energy Functional and Langevin Dynamics}\label{sec:free_energy_Langevin_IB}

An important example of an  objective for Wasserstein gradient flow is the  \emph{free-energy functional} at inverse temperature $\beta$
\begin{align}\label{eq:Langevin_E}
    &J(\rho) = \int E(x)\rho(dx) + \beta^{-1} \int  \rho(dx)\log \rho(x), 
\end{align}
which is the weighted sum of a linear potential energy and the differential entropy. Gradient flow on $J$ is the 
Fokker-Planck equation 
\[
\partial_{t} \rho = \nabla \cdot (\rho \nabla E) + \beta^{-1} \Delta \rho,
\]
which describes the evolution of the law of the Langevin SDE
\begin{align}
    dX_{t} = \nabla E(X_{t}) dt + \sqrt{2\beta^{-1}} dW_{t}.
\end{align}
Since 
\begin{align}\label{eq:WGF_Langevin}
    \frac{\delta J}{\delta \rho} = E + \beta^{-1} \log \rho, \quad \nabla \frac{\delta J}{\delta \rho} = \nabla E + \beta^{-1} \nabla \log \rho,
\end{align}
we find from Theorem \ref{thm:main} that the implicit bias of the JKO scheme applied to the free energy \eqref{eq:Langevin_E} is
\begin{align*}
    H^{\eta}(\rho)&=  \frac{\eta}{4} \int \left\lVert \nabla E + \beta^{-1} \nabla \log \rho \right\rVert_{2}^{2} \rho(dx) \\
    &= \frac{\eta}{4} \int \lVert \nabla E \rVert_{2}^{2} \,\rho(dx)+ \frac{\eta}{2 \beta} \int \langle \nabla E , \nabla \log \rho \rangle \,\rho(dx)+\frac{\eta}{4 \beta^{2}}\int \lVert \nabla \log \rho \rVert_{2}^{2} \,\rho(dx) 
\end{align*}
Remarkably, the Riemannian gradient of $H^\eta$, which is the implicit bias on the level of velocity fields, can be rewritten as a symmetric action of two differential operators, $\nabla \Delta (\cdot)$ and $\nabla \left\lVert \nabla (\cdot) \right\rVert_{2}^{2}$, acting on the energy  $E$ and entropy $\beta^{-1} \log \rho$. As a result the JKO bias is analogous to adding an $\eta$-weighted \textit{Quantum drift-diffusion}, which is a non-local penalty on the curvature of the density (cf Remark~\ref{rem:JKO_Quantum} and \cite{Gianazza2008}). 

\begin{prop}[Implicit Bias of the JKO Scheme on the Free-Energy (Langevin).]
    Wasserstein-gradient flow on the correction of JKO for the free-energy functional \eqref{eq:Langevin_E} is given by the dynamics
    \begin{align}\label{eq:WGF_H_Langevin}
    &\nabla \frac{\delta  H^{\eta}}{\delta \rho}(\rho) = \frac{\eta}{4} \left( \nabla \lVert \nabla E \rVert_{2}^{2} -   \frac{1}{\beta^{2}} \nabla \left\lVert \nabla \log \rho \right\rVert_{2}^{2} \right)
     -\frac{\eta}{2\beta} \left( 
    \nabla \Delta ( E + \frac{1}{\beta} \log \rho)
    \right)
\end{align}
\end{prop}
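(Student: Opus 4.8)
\emph{Proof proposal.} The plan is to compute $\nabla\frac{\delta H^{\eta}}{\delta\rho}(\rho)$ termwise, using the three-way split of $H^{\eta}$ already recorded above,
\[
H^{\eta}(\rho)=\frac{\eta}{4}\int \lVert\nabla E\rVert_{2}^{2}\,\rho(dx)+\frac{\eta}{2\beta}\int\langle\nabla E,\nabla\log\rho\rangle\,\rho(dx)+\frac{\eta}{4\beta^{2}}\int\lVert\nabla\log\rho\rVert_{2}^{2}\,\rho(dx),
\]
and recombining at the end. The first summand is a pure potential energy with potential $\frac{\eta}{4}\lVert\nabla E\rVert_{2}^{2}$, so its first variation is that potential and its Wasserstein gradient is $\frac{\eta}{4}\nabla\lVert\nabla E\rVert_{2}^{2}$. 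For the cross term I would use $\langle\nabla E,\nabla\log\rho\rangle\,\rho=\langle\nabla E,\nabla\rho\rangle$ and integrate by parts (legitimate under the decay hypothesis (A2)) to rewrite it as $-\frac{\eta}{2\beta}\int\Delta E\,\rho(dx)$, again a potential energy, now with potential $-\frac{\eta}{2\beta}\Delta E$ and hence Wasserstein gradient $-\frac{\eta}{2\beta}\nabla\Delta E$.

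The only nontrivial piece is the third summand, which is the multiple $\frac{\eta}{4\beta^{2}}$ of the Fisher information $I(\rho):=\int\lVert\nabla\log\rho\rVert_{2}^{2}\,\rho=\int|\nabla\rho|^{2}/\rho$. I would compute its first variation directly: perturbing $\rho\mapsto\rho+\varepsilon\phi$, differentiating at $\varepsilon=0$, and integrating by parts once on the $\nabla\rho\cdot\nabla\phi$ term gives $\frac{\delta I}{\delta\rho}=\frac{|\nabla\rho|^{2}}{\rho^{2}}-2\frac{\Delta\rho}{\rho}$. Then, invoking the pointwise identity $\Delta\log\rho=\frac{\Delta\rho}{\rho}-\frac{|\nabla\rho|^{2}}{\rho^{2}}$, i.e. $\frac{\Delta\rho}{\rho}=\Delta\log\rho+\lVert\nabla\log\rho\rVert_{2}^{2}$, this collapses to $\frac{\delta I}{\delta\rho}=-2\Delta\log\rho-\lVert\nabla\log\rho\rVert_{2}^{2}$, so $\nabla\frac{\delta I}{\delta\rho}=-2\nabla\Delta\log\rho-\nabla\lVert\nabla\log\rho\rVert_{2}^{2}$. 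The additive constants dropped in passing between variations are harmless, since only the gradient of the first variation enters the Wasserstein gradient flow.

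Summing the three Wasserstein gradients, the ``first order'' pieces give $\frac{\eta}{4}\bigl(\nabla\lVert\nabla E\rVert_{2}^{2}-\frac{1}{\beta^{2}}\nabla\lVert\nabla\log\rho\rVert_{2}^{2}\bigr)$, while the two Laplacian pieces $-\frac{\eta}{2\beta}\nabla\Delta E$ and $-\frac{\eta}{2\beta^{2}}\nabla\Delta\log\rho$ collect into $-\frac{\eta}{2\beta}\nabla\Delta\bigl(E+\beta^{-1}\log\rho\bigr)$, which is exactly \eqref{eq:WGF_H_Langevin}. The main obstacle is the Fisher information variation: one must carry out the integration by parts carefully (using the decay conditions of (A2) to kill boundary terms) and recognize the $\log$-identity that converts the $\Delta\rho/\rho$ form into the $\Delta\log\rho$ form appearing in the statement; the rest is bookkeeping with potential energies.
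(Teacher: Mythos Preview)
Your proposal is correct and follows essentially the same approach as the paper: the same three-term split, identical treatment of the potential and cross terms, and the same final form $\frac{\delta I}{\delta\rho}=-2\Delta\log\rho-\lVert\nabla\log\rho\rVert_{2}^{2}$ for the Fisher piece. The only cosmetic difference is that the paper first states the variation in the Bohm form $-4\,\Delta\sqrt{\rho}/\sqrt{\rho}$ (citing Gianazza et al.) before converting it to the $\log$ form via the same identity you use, whereas you compute the $\log$ form directly from $I(\rho)=\int|\nabla\rho|^{2}/\rho$; both routes are equivalent and the paper's appendix in fact carries out a direct perturbation computation much like yours.
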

\begin{proof}
$H^{\eta}(\rho)$ is a weighted combination of the Dirchlet-energy $D_{\rho}[E]$, Fisher information functional $I[\rho]$, and a cross-term measuring the alignment of $\nabla \rho$ with the potential gradient $\nabla E$,
\begin{align}
    H^{\eta}(\rho)= \frac{\eta}{4}D_{\rho}[E] + \frac{\eta}{2 \beta} \int \langle \nabla E , \nabla \rho\rangle\, dx + \frac{\eta}{4 \beta^{2}} I[\rho].
\end{align}
The first term $D_{\rho}[E]$ is a simple linear potential, yielding the variation $\frac{\delta}{\delta \rho} D_{\rho}[E] = \lVert \nabla E \rVert_{2}^{2}$ -- recalling the deterministic regularization of gradient descent. Continuing for the entropic terms, one supposes compact variations of the form $\rho_{(s)} = \rho + s \dot\phi$ with test function $\dot\phi$ on compact support or vanishing boundary $\partial \,\Omega$. In this case, one identifies the variation of the second term as the Laplacian of $E$, $\frac{\delta}{\delta \rho}\int \langle \nabla E , \nabla \rho\rangle = - \Delta E$. This arises immediately by taking $\frac{d}{ds} \int \langle \nabla E, \nabla \rho_{(s)} \rangle dx |_{s=0}$ and applying integration by parts to find $\int \langle \nabla E, \nabla \dot\phi \rangle dx = - \int \dot\phi \,\Delta E$. Finally, as we recall in \ref{proof:fv_fisher} following Remark~\ref{rem:Fisher_var} the first variation of the Fisher functional (Theorem 4.2 of \cite{Gianazza2008}) is given by
\begin{align}\label{eq:fish_FV}
    \frac{\delta I}{\delta \rho} [\rho] = -4\, \frac{\Delta \sqrt{ \rho}}{\sqrt{\rho}}. 
\end{align}
Hence, the implicit bias of JKO for Langevin dynamics is 
\begin{align*}
    \frac{\delta  H^{\eta}}{\delta \rho}(\rho) = \frac{\eta}{4}\lVert \nabla E \rVert_{2}^{2} - \frac{\eta}{2 \beta} \Delta E - \frac{\eta}{4 \beta^{2}}\, \frac{\Delta \sqrt{ \rho}}{\sqrt{\rho}} .
\end{align*}
With associated Wasserstein gradient
\begin{align*}
    \nabla \frac{\delta  H^{\eta}}{\delta \rho}(\rho) = \frac{\eta}{4} \nabla \lVert \nabla E \rVert_{2}^{2} - \frac{\eta}{2 \beta} \nabla \Delta E - \frac{\eta}{ \beta^{2}}\, \nabla \left( \frac{\Delta \sqrt{ \rho}}{\sqrt{\rho}} \right)
\end{align*}
Since one has that $\Delta \log u = \frac{\Delta u}{u} - \lVert \nabla \log u \rVert_{2}^{2}$
for $u = \sqrt{\rho}$, from the equality $4\frac{\Delta \sqrt{ \rho}}{\sqrt{\rho}} = 2 \Delta \log \rho + \lVert \nabla \log\rho \rVert_{2}^{2}$, one can show that
\[
\frac{\delta I}{\delta \rho} [\rho] =- 2 \Delta \log \rho - \lVert \nabla \log \rho \rVert_{2}^{2}.
\]
This yields a more familiar score-based Wasserstein gradient
\begin{align*}
    \nabla \frac{\delta  H^{\eta}}{\delta \rho}(\rho) = \frac{\eta}{4} \nabla \lVert \nabla E \rVert_{2}^{2} - \frac{\eta}{2 \beta} \nabla \Delta E - \frac{\eta}{ 2\beta^{2}}\, \nabla  \Delta \log \rho - \frac{\eta}{4\beta^{2}} \left\lVert \nabla \log \rho \right\rVert_{2}^{2}, 
\end{align*}
with two differential operators -- the Laplacian and the gradient of the squared-norm -- applied to the energy and score.
\end{proof}

Rewriting \eqref{eq:WGF_H_Langevin} in terms of the score and (observed) Fisher-information matrix $\hat I^{\rho}$, we also have
\begin{align*}
 \nabla \frac{\delta  H^{\eta}}{\delta \rho}(\rho) &= \frac{\eta}{4} \nabla \lVert \nabla E \rVert_{2}^{2} - \frac{\eta}{2 \beta} \nabla \Delta E - \frac{\eta}{ \beta^{2}}\,  \left(  \frac{1}{2} \nabla\,\tr\, [\nabla^{2} \log \rho] + \frac{1}{2} \nabla^{2} \log \rho\, [\nabla \log \rho] \right) \\
 &=\frac{\eta}{4} \nabla \lVert \nabla E \rVert_{2}^{2} - \frac{\eta}{2 \beta} \nabla \Delta E - \frac{\eta}{ \beta^{2}}\,  \left(  \frac{1}{2} \nabla\,\tr\, \hat I^{\rho} + \frac{1}{2} \hat I^{\rho}\, (\nabla \log \rho) \right).
 \end{align*}
The entropic terms are analogous to a non-local regularization on the scores $\nabla \log \rho$ with an observed Fisher-information precondition. Unfortunately, ~\eqref{eq:fish_FV} is not directly amenable to particle-discretizations and requires third-order differentiation.

\textbf{Remark (Stabilizing Quantum Drift-Diffusion)}\label{rem:JKO_Quantum} The Bohm Quantum potential $Q[\rho]=-\frac{\hbar^{2}}{2m}\frac{\Delta \sqrt{ \rho}}{\sqrt{\rho}}$ remarkably arises as a scaling of the first variation of the Fisher functional, a connection central to Quantum-Drift-Diffusion (QDD) \cite{Gianazza2008}. This offers insight into the correction term~\eqref{eq:fish_FV}: it is proportional to the Quantum potential, with a \emph{negative} sign. $Q[\rho]$ is a global, non-local penalty on curvature of the density which differentiates between classical and quantum mechanics: with a positive sign it is repulsive and prevents collapse of $\rho$ at zero-point energy, and with a negative sign it adds a ``quantum cohesion'' which resists entropic expansion. Viewed in this sense, the implicit bias of JKO introduces a global, non-local regularization of the curvature of the density with ``quantum-mechanical'' behavior.

\section{Strategy of Proof of Theorem~\ref{thm:main}}\label{sec:Theoretical_Res}

Theorem~\ref{thm:main} asserts for any fixed $T>0$ and sufficiently small $\eta>0$ that
\begin{equation}\label{eq:thm-main-goal}
\sup_{t\in[0,T]} W_2\big(\rho_t^\eta,\rho_{\lfloor t/\eta\rfloor}^{\mathrm{JKO},J,\eta}\big)
= O_T(\eta^2),    
\end{equation}
where $\rho_t^\eta$ denotes the Wasserstein gradient flow of the modified energy $J^\eta$ from \eqref{eq:JKO-bias}, and $\rho_k^{\mathrm{JKO},J,\eta}$ are the JKO iterates with step size $\eta$. The first key observation is that to show the global estimate \eqref{eq:thm-main-goal}, it is enough to construct a \emph{modified velocity field}
\begin{align}\label{eq:JKO_vf_ansatz}
v^\eta(\rho) = v(\rho) + \eta\,\bm j(\rho),
\qquad
v(\rho) := -\nabla_g \frac{\delta J}{\delta \rho}(\rho),
\end{align}
with the following two \textit{local} properties:

\begin{enumerate}[leftmargin=*]
    \item \emph{Stability of the associated flow.} Write $\Phi_t^\eta (\cdot) : P_{ac}(M) \times \mathbb{R}_{+} \to P_{ac}(M)$ for the time $t$ flow maps
    \[
    \Phi_t^\eta(\rho_0) =\rho_t^\eta
    \]
    of the continuity equation
    \[
    \partial_t \rho_t^\eta = -\nabla_g \cdot\big(\rho_t^\eta\,v^\eta(\rho_t^\eta)\big).
    \]
    We require $\Phi_t^\eta$ is Lipschitz in $W_2$ for short time $t=\eta$:
    \begin{equation}\label{eq:lip-flow}
    W_2\big(\Phi_\eta^\eta(\mu),\Phi_\eta^\eta(\nu)\big)
    \le (1 + L\eta)\,W_2(\mu,\nu),
    \end{equation}
    where $L$ is independent of $\eta$.
    
    \item \emph{One-step matching with JKO to order $\eta^2$.} For each $\rho$, let
    \begin{equation}
    \label{eq:JKO-def-outline}
    \mathcal J_\eta(\rho)
    := \argmin_{\rho'} \Big\{J(\rho') + \frac{1}{2\eta}W_2^2(\rho,\rho')\Big\}    
    \end{equation}
    be the one step JKO update starting from $\rho$. We require that that one step of JKO starting from $\rho$ coincides with the flow $\Phi_\eta^\eta(\rho)$ to order $\eta^2$:
    \begin{equation}\label{eq:onestep-defect}
    W_2\big(\Phi_\eta^\eta(\rho),\,\mathcal J_\eta(\rho)\big) \le C\,\eta^2
    \end{equation}
    for a constant $C$ independent of $\eta$ and $\rho$.
\end{enumerate}
It is not immediately clear from the proposed vector field \eqref{eq:JKO_vf_ansatz} whether the $\eta \,\bm{j}(\rho)$ component arises from a Wasserstein gradient of some other energy $H^{\eta}(\rho)$. We will show that the the modified vector field \eqref{eq:JKO_vf_ansatz} is actually the Wasserstein gradient of the squared metric slope 
\[
{\bm j}(\rho) = \nabla_g \frac{\delta \abs{\partial J}^2}{\delta \rho}(\rho),\qquad \abs{\partial J}^2 =\frac{1}{4} \int_M \norm{\nabla_g \frac{\delta J}{\delta \rho}}^2 \rho(dx).
\]
Let us first explain why \eqref{eq:lip-flow} and \eqref{eq:onestep-defect} imply \eqref{eq:thm-main-goal} via a simple discrete Grönwall argument. Set
\[
e_k := W_2\big(\rho_{k\eta}^\eta,\rho_k^{\mathrm{JKO},J,\eta}\big)
\]
where $\rho_t^\eta$ solves the continuity equation with velocity $v^\eta$, and $\rho_k^{\mathrm{JKO},J,\eta}$ are the JKO iterates. For $k\ge0$, let's define
\[
\tilde\rho_{(k+1)\eta} := \Phi_\eta^\eta(\rho_k^{\mathrm{JKO},J,\eta}).
\]
We have 
\begin{align*}
e_{k+1}
&= W_2\big(\rho_{(k+1)\eta}^\eta,\rho_{k+1}^{\mathrm{JKO},J,\eta}\big)\\
&\le W_2\big(\rho_{(k+1)\eta}^\eta,\tilde\rho_{(k+1)\eta}\big)
    + W_2\big(\tilde\rho_{(k+1)\eta},\rho_{k+1}^{\mathrm{JKO},J,\eta}\big)\\
&= W_2\big(\Phi_\eta^\eta(\rho_{k\eta}^\eta),\Phi_\eta^\eta(\rho_k^{\mathrm{JKO},J,\eta})\big)
    + W_2\big(\Phi_\eta^\eta(\rho_k^{\mathrm{JKO},J,\eta}),\mathcal J_\eta(\rho_k^{\mathrm{JKO},J,\eta})\big)\\
&\le (1 + L\eta)\,e_k + C\,\eta^2.
\end{align*}
Since $e_0 = 0$, solving this recurrence yields
\[
e_k \le C_T\,\eta^2
\qquad\text{whenever } k\eta \le T,
\]
for $C_T$ independent of $\eta$. Hence, 
\[
W_2\big(\rho_t^\eta,\rho_{\lfloor t/\eta\rfloor}^{\mathrm{JKO},J,\eta}\big)
\le C_T'\,\eta^2,
\]
as desired. Our regularity conditions on $J$ ensure that the ${\bm j} (\rho)$ we will identify to satisfy \eqref{eq:onestep-defect} will also be Lipschitz and hence \eqref{eq:lip-flow} holds (See~\ref{cor:W2_order_2_distance} and (A3) of \ref{prop:implicit_JKO}). To satisfy \eqref{eq:onestep-defect} the starting point is the JKO variational characterization \eqref{eq:JKO-def-outline}.

We compare this with the point $\Phi_\eta^\eta(\rho)$ obtained by evolving the continuity equation with velocity $v^\eta$ for time $\eta$ starting at $\rho$. Using the dynamic (Benamou--Brenier) formulation of $W_2^2$ and the first and second variations of $J$ and $W_2^2$ along Wasserstein geodesics, we expand the Euler--Lagrange condition for the JKO functional in powers of $\eta$ along the base Wasserstein gradient flow of $J$.

The first-order terms in $\eta$ reproduce the known fact that JKO is a first-order integrator of the gradient flow of $J$ (Theorem~\ref{thrm:JKO_first_order}); they cancel automatically. The leading nontrivial condition thus appears at order $\eta^2$. Given \eqref{eq:JKO_vf_ansatz}, the energy dissipation identity eliminates the $\eta^2$ terms integrated against the Wasserstein gradient base flow. This leaves a component involving $\eta \,\bm{j}(\rho)$ and an integral on the density-acceleration of $\rho$. We then identify
\[
\bm{j}(\rho_t') = - \wagrad (\rho'_{t})(x) - \frac{\eta}{2} \nabla \,\mathcal{H}[\phi],
\]
where, $\mathcal{H}[\phi] := \partial_{t} \phi \,\, - \frac{1}{2} \left\lVert \nabla \phi \right\rVert_{2}^{2}$ for $\phi = (\delta J/\delta \rho )(\rho)$. This corresponds to a convective acceleration or a reverse-time Hamilton-Jacobi operator \cite{reverseHJ}, (See \eqref{eq:mod_flow_JKO}). The final step is to verify 
\begin{align*}
\nabla \frac{\delta}{\delta \rho} \left( \frac{\eta}{4} \int \lVert \nabla \phi(\rho) \rVert_{2}^{2} d \rho
    \right) =
    \frac{\eta}{2} \left( -\nabla \partial_{t} \phi_{t} + \frac{1}{2} \nabla \lVert \nabla \phi_{t} \rVert_{2}^{2} \right).
\end{align*}
This is shown in \eqref{eq:JKO_PDE_to_Loss}.\\

\noindent \textbf{Outline of Rest of the Paper.} Before proceeding, we summarize the structure of the remainder of the paper and the content of the appendices. In \S \ref{sec:numerics} below, we offer numerical validation of the implicit bias of JKO. In \S \ref{sec:background_BEA} we introduce background on the framework of \emph{backward-error analysis}, which we adapt to the Wasserstein setting for showing Theorem~\ref{thm:main}. Then, in \S \ref{sec:IB_JKO} we prove Theorem~\ref{thm:main} directly, using helper lemmas which follow from standard results contained in \S \ref{sec:supp_res}. In \S ~\ref{sec:JKO_IB_other} we show the implicit bias for a number canonical energy functionals omitted from the main text as well as deriving the Bures-Wasserstein correction from its established analytic solution. In \S \ref{sec:Riemann_IR} we discuss and derive the implicit bias of Riemannian gradient descent separately and offer a Lagrangian perspective on this bias. Lastly, in \S \ref{sec:bw_numerics} we offer details on the numerical evaluations.

\section{Numerical Evaluation}\label{sec:numerics}
In this section we explore numerically in several simple settings the validity and potentially utility of the implicit bias of JKO from Theorem \ref{thm:main}. Namely, 
\begin{itemize}
    \item In \S \ref{sec:Bures-Wasserstein-intro} study the exactly solvable case of energy $J$ whose Wasserstein gradient flow is the overdamped Langevin equation on a quadratic potential in the Bures-Wasserstein space of $d$-dimensional Gaussians. We compare the exact JKO scheme, Wasserstein gradient flow on $J$, and JKO-flow (i.e. Wasserstein gradient flow on $J^\eta$). As expected, we find that the JKO-flow is a better approximation to the JKO scheme (see Figure \ref{fig:jko-bw}).
    \item In \S \ref{sec:xu-li-ex}, we revisit a 1d example from \cite{FE_Bad}, in which $J$ is the KL divergence to a distribution $\pi(x)\propto e^{-U(x)}$ for quartic $U$ and even a single step of forward Euler starting for standard Gaussian produces a distribution with no density. We show that a single forward Euler step on $J^\eta$ avoids this issue (see Figure \ref{fig:quartic}) and that a particle discretization of the JKO-flow outperforms a particle discretization of the Wasserstein gradient flow for some values of $\eta$ (see Figure \ref{fig:jko-wgf}). 
\end{itemize}

%the implicit bias of JKO from Theorem \ref{thm:main} and explore several simple simple energy functions $J$ on which we can compare the JKO scheme, Wasserstein gradient flow on $J$, the Forward Euler discretization of Wasserstein gradient flow no $J$, the JKO-flow (i.e  Wasserstein gradient flow on $J^{\eta}$).

\subsection{JKO in Bures-Wasserstein Space.}\label{sec:Bures-Wasserstein-intro}

\begin{figure}[ht]
\centering\includegraphics[width=1.0\linewidth]{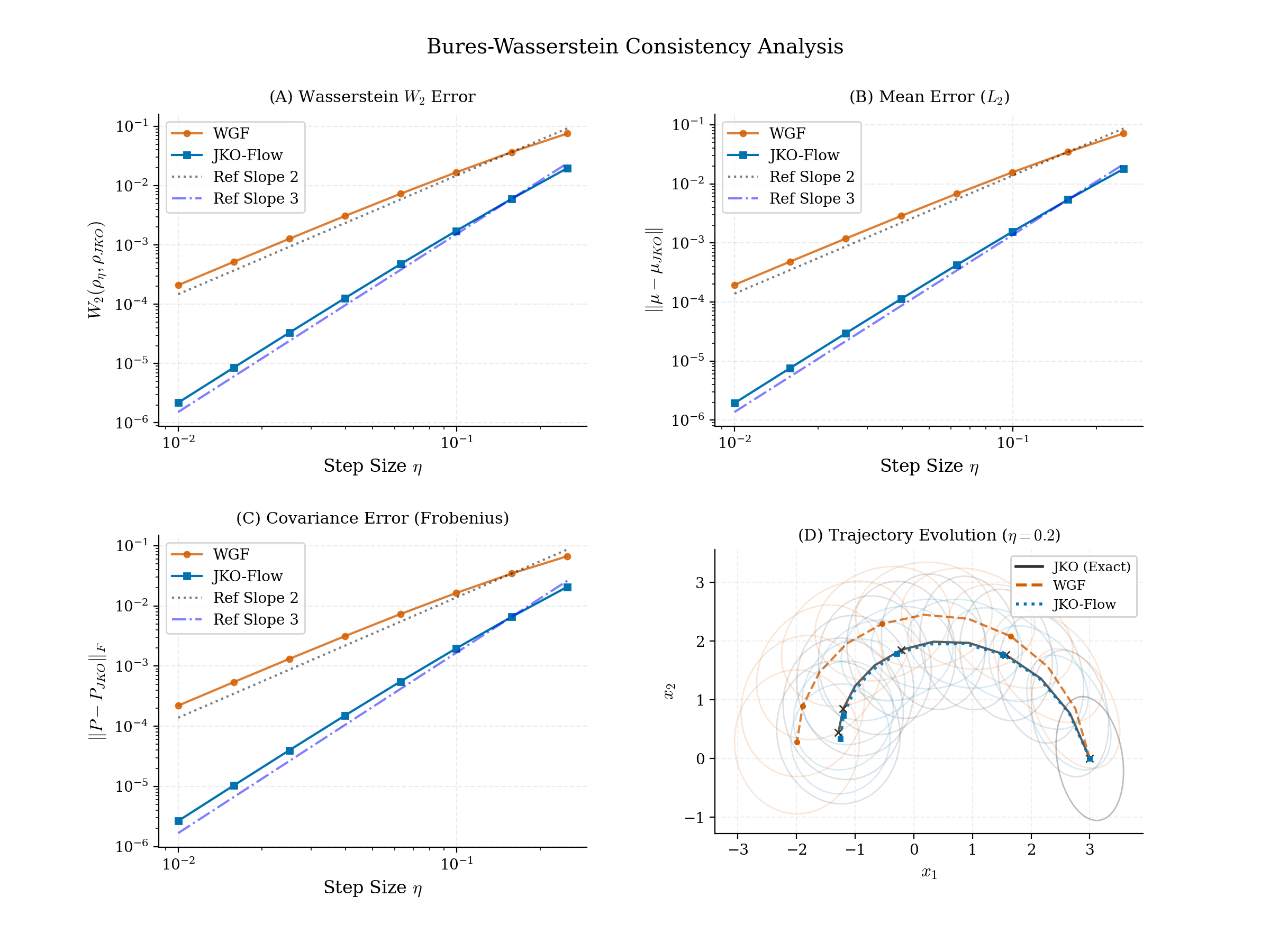}
    \caption{Wasserstein distance of $(\bm{\mu}, \Sigma)$ to analytic $(\bm{\mu}_{\mathrm{JKO}}, \Sigma_{\mathrm{JKO}})$ from \cite{Halder2017} of Wasserstein gradient flow and the modified second-order JKO-flow (Top left). Mean (Top right) and Covariance (Bottom left) error of Wasserstein gradient flow and second-order modified flow to JKO. All plots additionally have as reference the ideal $\eta^{2}$ and $\eta^{3}$ deviation curves. (Bottom right) Plot of mean and $2\sigma$-covariance isocontours of an example flow.
    }
    \label{fig:jko-bw}
\end{figure}

The case of Langevin on a Gaussian well potential, e.g. $E(x):= -x^{\top} H x$ in \eqref{eq:WGF_H_Langevin}, offers an analytically tractable test-bed. The work \cite{Halder2017} derives an analytical expression for the solution of JKO for such quadratic energies as a set of updates on a mean and covariance $(\mu, \Sigma)$ defining a Gaussian in Bures-Wasserstein space $\mathrm{BW}(\mathbb{R}^{d}) \subset P_{ac}(\mathbb{R}^{d})$. In Proposition~\ref{prop:JKO_FP}, we show the implicit bias as derived from the established analytical form for JKO in Bures-Wasserstein space \cite{Halder2017}. The exact same bias can be computed from the Langevin correction \eqref{eq:WGF_H_Langevin}, derived from our Theorem~\ref{thm:main}, in a few lines. In particular, observe in this case that from the expression
\begin{align}
    &\nabla \frac{\delta  H^{\eta}}{\delta \rho}(\rho) = \frac{\eta}{4} \left( \nabla \lVert \nabla E \rVert_{2}^{2} -   \frac{1}{\beta^{2}} \nabla \left\lVert \nabla \log \rho \right\rVert_{2}^{2} \right)
     -\frac{\eta}{2\beta} \left( 
    \nabla \Delta ( E + \frac{1}{\beta} \log \rho)
    \right)
\end{align}
that (1) the Laplacian component is zero $\nabla\Delta E = 0$, (2) by Gaussianity \cite{Halder2017} one has $\nabla \log \rho = -\Sigma^{-1}(x - \mu)$ so $\Delta \nabla \log \rho =0$, and thus
\begin{align}\label{eq:BW_wagrad}
    \nabla \frac{\delta  H^{\eta}}{\delta \rho}(\rho) &= \frac{\eta}{4} \left( \nabla \lVert \nabla E \rVert_{2}^{2} -   \frac{1}{\beta^{2}} \nabla \left\lVert \nabla \log \rho \right\rVert_{2}^{2} \right) \\
    &=  \frac{\eta}{2} H^{2} x - \frac{\eta}{2\beta^2} \Sigma^{-2}(x-\mu)
\end{align}
Thus the mean correction evolves as the expectation of \eqref{eq:BW_wagrad}
\begin{align}
   \mathbb{E}\nabla \delta  H^{\eta}(\rho) =\frac{\eta}{2} H^{2} \mu
\end{align}
and the covariance correction, computed from the classical Bures-Wasserstein (Lyapunov) differential equation \cite{chewi2024statistical}
\begin{align}
   -\dot{\Sigma} = \Sigma \, \mathbb{E}\nabla^{2}\delta J +\mathbb{E}\nabla^{2} \delta J\, \Sigma
\end{align}
is given from \eqref{eq:BW_wagrad} to be
\begin{align}
    \dot{\Sigma}_{t}[{H^{\eta}}] :&= \Sigma_{t} \left(\frac{\eta}{2} H^{2}- \frac{\eta}{2\beta^2} \Sigma_{t}^{-2} \right) + \left(\frac{\eta}{2} H^{2}- \frac{\eta}{2\beta^2} \Sigma_{t}^{-2} \right) \Sigma_{t} \\
    &= \frac{\eta}{2} \Sigma_{t} H^{2}  + \frac{\eta}{2} H^{2}\Sigma_{t} - \frac{\eta}{\beta^2} \Sigma_{t}^{-1} 
\end{align}
In addition, in Remark~\ref{rem:BW_from_PDE} we show one can also easily recover this Bures-Wasserstein flow correction from the modified velocity in the PDE form of the correction.

We numerically validate the second-order accuracy of the proposed JKO correction (Section~\ref{sec:bw_numerics}). We specialize to $M=\mathbb{R}^{3}$ and form a random negative-definite drift $A \prec 0$. We choose $A$ to be real, symmetric, and strictly negative definite so the system dynamics \eqref{eq:SDE_FokkPlanck} are stable. We also prevent alignment of $A$ with the initial covariance $P_{0}$, which is more representative of a non-commutative flow in high-dimensions. The covariance $P_{0}$ is randomly-sampled and ensured to be positive-definite $P_{0} \succ 0$. We set $\beta = 1$ in \eqref{eq:SDE_FokkPlanck} and initialize $\bm{\mu}_{0}$ as a standard Gaussian. We integrate the Wasserstein-gradient flow and the second-order flow for $200$ RK4 steps. We compare these to the JKO step in $W_2$ distance, in $\ell_{2}$ error on the mean, and Frobenius-norm error on the covariance (Figure~\ref{fig:jko-bw}). Values for this experiment are reported in Table~\ref{tab:jko_error_scaling}. We additionally demonstrate non-symmetric and rotational dynamics between the integrators (Figure~\ref{fig:jko-bw}).

In agreement with Proposition~\ref{prop:JKO_FP}, we find that the second-order JKO integrator coincides more closely to the analytical JKO step in both Wasserstein distance and in its mean-covariance parameters $(\bm{\mu}, \Sigma)$. In particular, from the Log-Log fit, we find that the second-order modified flow captures an additional order of $\eta^{0.925}$ in $W_2$, $\eta^{0.996}$ in mean-error, and $\eta^{0.995}$ in covariance-error, near the theoretical prediction of an addition order of $\eta$ captured relative to the Wasserstein gradient flow. 

\subsection{Regularity of JKO-Flow}\label{sec:xu-li-ex}
\begin{figure}
    \centering
    \includegraphics[width=0.85\linewidth]{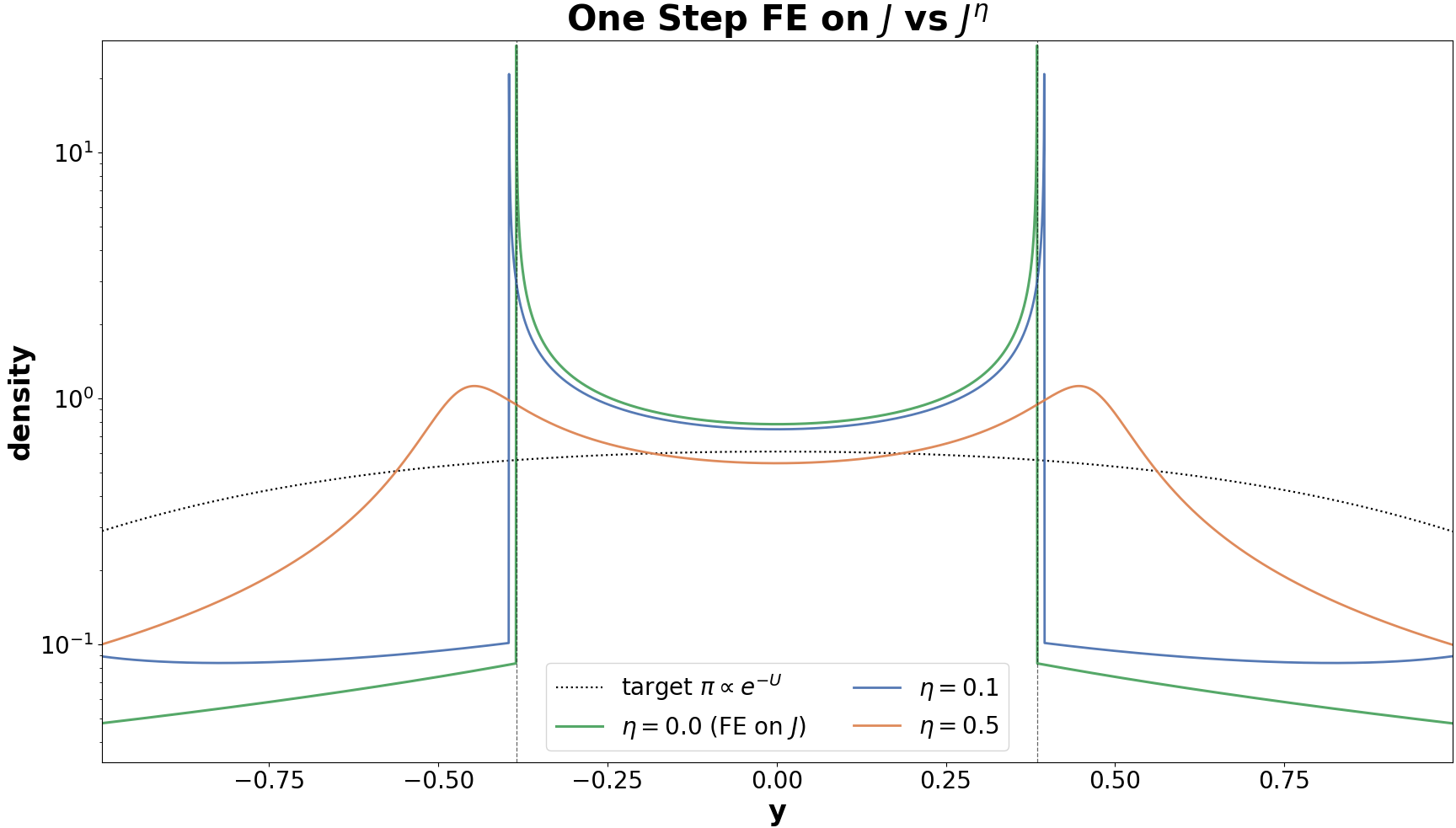}
    \caption{Densities of one step of forward Euler discretization of Wasserstein gradient flow on $J=\mathrm{KL}(\rho||\pi)$ with $-\log \pi(x) = U(x)=x^2/2+ x^4/4+\text{const}$. Comparison with  densities after one step of forward Euler on $J^\eta$ with $\eta\in \set{0.1, 0.5}$. Initial condition in all cases is the standard Gaussian. Theory predicts that the density obtained using $J^\eta$ is smooth if and only if $\eta > .3$.}
    \label{fig:quartic}
\end{figure}
In this section, we demonstrate the potential numerical benefits of using the JKO-corrected energy $J^\eta$ on a simple example from \cite{FE_Bad} known to be numerically instable for the forward Euler discretization of Wasserstein gradient flow. Namely, we take $M = \R$ and $J$ to be the KL divergence with a smooth convex quartic potential
\[
J(\rho)=\mathrm{KL}(\rho\|\pi),\qquad \pi(x)\propto e^{-U(x)},\qquad
U(x)=\tfrac12 x^2+\tfrac14 x^4.
\]
The article \cite{FE_Bad} considers one step of the forward Euler discretization of the gradient flow on $J$ starting with the standard Gaussian $\rho_0=(2\pi)^{-1/2}e^{-x^2/2}$. While the continuous time Wasserstein gradient flow  preserves smoothness and absolute continuity the forward Euler scheme does not. Indeed, a single step of size $h>0$ updates by the pushforward
\[
\rho_1=(T_h)_\#\rho_0,\qquad \rho_1^{\mathrm{FE}}(y)=\sum_{x\in T_h^{-1}(y)} \frac{\rho_0(x)}{\lvert T_h'(x)\rvert}
\]
where
\[
T_h(x)=x - h\partial_x \phi_0,\qquad \phi_0(x) = \frac{\partial J}{\partial \rho}(x,\rho_0)=x^3.
\]
Note that $T_h(x)$ is not monotone for any $h>0$ and has critical points at $x_\star=\pm(3h)^{-1/2}$. The density of $\rho_1^{FE}$ therefore has a jump discontinuity as $y$ crosses $T(x_\star)$, despite the smooth log-concave initial data. Consider in contrast one step of the forward Euler discretization on the JKO-deformed objective 
\[
J^\eta(\rho) = \mathrm{KL}(\rho|| \pi) -\frac{\eta}{4}\int_{\R} \abs{\partial_x(\log \rho(x) - U(x))}^2 \rho(dx).
\]
Setting $\rho = \rho_0$ gives
\[
\partial_x\phi_0^\eta(x)  = \partial_x\frac{\partial J^\eta }{\partial \rho}(x,\rho_0) = x^3 - \frac{3\eta}{2}x^5.
\]
Thus, the JKO-corrected transport map is
\[
T_h^\eta(x) = x - h\lr{x^3 - \frac{3\eta}{2}x^5},
\]
which is monotone provided $\eta \geq 3 h /10$. See Figure \ref{fig:quartic}. We see further numerical improvements by numerically integrating the JKO-flow (i.e. the Wasserstein gradient flow on $J^\eta$) up to time $t=h$ (see Figure \ref{fig:JKO-flow-vs-1-step}).

\begin{figure}
    \centering
    \includegraphics[width=1.0\linewidth]{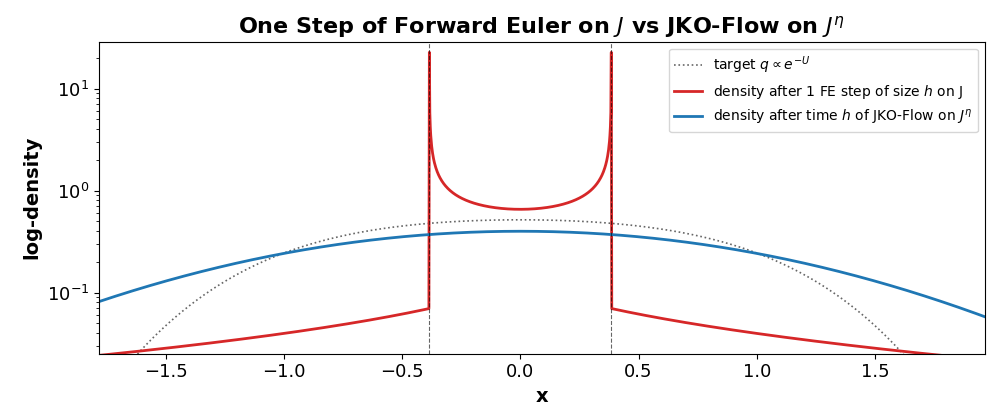}
    \caption{Comparison of 1 FE step of size $h$ on $J$ and time $t=h$ JKO-Flow on $J^\eta$ for $h=\eta=1.0$}
    \label{fig:JKO-flow-vs-1-step}
\end{figure}
Finally, we compare the performance of numerically integrating the JKO-Flow on $J^\eta$ and the Wasserstein gradient flow on $J$ for a small step size $h=2e-3$ and a range of small to moderate values of the JKO strength $\eta$ in Figure \ref{fig:jko-wgf}. We find that a range of values of $\eta>0$ provide lower $\mathrm{KL}$ divergences when compared with the vanilla $\eta \downarrow0$ Wasserstein gradient flow. Taken together we believe these results suggest that implementing the JKO bias in practice may be numerically useful in certain cases.

\textbf{Acknowledgements.} 
We are grateful to Katy Craig, Sinho Chewi, and Philippe Rigollet for comments on an earlier draft of this manuscript. BH is supported by a 2024 Sloan Fellowship in Mathematics, NSF CAREER grant DMS-2143754, and NSF grant DMS-2133806, and DARPA AIQ grant (HR001124S0029). PH is supported by NIH/NCI grant U24CA248453 (PI: Benjamin J. Raphael).
\newpage

\bibliographystyle{abbrvnat}   % or abbrvnat, unsrtnat, etc.
\bibliography{WGF}

\appendix

\newpage

\section{Background on Backward Error-Analysis}\label{sec:background_BEA}

\emph{Backward Error-Analysis} (BEA) \cite{Reich1999} is a classical theoretical framework for analyzing the numerical error of integration schemes for differential equations. In particular, let $V \in \mathfrak{X}(\mathbb{R}^{d})$ be a vector field on $\mathbb{R}^{d}$ and suppose one considers the following ordinary differential equation for $t\in[0,1]$ and its integral solution (flow-map)
\begin{align}
    \dot{\theta}(t)  = V_{\theta(t)}, \quad \Phi_{V}^{t}(\theta):\, \mathbb{R}^{d} \times [0,1] \to \mathbb{R}^{d}
\end{align}
and define a discrete integrator $\hat{\Phi}_{V}^{\eta}$ \cite{CHansen2011} to be a function smooth in the step-size $\eta > 0$ satisfying $\hat{\Phi}_{V}^{0} = \mathrm{id}$ and $\frac{d}{d\eta}\hat{\Phi}_{V}^{\eta} \mid_{\eta=0}\,(p) = V_{p}$. An integrator is consistent with the flow to order $p$ if for sufficiently small $\eta$ if there exists a constant $C > 0$ so that
\begin{align}
\left\lVert
\hat{\Phi}_{V}^{\eta}(\theta) - \Phi_{V}^{\eta}(\theta)
    \right\rVert \leq C \, \eta^{p+1}
\end{align}
In BEA, one supposes access to solution to the discretized integration scheme $\hat{\Phi}_{V}^{\eta}(\theta)$ up to a time $(n+1)\eta=t_{n+1}$ from an initial condition at $n\eta=t_{n}$. For a given order $p$, BEA asks whether there exists a modified vector field $\Tilde{V} \in \mathfrak{X}(\mathbb{R}^{d})$ perturbing $V$ which ensures the continuous flow and discrete integrator coincide: \begin{align}\label{eq:backward_error_integrator_diff}
\left\lVert
\hat{\Phi}_{V}^{\eta}(\theta) - \Phi_{\Tilde{V}}^{\eta}(\theta)
\right\rVert \leq C \,\eta^{p+1}\end{align}
Since $\hat{\Phi}_{V}^{\eta}(\theta) \sim \Phi_{V}^{\eta}(\theta)$ differ to order $\eta^{2}$, any integrator is consistent to order one. While this is the case for gradient-descent and gradient-flow, for any finite $\eta$ the two differ beyond order one, with \cite{barrett2021implicit} identifying that the modified flow $\Tilde{V}_{\theta(t_{n}+\eta)}$ coinciding with gradient-descent to order two is given by
\begin{align}\label{eq:Corrected_ODE}
\Tilde{V} := 
{V} + \eta G:= - \nabla_{\theta} J - \frac{\eta}{4} \nabla_{\theta}\left\lVert \nabla_{\theta} J \right\rVert_{2}^{2} + O(\eta^{2})
\end{align}
Thus, while gradient-flow follows $- \nabla_{\theta} J$, forward-Euler gradient-descent behaves like the modified continuous flow $\Tilde{V}_{\theta(t_{n}+\eta)}$, revealing an implicit bias towards flatter minima.

\section{Implicit Bias of the JKO Scheme}\label{sec:IB_JKO}

Before we state the next lemma, let us recall Gr\"{o}nwall's inequality, which states that for a differentiable $\phi:[0,\eta] \to \mathbb{R}$ which satisfies $\dot{\phi}(t) \leq c \phi(t)$ for all $t\in [0,\eta]$, then $\phi(t) \leq e^{c t} \phi(0)$ for all $t$.

\begin{lemma}[Stability Lemma]\label{lem:stability}
Suppose we have two densities $\rho_{t}', \,\,\rho_{t}$ satisfying the continuity equation from the same initial datum $\rho_{0}$
\begin{align*}
    & \partial_{t}\rho_{t}' = - \nabla \cdot (\rho'_{t} v_{t} ), \quad \partial_{t}\rho_{t} = - \nabla \cdot (\rho_{t} w_{t} )
\end{align*}
Suppose that $v_{t},w_{t}$ satisfy for $t \in [0, \eta]$ both $L^{2}(\rho_{t})$ closeness a.e. $t$ and that $w_{t}$ (or $v_{t}$) are $L$-Lipschitz
%&\lVert \bm{v}_{t} \rVert_{L^{2}(\rho_{t})}^{2} + \lVert \bm{w}_{t} \rVert_{L^{2}(\rho'_{t})}^{2} \leq C , \\
\begin{align*}
    \lVert v_{t} - w_{t} \lVert_{L^{2}(\rho_{t})}  \leq C \eta^{k}, \quad \lVert w_{t}(x) - w_{t}(y) \rVert_{2} \leq L \lVert x - y \rVert_{2}
\end{align*}
Then, for any $t_{0} \in [0, \eta]$ one has the bound
\begin{align*}
    W_{2} (\rho_{t}, \rho_{t}') \leq \left( W_{2} (\rho_{t_0}, \rho_{t_0}' )  + \frac{C\eta^{k}}{L} \right) e^{L(t-t_{0})} - \frac{C\eta^{k}}{L}
\end{align*}
\end{lemma}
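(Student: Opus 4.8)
The plan is to reduce the claim to a scalar differential inequality for $\phi(t):=W_2(\rho_t,\rho'_t)$ and then integrate it with the Grönwall inequality recalled above. Under the stated hypotheses both curves have $L^2$ velocity fields on $[0,\eta]$, hence are absolutely continuous curves in $(P_2,W_2)$, and the smoothness/decay assumptions make integration by parts against the continuity equation legitimate; in this regime the map $t\mapsto W_2^2(\rho_t,\rho'_t)$ is locally Lipschitz, differentiable for a.e.\ $t$, and its derivative is given by the standard formula (see \cite[Thm.~8.4.7]{AGS2008})
\begin{equation*}
\frac{d}{dt}\,\tfrac12 W_2^2(\rho_t,\rho'_t)=\int_{M\times M}\big\langle x-y,\; w_t(x)-v_t(y)\big\rangle\,\gamma_t(dx,dy),
\end{equation*}
where $\gamma_t\in\Gamma(\rho_t,\rho'_t)$ is an optimal coupling realizing $W_2(\rho_t,\rho'_t)$ (equivalently, the sum of the one–sided terms $\int\langle x-T_tx,\,w_t(x)\rangle\,\rho_t(dx)$ and $\int\langle y-T_t^{-1}y,\,v_t(y)\rangle\,\rho'_t(dy)$ with $T_t$ the optimal map from $\rho_t$ to $\rho'_t$). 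Establishing this differentiation formula under the paper's regularity conditions is the one genuinely delicate point; the remaining steps are a short computation.

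Next I would split the velocity difference so as to isolate one Lipschitz term and one smallness term. Using that $v_t$ is $L$-Lipschitz (the symmetric choice with $w_t$ Lipschitz works identically after interchanging the roles of $\rho_t$ and $\rho'_t$, with the smallness measured in $L^2(\rho'_t)$), write
\begin{equation*}
w_t(x)-v_t(y)=\big(v_t(x)-v_t(y)\big)+\big(w_t(x)-v_t(x)\big).
\end{equation*}
Since $\gamma_t$ is optimal, the first piece contributes at most $\int L|x-y|^2\,\gamma_t(dx,dy)=L\,W_2^2(\rho_t,\rho'_t)$ by Cauchy--Schwarz and the Lipschitz bound; for the second piece, Cauchy--Schwarz in $L^2(\gamma_t)$ together with the fact that the first marginal of $\gamma_t$ is $\rho_t$ gives
\begin{equation*}
\int\big\langle x-y,\;w_t(x)-v_t(x)\big\rangle\,\gamma_t(dx,dy)\le W_2(\rho_t,\rho'_t)\,\|w_t-v_t\|_{L^2(\rho_t)}\le C\eta^k\,W_2(\rho_t,\rho'_t).
\end{equation*}
Hence for a.e.\ $t\in[0,\eta]$ we obtain the differential inequality $\tfrac{d}{dt}\tfrac12 W_2^2(\rho_t,\rho'_t)\le L\,W_2^2(\rho_t,\rho'_t)+C\eta^k\,W_2(\rho_t,\rho'_t)$.

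Finally I would integrate this inequality, being slightly careful near points where $\phi$ vanishes. Set $\phi_\varepsilon(t):=\big(W_2^2(\rho_t,\rho'_t)+\varepsilon^2\big)^{1/2}\ge\varepsilon>0$; it is locally Lipschitz, and since $W_2(\rho_t,\rho'_t)\le\phi_\varepsilon(t)$ the previous display yields $\phi_\varepsilon\dot\phi_\varepsilon\le L\phi_\varepsilon^2+C\eta^k\phi_\varepsilon$, i.e.\ $\dot\phi_\varepsilon\le L\phi_\varepsilon+C\eta^k$ for a.e.\ $t$. Applying Grönwall's inequality (as recalled before the lemma) to $\psi(t):=\phi_\varepsilon(t)+C\eta^k/L$, which then satisfies $\dot\psi\le L\psi$, on the interval $[t_0,t]$ gives
\begin{equation*}
\phi_\varepsilon(t)+\frac{C\eta^k}{L}\le\Big(\phi_\varepsilon(t_0)+\frac{C\eta^k}{L}\Big)e^{L(t-t_0)},
\end{equation*}
and letting $\varepsilon\downarrow0$ (so $\phi_\varepsilon\to W_2(\rho_\cdot,\rho'_\cdot)$ pointwise, assuming $L>0$, which we may since $L$ can always be enlarged) yields exactly the asserted estimate. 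The main obstacle is thus entirely in the first step — certifying the derivative-of-$W_2^2$ formula for two simultaneously evolving curves under the given smoothness hypotheses; granting that, Steps~2--3 are a routine estimate plus the one-dimensional Grönwall bound.
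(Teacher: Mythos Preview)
Your argument is correct and follows essentially the same route as the paper's proof: invoke the derivative formula for $W_2^2$ along two continuity-equation curves (AGS Thm.~8.4.7), split the velocity difference into a Lipschitz piece and an $L^2(\rho_t)$-small piece via Cauchy--Schwarz, and integrate the resulting scalar inequality $\dot\phi\le L\phi+C\eta^k$ by Gr\"onwall. Your $\varepsilon$-regularization $\phi_\varepsilon=(W_2^2+\varepsilon^2)^{1/2}$ handles the vanishing of $\phi$ more carefully than the paper, which simply assumes the nondegenerate case $W_2(\rho_t,\rho'_t)>0$ for $t>0$; otherwise the two proofs are interchangeable.
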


\begin{proof}
From Theorem 8.4.7 \cite{AGS2008}, we have from the generic differentiability of the Wasserstein-p distance that for $\rho_{t}$ absolutely continuous and $v_{t}(x) \in T_{x}{P}_{ac}(\mathbb{R}^{d})$ its tangent that
\begin{align*}
    \frac{d}{dt} W_{2}^{2} (\rho_{t}, \sigma ) \,\,  = 2 \iint \langle x - y, v_{t}(x) \rangle d\gamma.
\end{align*}
From \cite{AGS2008} one has for our pair of measures $\rho_{t}, \rho_{t}'$ that
\begin{align*}
    &\frac{1}{2}\frac{d}{dt} W_{2}^{2} (\rho_{t}, \rho_{t}') \leq \iint \langle v_{t}(x) - w_{t}(y)
    , x - y \rangle   d\gamma_{t}.
\end{align*}
The right hand side of the preceding expression can be rewritten 
\begin{align*}
    \iint \langle v_{t}(x) - w_{t}(y) + w_{t}(x) - w_{t}(x)
    , x - y \rangle   d\gamma_{t}.
\end{align*}
By Cauchy-Schwarz we thus obtain 
\begin{align*}
   \frac{1}{2}\frac{d}{dt} W_{2}^{2} (\rho_{t}, \rho_{t}') &=
    \iint \langle v_{t}(x) - w_{t}(x), x - y \rangle   d\gamma_{t}   + \iint \langle w_{t}(x) - w_{t}(y) , x - y \rangle   d\gamma_{t} \\
    &\leq
    W_{2} (\rho_{t}, \rho_{t}' ) \left( \left( \int \lVert v_{t}(x) - w_{t}(x) \rVert_{2}^{2} d\rho_{t}  \right)^{1/2}
     + \left( \iint \lVert w_{t}(x) - w_{t}(y) \rVert_{2}^{2} d\gamma_{t}  \right)^{1/2} \right) 
\end{align*}
Using that $w_{t}(x)$ is $L$-Lipschitz we find
\begin{align*}
   \frac{1}{2}\frac{d}{dt} W_{2}^{2} (\rho_{t}, \rho_{t}') &\leq   W_{2} (\rho_{t}, \rho_{t}' )\left(
    \lVert v_{t} - w_{t} \rVert_{L^{2}(\rho_{t})} + L \left( \iint \lVert x - y \rVert_{2}^{2} d\gamma_{t}  \right)^{1/2} \right) \\
    &= W_{2} (\rho_{t}, \rho_{t}' )\left(
    \lVert v_{t} - w_{t} \rVert_{L^{2}(\rho_{t})} + L W_{2} (\rho_{t}, \rho_{t}' ) \right)
\end{align*}
Thus, the time-change of the Wasserstein distance is bounded above by
\begin{align*}
    \frac{1}{2}\frac{d}{dt} W_{2}^{2} (\rho_{t}, \rho_{t}') \leq 
    C \eta^{k}  W_{2} (\rho_{t}, \rho_{t}' ) + L W_{2}^{2} (\rho_{t}, \rho_{t}' ). 
\end{align*}
Thus, letting $\phi(t):=W_{2}(\rho_{t}, \rho_{t}') $ we have
\begin{align*}
    &\frac{1}{2}\frac{d}{dt} \phi(t)^{2} =  \phi \dot{\phi}
    \leq C\eta^{k} \,\phi(t) + L \phi(t)^{2} .
\end{align*}
Supposing the non-degenerate case where $\{ t>0 : W_{2}^{2} (\rho_{t}, \rho_{t}') > 0 \} = (0,1]$ we have
\begin{align*}
    \dot{\phi} \leq C\eta^{k} + L \phi
\end{align*}
Since this differential equation is solved, for $C\eta^{k} ,\,L > 0$, by 
\begin{align*}
    \phi(t) = \left(
    \phi(t_{0}) + \frac{C \eta^{k}}{L}
    \right)e^{L (t-t_{0})} - \frac{C\eta^{k}}{L}
\end{align*}
we find the following linear differential inequality by application of Gr\"{o}nwall's
\begin{align*}
    &W_{2} (\rho_{t}, \rho_{t}') \leq \left( W_{2} (\rho_{t_0}, \rho_{t_0}' )  + \frac{C\eta^{k}}{L} \right) e^{L(t-t_{0})} - \frac{C\eta^{k}}{L}.
\end{align*}
\end{proof}
\begin{corollary}[Order of 2-Wasserstein Distance Between Order-k Integrators for step size $\eta$.]\label{cor:W2_order_2_distance}
    Suppose one defines a flow satisfying the conditions of Lemma~\ref{lem:stability} with an order-$\eta^{k}$ velocity gap. For a step of size $\eta >0$, one has that the 2-Wasserstein distance is of order $\eta^{k+1}$:
    \begin{align}
        W_{2} (\rho_{t}, \rho_{t}') = O(\eta^{k+1})
    \end{align}
\end{corollary}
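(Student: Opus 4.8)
The plan is to obtain this as an immediate consequence of the Stability Lemma (Lemma~\ref{lem:stability}), applied to the two flows started from a common initial datum, followed by the elementary estimate $e^{L\eta}-1 = O(\eta)$.

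Concretely, first I would invoke Lemma~\ref{lem:stability} with $t_0 = 0$. Since both $\rho_t$ and $\rho_t'$ solve the continuity equation from the same initial density $\rho_0$, we have $W_2(\rho_0,\rho_0') = 0$, so the lemma gives, for every $t\in[0,\eta]$,
\[
W_2(\rho_t,\rho_t') \;\le\; \frac{C\eta^{k}}{L}\bigl(e^{Lt}-1\bigr)\;\le\;\frac{C\eta^{k}}{L}\bigl(e^{L\eta}-1\bigr).
\]
Next I would expand the exponential: for $\eta$ in any bounded range, $e^{L\eta}-1 = L\eta + O(\eta^2)$ with implied constant depending only on $L$, hence
\[
\sup_{t\in[0,\eta]} W_2(\rho_t,\rho_t') \;\le\; \frac{C\eta^{k}}{L}\bigl(L\eta + O(\eta^2)\bigr) \;=\; C\eta^{k+1} + O(\eta^{k+2}) \;=\; O(\eta^{k+1}),
\]
which is the claimed bound. (Equivalently one may use $e^{L\eta}-1 \le L\eta\,e^{L\eta} \le L\eta\,e^{L}$ once $\eta\le 1$, avoiding any Taylor remainder.)

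The only point requiring care — and it is part of the hypotheses of Lemma~\ref{lem:stability} rather than something to be established here — is that the constants $C$ (bounding the $L^2(\rho_t)$ velocity gap) and $L$ (the Lipschitz constant of $w_t$, or of $v_t$) are \emph{independent of} $\eta$; otherwise the prefactor $C/L$ and the factor $e^{L\eta}$ could spoil the rate. Granting this uniformity there is no real obstacle: the corollary is a one-line deduction. In the application to Theorem~\ref{thm:main}, this uniformity is precisely what assumptions (A1)--(A3), together with the Lipschitz regularity of the identified correction field $\bm{j}(\rho)$, are designed to provide, so that taking $k=1$ yields the $O_T(\eta^2)$ one-step matching needed in \eqref{eq:onestep-defect}.
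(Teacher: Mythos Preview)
Your proposal is correct and follows essentially the same approach as the paper's proof: apply Lemma~\ref{lem:stability} with $t_0=0$ and common initial datum, obtain the bound $W_2(\rho_t,\rho_t')\le \frac{C\eta^k}{L}(e^{Lt}-1)$, and Taylor expand the exponential at $t=\eta$ to extract the $O(\eta^{k+1})$ rate. Your additional remark on the $\eta$-independence of $C$ and $L$, and the alternative inequality $e^{L\eta}-1\le L\eta\,e^{L\eta}$, are welcome clarifications not spelled out in the paper.
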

\begin{proof}
    When $t_{0}=0$, observe we have $W_{2} (\rho_{0}, \rho_{0}' )= W_{2} (\rho_{0},\rho_{0}) = 0$, and by continuity have
\begin{align*}
    \lim_{t \downarrow 0} W_{2}(\rho_{t}, \rho_{t}') \leq  \frac{C\eta}{L} (e^{L\cdot0} - 1) = 0
\end{align*}
Thus, at $t=0$ we find $W_{2} (\rho_{0}, \rho_{0}') = 0$ and for $t>0$ have that $W_{2} (\rho_{t}, \rho_{t}')$ is bounded above by an exponential in $L$. Expanding in $t$ we find
\begin{align*}
    0 &\leq W_{2} (\rho_{t}, \rho_{t}') \leq  \frac{C\eta^{k}}{L} (e^{Lt} - 1) = \frac{C\eta^{k}}{L} (1 + (Lt) + \frac{(Lt)^{2}}{2} + o(t^{2}) - 1) = \frac{C \eta^{k}}{L}(Lt + \frac{(Lt)^{2}}{2} + o(t^{2}))
\end{align*}
So that at $t=\eta$ one has
\begin{align*}
    W_{2} (\rho_{t}, \rho_{t}') \leq C \eta^{k+1} + o(\eta^{k+1}):= O(\eta^{k+1}).
\end{align*}
\end{proof}

\noindent Thus, the 2-Wasserstein distance $W_{2} (\rho_{t}, \rho_{t}')$ of a perturbed flow with velocity gap of order $\eta$ is at order $\eta^{2}$.

\begin{prop}[Implicit Bias of the Variational JKO Scheme]\label{prop:implicit_JKO}

Suppose $n \in [N-1]$ and step size $\eta>0$. Let energy functional $J: {P}_{ac}(\mathbb{R}^{d}) \to \mathbb{R}$ and denote $\psi(\rho_{t}) := \wadel$. Assume A1-A5 as in Theorem~\ref{thm:main}.
Then, there exists a modified flow $\rho_t^\eta$ with boundary condition $\rho_t^\eta(n\eta)=\rho_{n}^{\mathrm{JKO}, \eta}$ satisfying the corrected continuity equation
    \begin{equation}
        \partial_{t} \rho_{t}^{\eta} = - \nabla \cdot \left( \rho_{t}^{\eta} \left(
    - \wagrad (\rho_{t}^{\eta}) + \frac{\eta}{2} \nabla \,\mathcal{H} \psi
    \right) \right)
    \end{equation}
Where there exists $C>0$ so that 
\begin{equation}\label{eq:JKO-approx}
        \sup_{t\in [0,T]} W_2\lr{\rho_t^\eta, \rho_{\lfloor t/\eta \rfloor}^{\mathrm{JKO}, \eta}} \leq C \eta^2.
\end{equation}
Here $\mathcal{H} \psi$ denotes the reverse-time Hamilton-Jacobi operator \cite{reverseHJ} applied to $\psi =\wadel$, $\mathcal{H}\psi := \partial_{t} \psi \,\, - (1/2) \left\lVert \nabla \psi \right\rVert^{2}$.

%Denoted by $\rho_t^\eta$ the Wasserstein gradient flow $\rho_t^\eta$ on $J^\eta$. 

\begin{comment}
\begin{itemize}[leftmargin=2.5em,labelwidth=*,align=parleft]
    \item[\textbf{(A1)}] $J$ is proper and l.s.c. in $(\mathbb P_2,W_2)$.
    \item[\textbf{(A2)}] Denote initial datum $\rho(n\eta):=\rho^{(n)}$. The $W_{2}$-gradient flow 
    \[
    \partial_{t}\rho^{\wass}_{t} +\nabla \cdot ( -\nabla \psi(\rho_{t}^{\wass})\rho_{t}^{\wass}) =0
    \]
    admits classical solution $(\rho^{\wass}_{t})\in C_{t,x}^{2}$ on $t/\eta \in[n, (n+1)]$  which is $O_{T}(\eta)$ from $\rho_{k}^{\mathrm{JKO},\eta}$ (Theorem~\ref{thrm:JKO_first_order}). Further, suppose $\rho_{t}$ satisfies fast decay to zero at infinity so integration by parts is valid.
    \item[\textbf{(A3)}] Suppose $J$ is twice Fréchet differentiable in a neighborhood along the curve $(\rho_{t} : t \in [n\eta, (n+1)\eta])$ with smooth variation $\psi(\rho_{t}) := \wadel \in C^{1,2}_{t,x}$ and $L^{2}$ Wasserstein gradient and associated time-partial $\nabla\psi(\rho) ,\, \partial_{t} \nabla\psi(\rho) \in L^{2}(\rho_{t})$ along $\rho_{t}$.
    \item[\textbf{(A4)}] Suppose the Jordan-Kinderlehrer-Otto scheme \cite{Jordan1998}
    \begin{align*}
    \rho^{(n+1)} =:
    \argmin_{\rho \in {P}_{ac}(\mathbb{R}^{d})} J(\rho) + \frac{1}{2 \eta} W_{2}^{2}(\rho^{(n)}, \rho)
\end{align*}
    has an a.c. minimizer $\rho^{(n+1)}$ with unique transport map and $C^{2}$ Kantorovich potential.
\end{itemize}
\end{comment}

\end{prop}

\begin{remark}
    We note that requiring $J$ to be $\lambda$-geodesically convex (with sufficient coercivity) in addition to (A1) is not necessary for this local result to hold. However, if one does assume $\lambda$-convexity, the theory of metric gradient flows \cite{AGS2008} guarantees $J$ has a unique, well-behaved Wasserstein gradient flow, and that the JKO scheme is well-posed, stable, and converges to this flow. These properties ensure the regularity assumptions (A2) and (A4).
\end{remark}

%exists and has unique solution  on the interval $[n\eta, (n+1)\eta]$ from initial datum $\rho(n\eta):=\rho^{(n)}$
% Wasserstein gradient $\nabla\phi(\rho) := \wagrad \in L^{2}(\rho; \mathbb{R}^{d})$, and Wasserstein gradient flow $\rho^{\wass}_{t} \in {P}_{ac}(\mathbb{R}^{d})$ admit a smooth-solution for $t\in [n\eta, n(\eta+1)]$, sufficiently small step $\eta > 0$, and .
%Suppose a measure exists which solves a Wasserstein gradient flow with respect to the tangent $- \wagrad (\rho_{t})(x) \in T_{\rho_{t}} {P}_{ac}(\mathbb{R}^{d})$ and the initial condition $\rho^{(n)}$. 

%In other words, the implicit bias of the Jordan-Kinderlehrer-Otto scheme is towards Wasserstein geodesics which satisfy $ \mathcal{H}[\phi] = 0$, with a weight controlled by the discretization step-size $\eta$: large values of $\eta$ highly bias the iterates towards Wasserstein geodesics, while smaller values of $\eta$ converge back to the Wasserstein gradient-flow. Here, note that the geodesic trajectory is defined in reverse-time \cite{reverseHJ}, matching the backward-Euler definition of the discrete, implicit step \cite{Jordan1998}.

\begin{proof}
Consider the discrete Jordan-Kinderlehrer-Otto step from iteration $\rho^{(n)}$ to $\rho^{(n+1)}$ iterations with a chosen step-size $\eta: \, \eta > 0$
\begin{align*}
\rho^{(n+1,\eta)} =:
\argmin_{\rho \in {P}_{ac}(\mathbb{R}^{d})} \,\, J(\rho) + \frac{1}{2 \eta} W_{2}^{2}(\rho^{(n, \eta)}, \rho)
\end{align*}
By Assumption (A1), the functional $J(\rho)$ is proper and lower semi-continuous, guaranteeing existence of a discrete minimizer $\rho^{(n+1,\eta)}$. Recall, $\rho^{\wass}$ has time-evolution given by the continuity equation following the gradient of the first variation of the functional $J: {P}_{ac}(\mathbb{R}^{d}) \to \mathbb{R}$:
$$
\partial_{t} \rho_{t} = - \nabla \cdot (\rho_{t} \mathbf{v}_{t}) = + \nabla \cdot \left(\rho_{t} \nabla \frac{\delta J (\rho)}{\delta \rho} \right)
$$
For $t = \eta n$ let the continuous solution have boundary $\rho^{(n,\eta)}$ from step $n$ of JKO. Let us take the Ansatz that $\rho'((n+1)\eta)$ coincides with the discrete flow $\rho^{(n+1,\eta)}$ up to order 2, so that
\begin{align*}
    &\rho^{(n,\eta)}(\cdot) = \rho'(\cdot, t:=\eta n), \quad \rho^{(n+1,\eta)}(\cdot) = \rho'(t+\eta)(\cdot) + O(\eta^{2}) 
\end{align*}
By Assumption (A2), the reference Wasserstein gradient flow admits a classical solution in $C^{2}_{t,x}$. We presume the corrected flow inherits this regularity by our Ansatz, justifying a Taylor expansion of the corrected flow which is partial in time $t$ \[\rho'(t+\eta, \cdot) = \rho'(t, \cdot) + \eta \frac{\partial }{\partial t}\rho' + \frac{\eta^{2}}{2} \frac{\partial^{2} }{\partial t^{2}} \rho' + o(\eta^{2})
\] %\\\rho^{(n+1,\eta)} - \rho'(t+\eta, \cdot) :&= \xi = o(\eta^{2})
with an $o(\eta^{2})$ perturbation $ \xi:=\rho^{(n+1,\eta)} - \rho'(t+\eta, \cdot)$ in the space of signed measures $\xi \in M(\mathbb{R}^{d})$. We seek a modified flow $\rho'$ which, when solved, matches the JKO solution $\rho^{(n+1,\eta)}$ to $o(\eta^{2})$. Essentially, this flow defined in an Eulerian sense with an expansion with respect to partial time derivatives $\partial_{t}^{\alpha}$, with each point in the domain $x\in \mathbb{R}^{d}$ viewed as fixed.

Following \cite{barrett2021implicit}, we presume an expansion exists on the time-partial of the modified flow $\partial_{t} \rho'_{t}$, in orders of $\eta$, which expresses it in terms of the continuous flow on $\rho$, so that
\[\partial_{t} \rho'_{t} = \partial_{t} \rho^{\wass}_{t} + \eta \jkoreg + \eta^{2} g_{2}(\rho) + o(\eta^{2}) 
\]
As we have the boundary condition that $\rho'(t) = \rho^{(n,\eta)}$ and $\int \rho^{(n,\eta)}(dx) = 1$ it follows that $\rho'$ must conserve mass for every $s \geq t = \eta n$. Let the Ansatz for the $O(\eta^{2})$ perturbation be given as $\xi = \jkoreg + \frac{1}{2} \partial_{t}^{2} \rho^{\wass}$
\begin{align*}
    \rho^{(n+1,\eta)} &= \rho^{(n,\eta)} + \eta \left( \partial_{t} \rho^{\wass}_{t} + \eta \jkoreg + \eta^{2} g_{2}(\rho) + o(\eta^{2}) \right) + \frac{\eta^{2}}{2} \frac{\partial^{2} }{\partial t^{2}} \rho' + o(\eta^{2}) \\
    & = \rho^{(n,\eta)} + \eta \partial_{t} \rho^{\wass}_{t} + \eta^{2} \left( \jkoreg + \frac{1}{2} \frac{\partial^{2} }{\partial t^{2}} \rho^{\wass} \right) + o(\eta^{2})
\end{align*}
Since $\rho_{t}'$ must satisfy mass-conservation itself, it holds that
\begin{align*}
&\int \partial_{t} \rho_{t}' (dx)= 0,\quad \int \left(
\partial_{t} \rho_{t} + \eta \jkoreg(\rho_{t}) + \eta^{2} g_{2}(\rho_{t}) + \cdots
\right) (dx) = 0 \,\, + \eta\int \jkoreg(\rho_{t})(dx) +o(\eta^{2})
\end{align*}
Thus, $\eta\int \jkoreg(\rho_{t}) \simeq_{o(\eta^{2})} 0$ so that the perturbation $\jkoreg(\rho_{t})$ is mass-conserving to this order. Recall by Assumption (A4) that $\rho_{t}$ vanishes at the boundaries, and that by Stokes' theorem, $\int_{M} \nabla \cdot (\rho \bm{j}) \mathrm{d} \mathrm{\mb{vol}} = \int_{\partial M} \langle \rho \bm{j}, \bm{n} \rangle\, \mathrm{d}A = 0$, so that a perturbation of the form $- \nabla \cdot (\rho \bm{j})$ is naturally feasible. Moreover, in the Otto-calculus any such zero-mass variation may be formally identified with an element of the Wasserstein tangent space $\bm{j} \in T_{\rho}P_{ac}(\mathbb{R}^{d}) = \overline{\{ \nabla\psi: \psi \in C^{\infty}_{0}\}}^{L^{2}(\rho)}.$ Thus, we may naturally take the Ansatz for this measure-preserving flow to be a continuity equation driven by a residual vector field $\bm{j}_{t}$. This captures the deviation between the Jordan-Kinderlehrer-Otto solution \cite{Jordan1998, Halder2017} from the base Wasserstein gradient flow \cite{AGS2008}, $\jkoreg(\rho_{t}) = - \nabla \cdot (\rho_{t} \bm{j}_{t})$ to ensure $\int \jkoreg(\rho_{t}) dx = 0$.

Let $\bm{j} : [0,1] \times \mathbb{R}^{d} \to \mathbb{R}^{d}$ denote the time-varying vector field driving the $O(\eta)$ JKO residual correction. By linearity of the divergence operator, we may group the base Wasserstein gradient-flow velocity and this correction field. In particular, expanding the time derivative of the modified flow $\rho'$ we find
\begin{align*}
\partial_{t} \rho_{t}' &= \partial_{t} \rho^{\wass}_{t} + \eta \jkoreg(\rho_{t}^{\wass}) + O(\eta^{2})\\
&= -\nabla \cdot \left(\rho_{t}^{\wass} \left( -\wagrad (\rho_{t})(x) \right)
\right)\, - 
\eta \,\, \nabla \cdot ( 
\rho_{t}^{\wass} \bm{j}_{t} ) + O(\eta^{2})
\\
&\simeq_{o(\eta^{2})} - \nabla \cdot \left[\rho^{\wass}_{t} \left( -\wagrad (\rho_{t}^{\wass})(x) + \eta \, \bm{j}_{t} 
\right) \right]\,\, + O(\eta^{2}) \\
&=  - \nabla \cdot \left(\rho^{\wass}_{t} \mb{v}^{\eta}_{t} \right) + O(\eta^{2}).
\end{align*}
Where $\mathbf{v}^{\eta}_{t} := -\wagrad (\rho'_{t}) + \eta \bm{j}_{t}$ represents the modified velocity field whose continuous gradient flow coincides with the discrete JKO solution up to local error $O(\eta^2)$. 

Observe that for a fixed $x$ in our domain $|\,\rho'_{t}(x) - \rho_{t}^{\wass}(x) \,|  = O(\eta^{2}) \,\,$ by the Taylor series expansion in orders of $\eta$. Additionally, given Assumption (A3), we may suppose there exists $B \in \mathbb{R}$ such that
\[
\int \lVert -\nabla \delta J/\delta \rho - \mb{v}^{\eta}_{t} \rVert_{2}^{2}d\rho_{t} = 
\eta^{2} \int \left\lVert \bm{j}_{t} \right\rVert_{2}^{2}d\rho_{t} < \eta^{2} B^{2}.
\]
Following Lemma~\ref{lem:stability} and Corollary~\ref{cor:W2_order_2_distance}
%Moreover, while $W_2 (\rho'_t , \rho^{(n,\eta)} )$ and $W_2 (\rho^{\wass}_t , \rho^{(n,\eta)} )$ may be large, by the order $\eta$ difference between $-\wagrad$ and $-\wagrad + \eta \, \bm{j}_{t}$ of $\bm{j}_{t}$, by Corollary~\ref{cor:W2_order_2_distance} we know that 
$L^{2}(\rho_{t})$ order-$\eta$ closeness of the vector fields implies, integrating up to time $t=\eta$, the following upper-bound:
\[
W_2 (\rho'_t , \rho^{\wass}_t) \leq B\eta^{2} + o(\eta^{2})
\]
when initialized from a common datum of $\rho_{0}:=\rho^{(n,\eta)}$. Thus, from the continuity equation we find that we may notate with respect to the reference flow ($\rho_{t}^{\wass} \to \rho_{t}'$) without loss up the order under consideration
\begin{align}
    - \nabla \cdot ( \rho_{t}' \mathbf{v}_{t}^{\eta}) &= - \nabla \cdot \left(\left(\rho_{t}^{\wass} + O(\eta^{2}) \right) \left(- \wagrad (\rho_{t})(x) + \eta \bm{j}_{t} \right) \right) \\
    &= - \nabla \cdot \left(
    - \wagrad (\rho_{t})(x)\rho_{t}^{\wass} + \eta \bm{j}_{t} \rho_{t}^{\wass} +O(\eta^{3}) \bm{j}_{t} + \wagrad (\rho_{t})(x)O(\eta^{2}) \right) \\
    & \simeq_{O(\eta^{2})} - \nabla \cdot \left(
    - \wagrad (\rho_{t})(x)\rho_{t}^{\wass} + \eta \bm{j}_{t} \rho_{t}^{\wass} \right) \\
    &= - \nabla \cdot \left( \rho_{t}^{\wass} \left(
    - \wagrad (\rho_{t})(x) + \eta \bm{j}_{t} \right) \right)
\end{align}
So that for this order, we may interchangeably work with the flow on $\rho_{t}'$ or $\rho_{t}^{\wass}$ as appropriate, as either are sufficient to identify the next order $\bm{j}_{t}$ of the correction.

In particular, since we have access to the Wasserstein gradient flow measure $\rho_{t}^{\wass}$ which follows $\partial_{t} \rho_{t}^{\wass} = + \nabla \cdot (\rho_{t}^{\wass}  \wagrad (\rho_{t})(x))$ through the tangent vector $-\wagrad (\rho_{t})(x) \in T_{\rho_{t}} {P}_{ac}(\mathbb{R}^{d})$, it suffices to work with this measure to identify the implicit correction term of the JKO scheme $\jkoreg$ and its velocity $\bm{j}$. Our aim is to match orders of $\eta$ in our derivation to find the modified field to first order. 

Now, let us evaluate the JKO loss at the optimal solution at step $(n+1)$ and take the expansion of $\rho^{(n+1,\eta)}$ 
\begin{align*}
 \Phi_{\eta} &= J(\rho^{(n+1,\eta)} ) + \frac{1}{2 \eta} W_{2}^{2}(\rho^{(n, \eta)}, \rho^{(n+1,\eta)} ) \\
 & = J(\rho^{(n+1,\eta)} ) + \frac{1}{2 \eta} W_{2}^{2}\left(\rho^{(n, \eta)}, \rho^{(n,\eta)} + \eta \frac{\partial }{\partial t}\rho' + \frac{\eta^{2}}{2} \frac{\partial^{2} }{\partial t^{2}} \rho' + \xi \right) \\
 & = J(\rho^{(n+1,\eta)} ) + \frac{1}{2 \eta} W_{2}^{2}\left(\rho^{(n, \eta)}, \rho^{(n,\eta)} + \eta \frac{\partial }{\partial t} \rho^{\wass}_{t} + \eta^{2} \left( \jkoreg + \frac{1}{2} \frac{\partial^{2} }{\partial t^{2}} \rho'  \right) + \eta^{3} g_{2}(\rho) + o(\eta^{2})  \right).
\end{align*}
Recall that the first variation of a functional $\mathcal{F}: {P}_{ac}(\mathbb{R}^{d}) \to \mathbb{R}$ at $\rho$, denoted $\delta \mathcal{F}(\rho): \mathbb{R}^{d} \to \mathbb{R}$, is defined by
\[
\lim_{\epsilon \to 0} \frac{\mathcal{F}(\rho + \epsilon \chi) - \mathcal{F}(\rho)}{\epsilon} = 
\delta\mathcal{F} [\rho, \chi] =\int_{\Omega} \frac{\delta \mathcal{F}(\rho)}{\delta\rho} d\chi
\]
for all signed measures $\chi$ such that $\rho + \epsilon \chi$ is closed in ${P}_{ac}(\mathbb{R}^{d})$. Higher-order variations are defined similarly by the formula
\[
 \frac{\partial^{k} }{ \prod_{i=1}^{k} \partial \epsilon_{i}} \mathcal{F}(\rho+\epsilon_{1}\chi_{1} + \epsilon_{2} \chi_{2} + \cdots ) \mid_{\epsilon_{i} = 0} = \delta^{k} \mathcal{F}[\rho,\chi_{1},\cdots,\chi_{k}] = \int \frac{\delta^{k} \mathcal{F}}{ \delta \rho^{k}}  d\chi_{1} \cdots d\chi_{k}.
\]
And using these, one can collect appropriate terms by their order. By assumption (A5)(i) and (A5)(ii), the functional $J$ is twice Fréchet differentiable on an open set $U$ with variations bounded by the bilinear form $D^{2}J(\rho)$. This allows us to take a variational expansion in the higher-variations \cite{Courant_Hilbert_2009} as
\[
\mathcal{F}(\rho+\eta \chi) = \mathcal{F}(\rho) +
\eta\int_{\Omega} \frac{\delta \mathcal{F}(\rho)}{\delta\rho} d\chi + \eta^{2}\iint_{\Omega}\frac{1}{2}\frac{\delta^{2} \mathcal{F}(\rho)}{\delta\rho \,\delta\rho'} d\chi d\chi' + \cdots
\]
Along the functional direction $\chi = \chi_{1} + \eta \chi_{2}$ this becomes
\begin{align*}
\mathcal{F}(\rho+\eta \chi) &= \mathcal{F}(\rho) +
\eta\int_{\Omega} \frac{\delta \mathcal{F}(\rho)}{\delta\rho} d(\chi_{1} + \eta \chi_{2}) + \eta^{2}\iint_{\Omega}\frac{1}{2}\frac{\delta^{2} \mathcal{F}(\rho)}{\delta\rho \,\delta\rho'} d(\chi_{1} + \eta \chi_{2}) d(\chi_{1} + \eta \chi_{2})' \\
&=\mathcal{F}(\rho) +
\eta\int_{\Omega} \frac{\delta \mathcal{F}(\rho)}{\delta\rho} d\chi_{1} + 
\eta^{2} \left( \int_{\Omega} \frac{\delta \mathcal{F}(\rho)}{\delta\rho} d \chi_{2} + \frac{1}{2}\iint_{\Omega}\frac{\delta^{2} \mathcal{F}(\rho)}{\delta\rho \,\delta\rho'} d\chi_{1}  d\chi_{1}' \right).
\end{align*}
Now, let us apply the $\eta$-expansion above to each term of $\Phi_{\eta}$. For the energy term $J$, one has the Taylor expansion of the functional given in its variational derivatives
\begin{align*}
J(\rho^{(n+1, \eta)}) &= J\left(\rho^{(n,\eta)} + \eta \frac{\partial }{\partial t} \rho^{\wass}_{t} + \eta^{2} \left( \jkoreg + \frac{1}{2} \frac{\partial^{2} }{\partial t^{2}} \rho'  \right) + \eta^{3} g_{2}(\rho) + o(\eta^{2}) \right) \\
&= J(\rho^{(n,\eta)}) + \eta \int \frac{\delta J (\rho^{(n,\eta)})}{\delta \rho} d\left(\frac{\partial }{\partial t} \rho^{\wass}_{t}\right) \\  &+
\frac{\eta^{2}}{2} \left(
2\int \frac{\delta J}{ \delta \rho} d\left( \jkoreg + \frac{1}{2} \frac{\partial^{2} }{\partial t^{2}} \rho'
\right) +
\int \frac{\delta^{2} J}{ \delta \rho(x) \delta \rho(x')} d\left(\frac{\partial }{\partial t} \rho^{\wass}_{t}\right)(x)d\left(\frac{\partial }{\partial t} \rho^{\wass}_{t}\right)(x') 
\right)
+ o(\eta^{2}).
\end{align*}
Likewise, for the squared Wasserstein distance we have
\begin{align*}
&W_{2}^{2}\left(\rho^{(n, \eta)}, \rho^{(n,\eta)} + \eta \frac{\partial }{\partial t} \rho^{\wass}_{t} + \eta^{2} \left( \jkoreg + \frac{1}{2} \frac{\partial^{2} }{\partial t^{2}} \rho'  \right) + \eta^{3} g_{2}(\rho) + o(\eta^{2})  \right) \\
& \quad = \underbrace{W_{2}^{2}(\rho^{(n, \eta)}, \rho^{(n, \eta)})}_{=0} + \eta \int \frac{\delta W_{2}^{2} (\rho^{(n,\eta)}, \rho)}{\delta \rho} d\left(\frac{\partial }{\partial t} \rho^{\wass}_{t}\right) \\
&\quad  + \frac{\eta^{2}}{2} \left(
2 \int \frac{\delta W_{2}^{2}(\rho^{(n, \eta)}, \rho)}{ \delta \rho} d\left( \jkoreg + \frac{1}{2} \frac{\partial^{2} }{\partial t^{2}} \rho'
\right) +
\int \frac{\delta^{2} W_{2}^{2}(\rho^{(n, \eta)}, \rho)}{ \delta \rho(x) \delta \rho(x')} d\left(\frac{\partial }{\partial t} \rho^{\wass}_{t}\right)(x)d\left(\frac{\partial }{\partial t} \rho^{\wass}_{t}\right)(x') 
\right).
\end{align*}
Thus, collecting the terms, we have the following
\begin{align*}
    \Phi_{\eta} &= J(\rho^{(n,\eta)})  + \eta \left(
    \int \frac{\delta J (\rho^{(n,\eta)})}{\delta \rho} + \frac{1}{2 \eta}\frac{\delta W_{2}^{2} (\rho^{(n,\eta)}, \rho)}{\delta \rho} d\left(\frac{\partial }{\partial t} \rho^{\wass}_{t}\right)
    \right) \\
    & + \frac{\eta^{2}}{2} \biggl(
    \int 2\frac{\delta J}{ \delta \rho} + \frac{1}{\eta} \frac{\delta W_{2}^{2}(\rho^{(n, \eta)}, \rho)}{ \delta \rho} d\left( \jkoreg + \frac{1}{2} \frac{\partial^{2} }{\partial t^{2}} \rho'
    \right) \\
    & +
    \int \left( \frac{\delta^{2} J}{ \delta \rho(x) \delta \rho(x')} + \frac{1}{2 \eta}\frac{\delta^{2} W_{2}^{2}(\rho^{(n, \eta)}, \rho)}{ \delta \rho(x) \delta \rho(x')} \right) d\left(\frac{\partial }{\partial t} \rho^{\wass}_{t}\right)(x)d\left(\frac{\partial }{\partial t} \rho^{\wass}_{t}\right)(x') 
    \biggr).
\end{align*}
When distributing orders of $\eta$ this becomes
\begin{align*}
\Phi_{\eta} &= J(\rho^{(n,\eta)}) + \frac{1}{2} \int \frac{\delta W_{2}^{2} (\rho^{(n,\eta)}, \rho)}{\delta \rho} d(\partial_{t} \rho^{\wass}_{t}) \\
&+ \eta \biggl( \int \frac{\delta J (\rho^{(n,\eta)})}{\delta \rho} d(\partial_{t} \rho^{\wass}_{t}) + \frac{1}{2} \int \frac{\delta W_{2}^{2}(\rho^{(n, \eta)}, \rho)}{ \delta \rho} d\left( \jkoreg + \frac{1}{2} \frac{\partial^{2} }{\partial t^{2}} \rho'
    \right) \\
    &\qquad \qquad + \frac{1}{4} \iint  \frac{\delta^{2} W_{2}^{2}(\rho^{(n, \eta)}, \rho)}{ \delta \rho(x) \delta \rho(x')}  d(\partial_{t} \rho^{\wass}_{t})(x) d(\partial_{t} \rho^{\wass}_{t})(x') \biggr) + o(\eta^{2}).
\end{align*}
By Lemma~\ref{lemma:Wasserstein_SecondVar}, one finds that the second-variation is expressed, for $\wagrad \in L^{2}[\rho_{t}]$, as the following quadratic integral over the tangent vectors to $\rho_{t}$;
\begin{align*}
    \frac{1}{4}\iint  \frac{\delta^{2} W_{2}^{2}(\rho^{(n, \eta)}, \rho)}{ \delta \rho(x) \delta \rho(x')}  d(\partial_{t} \rho^{\wass}_{t})(x) d(\partial_{t} \rho^{\wass}_{t})(x') & = \int \left\lVert \wagrad \right\rVert_{2}^{2} d \rho^{(n,\eta)} \\
    &= \int \left\langle \wagrad, \wagrad
    \right\rangle d \rho^{(n,\eta)}\\
    &= -\int \wadel \nabla \cdot (\rho_t \wagrad )\\
    &=  -\int \wadel -\nabla \cdot (\rho_t -\wagrad ) \\
    &= -\int \wadel (\rho^{(n,\eta)}) \partial_{t} \rho_{t}^{\wass}.
\end{align*}
The integration of parts used in the third equality is justified by the fast decay of $\rho_{t}$ at infinity from Assumption (A4). In addition, by Assumption (A5) $\nabla \psi \in L^{2}(\rho_{t})$ so that the integral inner product is finite. Noting the energy-dissipation identity for gradient flows
\begin{align}
    \frac{d}{dt} J(\rho_{t}) =
\int \frac{\delta J}{\delta \rho} \partial_{t} \rho^{\wass}=
- \left\lVert \nabla \frac{\delta J}{\delta \rho } \right\rVert_{L^{2}(\rho_{t})}^{2}
\end{align}
this term directly cancels with the second variation of the Wasserstein distance, so that we find
\begin{align*}
\Phi_{\eta} &= J(\rho^{(n,\eta)}) + \frac{1}{2} \int \frac{\delta W_{2}^{2} (\rho^{(n,\eta)}, \rho)}{\delta \rho} d(\partial_{t} \rho^{\wass}_{t}) \\
&+ \eta \biggl( \int \frac{\delta J (\rho^{(n,\eta)})}{\delta \rho} d(\partial_{t} \rho^{\wass}_{t}) + \frac{1}{2} \int \frac{\delta W_{2}^{2}(\rho^{(n, \eta)}, \rho)}{ \delta \rho} d\left( \jkoreg + \frac{1}{2} \frac{\partial^{2} }{\partial t^{2}} \rho'
    \right) \\
    &-\int \wadel d(\partial_{t} \rho_{t}^{\wass}) + o(\eta^{2}) \\
& = J(\rho^{(n,\eta)}) + \frac{1}{2} \int \frac{\delta W_{2}^{2} (\rho^{(n,\eta)}, \rho)}{\delta \rho} d(\partial_{t} \rho^{\wass}_{t}) + \frac{\eta}{2} \int \frac{\delta W_{2}^{2}(\rho^{(n, \eta)}, \rho)}{ \delta \rho} d\left( \jkoreg + \frac{1}{2} \frac{\partial^{2} }{\partial t^{2}}\rho'
    \right) + o(\eta^{2})
\end{align*}
The stationarity condition implies that up to a constant $C$ we have
\begin{align*}
    \delta\Phi_{\eta}[\rho^{(n+1)},\chi] = 0, \quad \text{for all admissible }\chi.
\end{align*}
Analogously we have $\rho^{(n+1)} = \rho^{(n)} + \eta \partial_{t} \rho' + (\eta^{2}/2) \partial_{tt} \rho' + o(\eta^{2})$ which by our $C^{2}$ variational regularity gave us an expansion about $\chi = - \nabla \cdot (\rho \xi)$ of the form
\begin{align*}
    &\delta \Phi_{\eta}[\rho^{(n+1)},\chi] =  A_{0}[\chi] + \eta A_{1}[\chi] + \eta^{2} A_{2} [\chi] + \cdots = 0.
\end{align*}
Since this must hold for all $\eta > 0$ and admissible $\chi$, one requires all $A_{k}(\chi) = 0$. At $o(\eta^{0})$, we use the Euler-Lagrange stationary condition that $\nabla \frac{\delta}{\delta \rho} \Phi_{\eta} = 0$, where from \cite{Jordan1998} one has, for $\rho^{n}$ assumed to solve the prior JKO step, that
\[
\frac{1}{\eta}\,\varphi^{n}+\psi(\rho^{n})=\text{const},\qquad
\nabla\varphi^{n}+\eta\,\nabla\psi(\rho^{n})=0\quad\text{a.e.}
\]
and identify the term
\begin{align*}
    \nabla \frac{\delta J(\rho^{(n,\eta)})}{\delta \rho} + \frac{1}{2} \nabla \frac{\delta}{\delta \rho} \int \frac{\delta W_{2}^{2} (\rho^{(n,\eta)}, \rho)}{\delta \rho} d(\partial_{t} \rho^{\wass}_{t}).
\end{align*}
Proposition~\ref{prop:grad_first_var}, a standard result \cite{Lott2007,AGS2008,chewi2024statistical}, asserts that
\begin{align*}
    \frac{1}{2} \nabla \frac{\delta}{\delta \rho} \int \frac{\delta W_{2}^{2} (\rho^{(n,\eta)}, \rho)}{\delta \rho} -\nabla \cdot (\rho_{t} \mathbf{v}_{t}) = \frac{\left(2 \mathbf{v}_{t} \right) }{2} = \mathbf{v}_{t}
\end{align*}
where $\mathbf{v}_{t} := - \wagrad (\rho_{t})(x) := - \wagrad (x, t)$, so we have that
\begin{align*}
    &\nabla \frac{\delta J(\rho^{(n,\eta)})}{\delta \rho} + \frac{1}{2} \nabla \frac{\delta}{\delta \rho} \int \frac{\delta W_{2}^{2} (\rho^{(n,\eta)}, \rho)}{\delta \rho} d(\partial_{t} \rho^{\wass}_{t}) = \nabla \frac{\delta J(\rho^{(n,\eta)})}{\delta \rho} \,\, - \,\, \nabla \frac{\delta J(\rho^{(n,\eta)})}{\delta \rho} =0
\end{align*}
So that, centered on the modified flow $\rho_t'$, the lower-order term vanishes, consistent with the statement of Theorem~\ref{thrm:JKO_first_order} \cite{Jordan1998}. What remains, at order $\eta$ and $\eta^{2}$ in the velocity, are
\begin{align*}
    &\frac{\eta}{2} \nabla \frac{\delta}{\delta \rho} \int \frac{\delta W_{2}^{2}(\rho^{(n, \eta)}, \rho)}{ \delta \rho} d\left( \jkoreg + \frac{1}{2} \frac{\partial^{2} }{\partial t^{2}}\rho'
    \right) \\
    &+ \frac{\eta^{2}}{2} \nabla \frac{\delta}{\delta \rho}\left( \iint \frac{\delta^{2} J}{\delta\rho(x) \delta \rho(x')} \partial_{t} \rho^{\wass}(x) \partial_{t} \rho^{\wass}(x') + 2\int \frac{\delta J}{ \delta \rho} d\left( \jkoreg + \frac{1}{2} \frac{\partial^{2} }{\partial t^{2}}\rho'
\right) 
    \right) + o(\eta^{2}).
\end{align*}
The implicit regularizer is defined up to order $\eta$ by the first term, and we require the $o(\eta)$ component to be stationary. Thus, we consider the condition
\begin{align*}
\frac{\eta}{2} \nabla \frac{\delta}{\delta \rho} \int \frac{\delta W_{2}^{2}(\rho^{(n, \eta)}, \rho)}{ \delta \rho} d\left( \jkoreg + \frac{1}{2} \frac{\partial^{2} }{\partial t^{2}} \rho'
    \right) \simeq_{o(\eta^{2})} \frac{\eta}{2} \nabla \frac{\delta}{\delta \rho} \int \frac{\delta W_{2}^{2}(\rho^{(n, \eta)}, \rho)}{ \delta \rho} d\left( \jkoreg + \frac{1}{2} \frac{\partial^{2} }{\partial t^{2}}\rho^{\wass}
    \right).
\end{align*}
We now integrate with respect to $\chi = \jkoreg + \frac{1}{2} \partial_{tt} \rho' \in M(\mathbb{R}^{d})$ as the signed measure, and observe that to the order relevant for the JKO residual $\partial_{tt} \rho' \simeq_{o(\eta^{2})} \partial_{tt} \rho^{\wass}$. By Assumption (A5)(iii), $\psi(\rho_{t}) \in C^{1,2}_{x,t}$ and $\nabla \psi(\rho_{t}), \partial_{t}\nabla \psi(\rho_{t}) \in L^{2}(\rho_{t})$. Thus, from Proposition~\ref{prop:accn_HJ}, we know that
\begin{align}
    \frac{1}{2}\nabla\frac{\delta}{\delta\rho} \int \frac{\delta W_{2}^{2}(\rho^{(n,\eta)}, \rho) }{\delta \rho} - \nabla\cdot\left( \partial_{t}  \left( - \wagrad (t, x(t)) \rho_{t}\right) \right) = - \partial_{t} \wagrad + \frac{1}{2}\nabla \left\lVert \wagrad \right\rVert_{2}^{2}
\end{align}
and likewise, from Proposition~\ref{prop:grad_first_var} that
\begin{align*}
    \nabla \frac{\delta}{\delta \rho} \int \frac{\delta W_{2}^{2}(\rho^{(n,\eta)}, \rho) }{\delta \rho} -\nabla\cdot \left( \rho_{t} \bm{j}(x,t) \right) = 2 \,\bm{j}(x,t).
\end{align*}
Thus, we have that
\begin{align*}
    &\frac{\eta}{2} \nabla \frac{\delta}{\delta \rho} \int \frac{\delta W_{2}^{2}(\rho^{(n, \eta)}, \rho)}{ \delta \rho} d\left( \jkoreg + \frac{1}{2} \frac{\partial^{2} }{\partial t^{2}}\rho^{\wass}
    \right) \\
    &=\frac{\eta}{2} \nabla \frac{\delta}{\delta \rho} \int \frac{\delta W_{2}^{2}(\rho^{(n, \eta)}, \rho)}{ \delta \rho} - \nabla \cdot (\rho^{\wass} \bm{j})  + \frac{\eta}{4} \nabla \frac{\delta}{\delta \rho} \int \frac{\delta W_{2}^{2}(\rho^{(n, \eta)}, \rho)}{ \delta \rho}  - \nabla \cdot \left( \partial_{t}\left( - \wagrad(\rho_{t})(x) \rho^{\wass} \right) 
    \right) \\
    &= \frac{\eta}{2} \left( 2 \bm{j} \right) + \frac{\eta}{2} \left(
    - \partial_{t} \wagrad + \frac{1}{2}\nabla \left\lVert \wagrad \right\rVert_{2}^{2}
    \right) = \eta \left( \bm{j} \,\, + \frac{1}{2}\left(
    - \partial_{t} \wagrad + \frac{1}{2}\nabla \left\lVert \wagrad \right\rVert_{2}^{2}
    \right)
    \right) = 0
\end{align*}
And we identify $\bm{j}$ to be the quantity
\begin{align*}
    \bm{j} =  \frac{1}{2}\left(
     \partial_{t} \wagrad - \frac{1}{2}\nabla \left\lVert \wagrad \right\rVert_{2}^{2}
    \right)
\end{align*}
The modified flow that matches the solution of JKO for small $\eta > 0$ is therefore given by the PDE
\begin{align}
    \partial_{t} \rho_{t}' &= - \nabla \cdot \left(\rho_{t}' \left( -\wagrad (\rho'_{t}) + \eta \, \bm{j}_{t} 
\right) \right)\,\, \\
&=  - \nabla \cdot \left(\rho_{t}' \left( -\wagrad (\rho'_{t}) + \, \frac{\eta}{2}\nabla\left(
     \frac{\partial}{\partial t} \wadel(\rho'_{t}) - \frac{1}{2} \left\lVert \wagrad (\rho'_{t})\right\rVert_{2}^{2}
    \right)
\right) \right)\,\,\label{eq:mod_flow_JKO}
\end{align}

\begin{remark}
If $(\rho_{t},\phi_{t})$ solves the Benamou-Brenier geodesic in reverse time, and $\partial_{t} \phi_{t} - \frac{1}{2} |\mathbf{v}_{t}|^{2} = 0$, $\nabla\phi$ is a displacement geodesic velocity and thus $\partial_{t}\mathbf{v}_{t} - \frac{1}{2} \nabla |\mathbf{v}_{t}|^{2} = 0$ and $\nabla \mathcal{H}[\phi] = 0$.
\end{remark}
\end{proof}

\begin{remark}[On the Velocity-field Sign Convention.]
As the JKO scheme is an implicit (Backward Euler) integrator in Wasserstein space, the residual velocity field $\bm{j}_{t}$ would be conventionally signed positive in the reverse-time direction. To maintain consistency with the standard optimization convention (e.g. \cite{smith2021on}) which defines explicit Euler velocity as being positively signed in the forward-time direction, we invert the signage $\bm{j}_{t} \to -\bm{j}_{t}$ below.
\end{remark}

We next seek to identify an explicit functional whose gradient flow recovers \eqref{eq:mod_flow_JKO}. 

\begin{corollary}
    Supposing the assumptions of Proposition~\ref{prop:implicit_JKO} hold, one has that the modified flow \eqref{eq:mod_flow_JKO} matches a single JKO step to second-order so that
    \begin{align}
    W_{2} (\rho^{(n+1,\eta)}, \,\rho_{\eta}') \leq C\eta^{3} + o(\eta^{3}) = O(\eta^{3})
    \end{align}
\end{corollary}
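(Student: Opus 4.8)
The plan is to treat the claim as a \emph{one-step} local error estimate. After the time shift sending $n\eta$ to $0$, both the JKO iterate $\rho^{(n+1,\eta)}$ and the flowed measure $\rho'_\eta:=\Phi^\eta_\eta(\rho^{(n,\eta)})$ are produced from the common datum $\rho^{(n,\eta)}$: the former as the minimiser of $\Phi_\eta(\rho):=J(\rho)+\tfrac{1}{2\eta}W_2^2(\rho^{(n,\eta)},\rho)$, the latter by integrating \eqref{eq:mod_flow_JKO} for time $\eta$. First I would record the JKO optimality condition in the stationary form $\nabla\tfrac{\delta\Phi_\eta}{\delta\rho}(\rho^{(n+1,\eta)})=0$ (equivalently $\psi(\rho^{(n+1,\eta)})+\eta^{-1}\varphi^{n+1}=\mathrm{const}$, with $\varphi^{n+1}$ the $C^2$ Kantorovich potential supplied by (A4)), and note that the crude triangle inequality $W_2(\rho^{(n+1,\eta)},\rho'_\eta)\le W_2(\rho^{(n+1,\eta)},\rho^{\wass}_\eta)+W_2(\rho^{\wass}_\eta,\rho'_\eta)$ only yields $O(\eta^2)$: the first term is the single-step local error of JKO and the second is an $O(\eta)$ velocity gap integrated over time $\eta$, via Corollary~\ref{cor:W2_order_2_distance}. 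The extra order comes from the fact that these two $O(\eta^2)$ deviations from the base gradient flow \emph{share} their $\eta^2$ coefficient, which is exactly what the construction of $\bm j$ in Proposition~\ref{prop:implicit_JKO} enforces.

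The heart of the argument is therefore to establish $\bigl\lVert\nabla\tfrac{\delta\Phi_\eta}{\delta\rho}(\rho'_\eta)\bigr\rVert_{L^2(\rho'_\eta)}=O(\eta^2)$, i.e.\ that the modified flow \emph{nearly} satisfies the JKO stationarity condition. This is precisely the content of the variational expansion already carried out in the proof of Proposition~\ref{prop:implicit_JKO}: along the Eulerian Ansatz $\rho'=\rho^{(n,\eta)}+\eta\,\partial_t\rho^{\wass}+\eta^2(\jkoreg+\tfrac12\partial_{tt}\rho')+O(\eta^3)$ one writes $\delta\Phi_\eta[\rho^{(n+1)},\chi]=A_0[\chi]+\eta A_1[\chi]+\eta^2A_2[\chi]+\cdots$, the vanishing $A_0\equiv0$ reproduces the first-order consistency of Theorem~\ref{thrm:JKO_first_order} (using Proposition~\ref{prop:grad_first_var} and the energy-dissipation identity to cancel the second variation of $W_2^2$), and the choice of $\bm j$ in \eqref{eq:mod_flow_JKO} is exactly the one that makes $A_1\equiv0$ (using Proposition~\ref{prop:accn_HJ} and Lemma~\ref{lemma:Wasserstein_SecondVar}). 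Assumptions (A2)--(A4) give uniform-in-$\eta$ bounds on $\nabla\psi$, $\partial_t\nabla\psi$, $\delta^2J$, $\delta^2W_2^2$ and the Hessian of $\varphi^{n+1}$ along the $O(\eta)$-length geodesic arc joining $\rho^{(n,\eta)}$ to $\rho^{(n+1,\eta)}$, so the remaining terms $A_2,A_3,\dots$ and the undetermined higher-order flow corrections $g_2,\dots$ are controlled uniformly and feed into the stationarity residual only at $O(\eta^2)$.

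The final step converts approximate stationarity into an $O(\eta^3)$ distance. Since $W_2(\rho^{(n,\eta)},\rho'_\eta)=O(\eta)$ and $W_2(\rho^{(n,\eta)},\rho^{(n+1,\eta)})=O(\eta)$ (the latter by testing the JKO inequality with $\rho=\rho^{(n,\eta)}$), both measures lie in an $O(\eta)$ ball around $\rho^{(n,\eta)}$; there the proximal term $\tfrac{1}{2\eta}W_2^2(\rho^{(n,\eta)},\cdot)$ is $\tfrac1\eta$-convex along generalised geodesics and dominates the bounded second variation of $J$ from (A3), so $\Phi_\eta$ is locally $\mu_\eta$-geodesically convex with $\mu_\eta\ge\tfrac{1}{2\eta}$ for small $\eta$ (under the standing $\lambda$-geodesic convexity of $J$ this holds globally). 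The elementary strong-convexity bound — if $\mu_\eta>0$ and $\lVert\nabla\tfrac{\delta\Phi_\eta}{\delta\rho}(\rho'_\eta)\rVert_{L^2(\rho'_\eta)}\le\varepsilon$ then $W_2(\rho'_\eta,\rho^{(n+1,\eta)})\le\varepsilon/\mu_\eta$ — then gives $W_2(\rho^{(n+1,\eta)},\rho'_\eta)\le 2\eta\cdot O(\eta^2)=O(\eta^3)$, which is the assertion.

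I expect the main obstacle to be the middle step: upgrading the formal identities ``$A_0\equiv A_1\equiv0$ by construction'' into an honest $L^2(\rho'_\eta)$ bound of order $\eta^2$ on $\nabla\tfrac{\delta\Phi_\eta}{\delta\rho}(\rho'_\eta)$. This requires (i) uniform-in-$\eta$ control of every second-order variational quantity along the short arc, which is what (A2)--(A4) are designed to provide; (ii) checking that the Eulerian expansion of $\rho'$ is valid to the needed order, i.e.\ that $\partial_{tt}\rho'=\partial_{tt}\rho^{\wass}+o(\eta)$ and that the unspecified $g_2$ enters the residual only at $O(\eta^2)$; and (iii) confining all geodesic Taylor expansions to the smoothness neighbourhood via the a priori estimate $W_2(\rho^{(n,\eta)},\rho^{(n+1,\eta)})=O(\eta)$. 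Once this estimate is in hand the strong-convexity conversion of the last step is routine.
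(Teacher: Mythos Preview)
Your proposal is correct in spirit but takes a genuinely different route from the paper. The paper's proof is much more direct: it observes that Proposition~\ref{prop:implicit_JKO} has already established that the velocity gap between the (implicitly defined) flow producing the JKO iterate and the truncated second-order modified flow is uniformly $O(\eta^2)$ in $L^2(\rho_t)$, and then simply feeds $k=2$ into the Stability Lemma~\ref{lem:stability} (equivalently Corollary~\ref{cor:W2_order_2_distance}) to obtain $W_2(\rho^{(n+1,\eta)},\rho'_\eta)\le \frac{C\eta^2}{L}(e^{L\eta}-1)=C\eta^3+o(\eta^3)$. No appeal to the strong convexity of $\Phi_\eta$ or to approximate stationarity of $\rho'_\eta$ is made; the entire argument is a Gr\"onwall step on the velocity gap.

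Your approach---show $\bigl\lVert\nabla\tfrac{\delta\Phi_\eta}{\delta\rho}(\rho'_\eta)\bigr\rVert_{L^2(\rho'_\eta)}=O(\eta^2)$ and then divide by the $\tfrac{1}{2\eta}$ strong-convexity modulus of the proximal functional---is a legitimate alternative and is conceptually attractive because it makes the role of the backward-Euler structure transparent. The cost is that you must (i) make rigorous the passage from the order-by-order identities $A_0\equiv A_1\equiv 0$ in the proof of Proposition~\ref{prop:implicit_JKO} to a genuine $L^2$ bound on the Wasserstein gradient of $\Phi_\eta$ at $\rho'_\eta$ (you correctly flag this as the main obstacle), and (ii) invoke the $\lambda$-geodesic convexity of $J$ to secure the strong-convexity conversion, which the paper's velocity-gap route does not need. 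The paper's route is shorter precisely because it recycles the Gr\"onwall machinery already built for Corollary~\ref{cor:W2_order_2_distance}, whereas yours introduces a new ingredient (the proximal strong convexity) at the last step.
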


\begin{proof}
    Suppose $\Phi_{\eta}$ is variationally stationary to $o(\eta^{2})$ with respect to both the left and right side of the Ansatz \eqref{eq:Ansatz}
    \begin{align}\label{eq:Ansatz}
    \rho^{(n+1)} = \rho^{(n)} + \eta \frac{\partial }{\partial t}\rho' + \frac{\eta^{2}}{2} \frac{\partial^{2} }{\partial t^{2}} \rho' + \xi
    \end{align}
    With a.c. curve $\rho'$ on $t \in [n\eta, (n+1)\eta]$ satisfying boundary $\rho'(n\eta) = \rho^{(n)}$. From Proposition~\ref{prop:implicit_JKO} we have the velocity gap of the JKO flow to the truncated second-order flow is uniformly bounded as
    \begin{align*}
    &\left\lVert \bm{v}_{\mathrm{JKO}}(t) - \bm{v}_{\mathrm{JKO}}^{(2)}(t) \right\rVert_{L^{2}(\rho_{t})}\leq C \eta^{2}, \quad \text{for all }t\in [n\eta, (n+1)\eta]
    \end{align*}
    Thus, recalling Lemma~\ref{lem:stability} we have
    \begin{align*}
    &W_{2} (\rho_{t}, \rho_{t}') \leq \left( W_{2} (\rho_{t_0}, \rho_{t_0}' )  + \frac{C\eta^{2}}{L} \right) e^{L(t-t_{0})} - \frac{C\eta^{2}}{L}
\end{align*}
For $L$ the Lipschitz constant of the flow. And for time $t=\eta$, and common initial datum $\rho^{(n)}$ one has
 \begin{align*}
    W_{2} (\rho^{(n+1,\eta)}, \rho_{\eta}') &\leq \frac{C\eta^{2}}{L}  e^{L\eta} - \frac{C\eta^{2}}{L} \\
    &= 
\frac{C\eta^{2}}{L}\left(1 + (L\eta) + \frac{L^{2}\eta^{2}}{2} + o(\eta^{2})
\right) - \frac{C\eta^{2}}{L} \\
&\simeq \frac{C\eta^{2}}{L}(L\eta) =C\eta^{3}.
\end{align*}
\end{proof}

\begin{theorem}[The Implicit Regularizer of the JKO Scheme.]\label{thrm:velocity_to_loss} Let $\phi := \frac{\delta J}{\delta \rho}$ denote the first variation of the energy functional $J: {P}_{ac}(M) \to \mathbb{R}$. Suppose that the assumptions of Proposition~\ref{prop:implicit_JKO} hold. Then, Wasserstein gradient flow on the modified loss function
\begin{align}\label{eq:JKO_mod_loss}
J^{\eta} = J -
    \frac{\eta}{4} \int \lVert \nabla \phi(\rho) \rVert_{2}^{2} d \rho
\end{align}
Coincides with the solution of JKO to order $\eta^{2}$.
\end{theorem}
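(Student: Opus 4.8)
The plan is to reduce everything to Proposition~\ref{prop:implicit_JKO} plus one functional-calculus identity. Proposition~\ref{prop:implicit_JKO} and its Corollary already show that the continuity equation driven by the modified velocity $v^{(2)}_{\mathrm{JKO}}(\rho) = -\nabla\phi(\rho) \mp \tfrac{\eta}{2}\nabla\mathcal H[\phi]$, where $\mathcal H[\phi] := \partial_t\phi - \tfrac12\norm{\nabla\phi}_2^2$, matches one JKO step to $O(\eta^3)$ in $W_2$; combined with the Lipschitz stability of Lemma~\ref{lem:stability} and the discrete Gr\"{o}nwall recursion sketched in \S\ref{sec:Theoretical_Res}, this upgrades to the global bound $\sup_{t\in[0,T]}W_2(\rho_t^\eta,\rho_{\lfloor t/\eta\rfloor}^{\mathrm{JKO},\eta}) = O_T(\eta^2)$. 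Hence it is enough to check that $v^{(2)}_{\mathrm{JKO}}$ is precisely the Wasserstein gradient field of the modified energy $J^\eta$, i.e.\ to establish the identity
\[
\nabla\frac{\delta}{\delta\rho}\Big(\frac{\eta}{4}\int_M\norm{\nabla\phi(\rho)}_2^2\,\rho(dx)\Big) = \frac{\eta}{2}\Big(-\nabla\partial_t\phi + \tfrac12\nabla\norm{\nabla\phi}_2^2\Big) = -\frac{\eta}{2}\nabla\mathcal H[\phi];
\]
granting this, $\nabla\tfrac{\delta J^\eta}{\delta\rho} = \nabla\phi + \tfrac\eta2\nabla\mathcal H[\phi]$, and the Wasserstein gradient flow $\partial_t\rho = \nabla\cdot\!\big(\rho\,\nabla\tfrac{\delta J^\eta}{\delta\rho}\big)$ runs with exactly the velocity $v^{(2)}_{\mathrm{JKO}}$ (after the sign normalization of the Remark on the velocity-field sign).

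To prove the identity I would compute the first variation of $\mathcal R(\rho) := \int_M\norm{\nabla\phi(\rho)}_2^2\,\rho(dx)$ directly, perturbing $\rho \mapsto \rho + \varepsilon\chi$ along an admissible signed measure $\chi$. Two contributions appear: varying the reference measure gives $\int\norm{\nabla\phi}_2^2\,d\chi$, and the $\rho$-dependence of $\phi(\rho) = \tfrac{\delta J}{\delta\rho}(\rho)$ gives $2\int\langle\nabla\phi,\nabla(L_\rho\chi)\rangle\,\rho(dx)$, where $L_\rho\chi := \int\tfrac{\delta^2 J}{\delta\rho(\cdot)\,\delta\rho(y)}\,\chi(dy)$ is the linearization of $\phi$ (this is where twice-Fréchet differentiability of $J$, assumption (A3), enters). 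Integrating the second term by parts — with no boundary contribution by the decay hypothesis in (A2) — gives $-2\int (L_\rho\chi)\,\nabla\!\cdot\!(\rho\nabla\phi)$; along the base Wasserstein gradient flow $\nabla\!\cdot\!(\rho\nabla\phi) = \partial_t\rho$, and using symmetry of the second variation together with $\partial_t\phi = L_\rho(\partial_t\rho)$ this equals $-2\int \partial_t\phi\,d\chi$. Therefore $\tfrac{\delta\mathcal R}{\delta\rho} = \norm{\nabla\phi}_2^2 - 2\,\partial_t\phi$, and taking $\tfrac\eta4$ times the Riemannian gradient yields the displayed formula. One caveat to address carefully: ``$\partial_t\phi$'' must be read as the intrinsic quantity $L_\rho(\nabla\!\cdot\!(\rho\nabla\phi))$; evaluated along the WGF of $J^\eta$ rather than the base flow it changes by $O(\eta)$, which perturbs $v^\eta$ only at order $\eta^2$ and is absorbed into the $O(\eta^3)$ one-step defect.

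With the identity in place the proof concludes as in \S\ref{sec:Theoretical_Res}: set $e_k := W_2(\rho_{k\eta}^\eta,\rho_k^{\mathrm{JKO},\eta})$ with $\rho_t^\eta$ the WGF on $J^\eta$; because its velocity coincides with $v^{(2)}_{\mathrm{JKO}}$, the only per-step error is the $O(\eta^2)$ Euler--Lagrange defect of Proposition~\ref{prop:implicit_JKO}, so the Corollary following it gives $W_2\big(\Phi_\eta^\eta(\rho),\mathcal J_\eta(\rho)\big)\le C\eta^3$, while Lemma~\ref{lem:stability} gives $W_2\big(\Phi_\eta^\eta(\mu),\Phi_\eta^\eta(\nu)\big)\le(1+L\eta)W_2(\mu,\nu)$ once the correction field is Lipschitz (part of (A3)); the recursion $e_{k+1}\le(1+L\eta)e_k + C\eta^3$ with $e_0 = 0$ then forces $e_k \le C_T\eta^2$ whenever $k\eta\le T$, which is the claim.

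The main obstacle is not any isolated estimate but the functional-analytic bookkeeping that makes the first-variation step legitimate: one needs $\phi = \tfrac{\delta J}{\delta\rho}$ to be itself differentiable with symmetric second variation (so the ``$\partial_t\phi$'' manipulation is valid), the integration by parts to carry no boundary term, and $\nabla\phi,\ \partial_t\nabla\phi \in L^2(\rho_t)$ so that the correction field is well-defined, $L^2$-close to the base field at order $\eta$, and Lipschitz — precisely (A1)--(A3). A secondary point, harmless for this local-in-$\eta$ statement but worth flagging, is that $-\tfrac\eta4|\partial J|^2$ is $W_2$-concave, so \emph{global} well-posedness of the WGF on $J^\eta$ is a separate question; the argument only asserts agreement with JKO for $\eta$ small on the interval where the hypotheses of Proposition~\ref{prop:implicit_JKO} hold.
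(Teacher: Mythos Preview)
Your proposal is correct and follows essentially the same route as the paper: both compute the first variation of $\mathcal R(\rho)=\int\|\nabla\phi(\rho)\|^2\,d\rho$ by perturbing $\rho\mapsto\rho+\varepsilon\chi$, obtain the two contributions $\int\|\nabla\phi\|^2\,d\chi$ and $2\int\langle\nabla\phi,\nabla L_\rho\chi\rangle\,d\rho$, integrate the latter by parts, and use $\nabla\!\cdot(\rho\nabla\phi)=\partial_t\rho$ together with symmetry of the second variation to identify it with $-2\int\partial_t\phi\,d\chi$, arriving at $\tfrac{\delta\mathcal R}{\delta\rho}=\|\nabla\phi\|^2-2\partial_t\phi$. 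Your write-up is in fact a bit more careful than the paper's in flagging the intrinsic meaning of $\partial_t\phi$ and the $W_2$-concavity caveat; the mathematical content is the same.
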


\begin{proof}
    We hope to show that variations of \eqref{eq:JKO_mod_loss} yield as an Euler-Lagrange condition the PDE of Proposition~\ref{prop:implicit_JKO}. To do so, let us consider the variation $\delta \left[ \int \lVert \nabla \phi(\rho) \rVert_{2}^{2} d \rho \right]$ by considering compact variations of the form $\rho_{(s)}=\rho + s \xi$. One then finds
\begin{align}\label{eq:var_sq_velocity}
    &\int \lVert \nabla \phi(\rho + s \xi) \rVert_{2}^{2} d (\rho + s \xi)
\end{align}
where, for $\phi = \wadel$, we assume sufficient regularity in Proposition~\ref{prop:implicit_JKO} to formally take a Gateux expansion of the form
\[
    \nabla \phi(\rho + s \xi) = \nabla \phi (\rho) + s \nabla \int  \frac{\delta \phi}{\delta \rho} d\xi + o(s)
\]
Thus, expanding \eqref{eq:var_sq_velocity} and taking $\frac{d}{ds} S[\rho_{(s)}] \mid_{s=0}$ by the variational principle, we find the first variation in $\rho$ as the integrand across the test function $\xi$
\begin{align*}
    &\int 
      \lVert \nabla \phi (\rho) \rVert_{2}^{2} d \xi + \int 2 \left\langle \nabla \phi (\rho), 
    \nabla \int  \frac{\delta \phi}{\delta \rho} d\xi
     \right\rangle d\rho\\
     &\qquad = \int \left( 
      \lVert \nabla \phi (\rho) \rVert_{2}^{2} + \int 2 \left\langle \nabla \phi (\rho), 
    \nabla \frac{\delta \phi}{\delta \rho} 
     \right\rangle \, d\rho \, \right) d\xi
\end{align*}
Since $\xi$ is an arbitrary test function, this implies that the first variation in $\rho$ is
\begin{align*} 
      \lVert \nabla \phi (\rho) \rVert_{2}^{2} + \int 2 \langle \nabla \phi (\rho), 
    \nabla \frac{\delta \phi}{\delta \rho} 
     \rangle \, d\rho \, 
\end{align*}
Now, recall that $\nabla \frac{\delta }{\delta \rho}[\phi] = \nabla \frac{\delta^{2} J}{\delta \rho^{2}}$, so that this becomes $\left\lVert \nabla \frac{\delta J}{\delta \rho} (\rho) \right\rVert_{2}^{2} + \int 2 \left\langle \nabla \frac{\delta J}{\delta \rho} (\rho), 
    \nabla \frac{\delta^{2} J}{\delta \rho^{2}} 
     \right\rangle \, d\rho \,$. With an integration by parts with vanishing boundaries we have
\begin{align*}
    \int \left\langle \nabla \frac{\delta J}{\delta \rho} (\rho), 
    \nabla \frac{\delta^{2} J}{\delta \rho^{2}} 
     \right\rangle \, d\rho = - \int \frac{\delta^{2} J}{\delta \rho^{2}} -\nabla \cdot \left(
    \rho -\nabla \frac{\delta J}{\delta \rho} (\rho)
     \right) = -\int \frac{\delta^{2} J}{\delta \rho^{2}} \partial_{t} \rho_{t}.
\end{align*}
By the standard identity for gradient flows that $\partial_{t} \mathcal{F}(\rho_{t}) = \int \delta \mathcal{F} (\rho_{t}) \partial_{t} \rho_{t}$ \cite{AGS2008, chewi2024statistical}, we have that for $\rho_{t}$ the gradient flow density $-\int \frac{\delta}{\delta \rho} \left( \frac{\delta J}{\delta \rho} \right) \partial_{t} \rho_{t} = - \partial_{t}\frac{\delta J}{\delta \rho}(\rho_{t})$. And thus conclude for $\phi = \frac{\delta J}{\delta \rho}$ that
\begin{align*}
    \frac{\delta}{\delta \rho} \left( \frac{1}{2}\int \lVert \nabla \phi(\rho) \rVert_{2}^{2} d \rho \right) &= -\partial_{t} \phi_{t} + \frac{1}{2} \lVert \nabla \phi_{t} \rVert_{2}^{2} \\
    \nabla\frac{\delta}{\delta \rho} \left( \frac{1}{2}\int \lVert \nabla \phi(\rho) \rVert_{2}^{2} d \rho \right) &= -\nabla \partial_{t} \phi_{t} + \frac{1}{2} \nabla \lVert \nabla \phi_{t} \rVert_{2}^{2}
\end{align*}
Thus, the correction in \eqref{eq:mod_flow_JKO} arises from an energy whose gradient of first-variation has the form
\begin{align}
    \notag \nabla\frac{\delta H}{\delta \rho} &=
    \frac{\eta}{2} \left( -\nabla \partial_{t} \phi_{t} + \frac{1}{2} \nabla \lVert \nabla \phi_{t} \rVert_{2}^{2} \right) = \nabla \frac{\delta}{\delta \rho} \left( \frac{\eta}{4} \int \lVert \nabla \phi(\rho) \rVert_{2}^{2} d \rho
    \right)  \label{eq:JKO_PDE_to_Loss}
\end{align}
So we conclude the implicit regularization term of the JKO scheme to be the quantity:
\begin{align}
    H(\rho)=\frac{\eta}{4} \int \lVert \nabla \phi(\rho) \rVert_{2}^{2} d \rho
\end{align}
\end{proof}

\paragraph{Riemannian setting.}
Let $(M,g)$ be Riemannian.
For densities $\rho \in{P}_{ac}(M)$ with smooth positive densities w.r.t.\ the Riemannian volume-form $\mathrm{vol}_g$ defined differentially by
\begin{align*}
    \mathrm{d}\mathrm{vol}_g (x) = \sqrt{\det g} \,\,x^{1} \land x^{2} \land \cdots  \land x^{n}
\end{align*}
Denote smooth test functions $f$ and smooth tangent vector fields $X$. Suppose in addition that the boundary conditions which kill boundary fluxes. The Riemannian analogue of the continuity equation and integration by parts is given by
\begin{align*}
    \partial_t \rho_t &= \nabla_g \cdot\!\left( \rho_t \, \nabla_g \frac{\delta J}{\delta \rho} \right), \\
    \displaystyle \int g(\nabla_g f, X)\,d\mathrm{vol}_g &= -\int f\, \nabla_g\!\cdot X\, d\mathrm{vol}_g.
\end{align*}
By exchanging in Proposition~\ref{prop:implicit_JKO} the Euclidean continuity equation and integration by parts identity for the Riemannian analogues, since the identities in Proposition~\ref{prop:grad_first_var}, Proposition~\ref{prop:accn_HJ}, and
Lemma~\ref{lemma:Wasserstein_SecondVar} extend directly to the Riemannian setting: Proposition~\ref{prop:implicit_JKO} holds with $\lVert \cdot \rVert_{2}^{2}$ substituted for $\lVert \cdot \rVert_{g}^{2}$.

\subsection{Other Cases of the Implicit Bias.}\label{sec:JKO_IB_other}

We recounted in the main text the implicit regularization of JKO on the entropy functional, the KL divergence, and the classical linear potential energy. Below, we observe a number of other common functionals and their associated implicit regularization under the JKO scheme.

\subsubsection{Interaction Energy.}

If we consider the mean-field interacting particle energy functional
\begin{align*}
J(\rho) = \frac{1}{2}\iint \rho(dx) \rho(dy) K(x-y) , \quad \frac{\delta J}{\delta \rho} = \int \rho(dy) K(x-y) , \quad \nabla \frac{\delta J}{\delta \rho} = \int \nabla K(x-y)\,\rho(dy) 
\end{align*}
defined with respect to a symmetric kernel $K: \mathbb{R}^{d} \to \mathbb{R}$ satisfying $K(x) = K(-x)$, then 
\begin{align}H(\rho)&=\frac{\eta}{4} \int \left\langle \int \nabla K(x-y)\,\rho(dy), \int \nabla K(x-z)\,\rho(dz) \right\rangle d \rho(x) \\
    &=\frac{\eta}{4} \iiint \left\langle \nabla K(x-y)\,, \nabla K(x-z)\, \right\rangle \rho(dx) \rho(dy) \rho(dz) 
\end{align}

Thus, JKO penalizes non-alignment between the pairwise interaction forces.

\subsubsection{Internal Energy and Porous-Medium Equation.}

Recall the interaction energy, its variation, and its Wasserstein gradient
\begin{align*}
J(\rho) =  \int U(\rho(x))dx , \quad \frac{\delta J}{\delta \rho} = U'(\rho(x)), \quad \nabla \frac{\delta J}{\delta \rho} = \nabla U'(\rho(x))
\end{align*}
Then, one has
\begin{align}\label{eq:internal-ib}
H^{\eta}(\rho)= \frac{\eta}{4} \int_M \lVert \nabla U'(\rho(x))\rVert_{2}^{2} \,\rho(dx) 
\end{align}
In the porous-medium equation, the internal energy is $U(\rho(x)) = \frac{1}{m-1} \rho^{m}$ and thus
\[
\frac{\delta J}{\delta \rho} =
\frac{m}{m-1} \rho^{m-1} , \quad  \nabla \frac{\delta J}{\delta \rho} = \frac{m}{m-1} \nabla \rho^{m-1} = m\, \rho^{m-2} \,\nabla \rho
\]
So that, in Fisher-form, one has
\begin{align}\label{eq:porous-ib}
H^{\eta}(\rho)= \frac{\eta m^{2}}{4} \int_M \lVert \rho^{m-2}\nabla \rho\rVert_{2}^{2} \,\rho(dx) &=\frac{\eta m^{2}}{4} \int_M \lVert \rho^{m-2}\nabla \rho\rVert_{2}^{2} \,\rho(dx) =\frac{\eta m^{2}}{4} \int_M \lVert \nabla \rho\rVert_{2}^{2} \,\rho^{2m-3}(dx)
\end{align}
Meanwhile, rewriting into Dirichlet form, one finds
\begin{align*}
    &\frac{\eta m^{2}}{4} \int_M \lVert \nabla \rho\rVert_{2}^{2} \,\rho^{2m-3}(dx) = \frac{\eta m^{2}}{4} \int_M \lVert \nabla \rho \,\rho^{(2m-3)/2}\rVert_{2}^{2} \,(dx) \\
    &= \frac{\eta m^{2}}{4} \int_M \left\lVert \frac{2}{2m-1} \nabla (\rho^{(2m-1)/2} )\right\rVert_{2}^{2} \,(dx) \\
    & H^{\eta}(\rho)= \frac{\eta m^{2}}{(2m-1)^{2}} \int_M \left\lVert \nabla (\rho^{(2m-1)/2} )\right\rVert_{2}^{2} \,(dx)
\end{align*}
This is a classical Dirichlet energy of a power $\rho^{(2m-1)/2}$.

\subsection{Wasserstein Barycenter.}

Recall the $K$-Wasserstein barycenter energy functional for $\lambda_{k}:\sum_{k}\lambda_{k} =1$
\begin{align*}
    J(\rho) = \sum_{k=1}^{K} \lambda_{k} W_{2}^{2}(\rho , \mu_{k})
\end{align*}
We have the canonical first variation and $W_2$ gradient \cite{AGS2008}
\begin{align*}
\frac{\delta J}{\delta \rho} = 2\sum_{k=1}^{K} \lambda_{k}\,\varphi^{\rho \to \mu_{k}}, \quad \nabla \frac{\delta J}{\delta \rho} = 2 \sum_{k=1}^{K} \lambda_{k} \,
(T^{\rho \to \mu_{k}}(x) - x)
\end{align*}
For $\varphi^{\rho \to \mu_{k}}$ the Kantorovich potential and $T^{\rho \to \mu_{k}}$ the Monge map between $\rho$ and each measure $\mu_{k}$. Then, the implicit bias of JKO is written as
\begin{align*}
&H^{\eta}(\rho) = \frac{\eta}{4} \int \Big\lVert  2 \sum_{k=1}^{K} \lambda_{k} \,
(T^{\rho \to \mu_{k}}(x) - x)\Big\rVert_{2}^{2} \rho(dx) =\eta \int \Big\lVert   \sum_{k=1}^{K} \lambda_{k} \,
(T^{\rho \to \mu_{k}}(x) - x) \Big\rVert_{2}^{2} \rho(dx)
\end{align*}
Thus JKO penalizes the expected squared-norm of the barycentric displacement of $\rho$ from the target measures.

\begin{remark}[First Variation of the Fisher Functional.]\label{rem:Fisher_var}
Recall the Fisher Information Functional \cite{Amari2016}
    \begin{align*}
    \int \lVert \nabla \log \rho \rVert_{2}^{2} \,\rho(dx) 
\end{align*}
The first variation and Wasserstein gradient are given by Theorem 4.2 of \cite{Gianazza2008} to be
\begin{equation}\label{eq:fv_fish}
\frac{\delta I}{\delta \rho} [\rho] = -4\, \frac{\Delta \sqrt{ \rho}}{\sqrt{\rho}} , \quad \nabla\frac{\delta I}{\delta \rho} [\rho] = -4\, \nabla \left( \frac{\Delta \sqrt{ \rho}}{\sqrt{\rho}} \right)
\end{equation}
\end{remark}
\begin{proof}\label{proof:fv_fisher}
    We briefly show ~\ref{rem:Fisher_var}. Let us suppose, as standard, a compact support $\Omega$ with vanishing boundary conditions. Further, suppose we take compact variations of the form $\rho_{(s)} = \rho + s \dot{\phi}$. One first simply takes the $s$-derivative
    \begin{align*}
        \frac{d}{ds} \int \lVert \nabla \log{\rho_{(s)}} \rVert_{2}^{2} \,\rho_{(s)}(dx)& =  \int \frac{d}{ds}\lVert \nabla \log{\rho_{(s)}} \rVert_{2}^{2} \,\rho_{(s)}(dx) + \int \lVert \nabla \log{\rho_{(s)}} \rVert_{2}^{2} \,\frac{d}{ds}\rho_{(s)}(dx) \\
        &= 2\int \langle \nabla \log{\rho_{(s)}},  \frac{d}{ds} \left( \frac{\nabla \rho_{(s)}}{\rho_{(s)}} \right)\rangle \,\rho_{(s)}(dx) + \int \lVert \nabla \log{\rho_{(s)}} \rVert_{2}^{2} \,\dot{\phi}(dx) \\
        &= 2\int \langle \nabla \log{\rho_{(s)}},   \left( \frac{\nabla \dot{\phi}}{\rho_{(s)}} - \frac{\nabla \rho_{(s)} \dot{\phi}}{\rho_{(s)}^{2}}\right)\rangle \,\rho_{(s)}(dx) + \int \lVert \nabla \log{\rho_{(s)}} \rVert_{2}^{2} \,\dot{\phi}(dx) \\
        \end{align*}
        Then, after identifying the norm of the score function from the ratio
        \begin{align*}
         \int \dot{\phi} \left(
        \lVert \nabla \log \rho_{(s)} \rVert_{2}^{2}  -
        2 \left\langle \nabla \log \rho_{(s)}, \nabla \log \rho_{(s)}
        \right\rangle \right)(dx) + 2\int \left\langle \nabla \log \rho_{(s)}, \nabla \dot{\phi} \right\rangle dx, \end{align*}
         and canceling terms, we find it equals
        \begin{align*}
         -\int \dot{\phi} \left(
        \lVert \nabla \log \rho_{(s)} \rVert_{2}^{2}  \right) dx + 2\int \left\langle \nabla \log \rho_{(s)}, \nabla \dot{\phi} \right\rangle dx.
    \end{align*}
    Lastly, with an integration by parts on the rightmost term, we identifying the Laplacian of the log-density
    \[
    \int_{\Omega} \left\langle \nabla \log \rho_{(s)}, \nabla \dot{\phi} \right\rangle dx = - \int \Delta \log \rho_{(s)}\dot{\phi}.
    \]
    Thus, from
    \begin{align*}
    &-\int \dot{\phi} \left(
        \lVert \nabla \log \rho_{(s)} \rVert_{2}^{2}   +2  \Delta \log \rho_{(s)} \right) dx \,\,\bigg|_{s=0} = -\int \dot{\phi} \left(
        \lVert \nabla \log \rho \rVert_{2}^{2}   +2  \Delta \log \rho \right) dx,
    \end{align*}
    One identifies that the variation against test functions $\dot\phi$ is given by
    \begin{align*}
        \frac{\delta I}{\delta \rho} [\rho] = -\lVert \nabla \log \rho \rVert_{2}^{2}   -2  \Delta \log \rho
    \end{align*}
    To identify with the Bohmian form \eqref{eq:fv_fish} of Quantum-Drift-Diffusion \cite{Gianazza2008}, note the identity
    \begin{align*}
        \frac{\Delta \sqrt{ \rho}}{\sqrt{\rho}} &= \frac{1}{\sqrt{\rho}} \tr \nabla^{2} \sqrt{\rho}\\
        &= \frac{1}{2\sqrt{\rho}} \tr \left(  \frac{\nabla^{2} \rho}{\sqrt{\rho}} - \frac{1}{2} \frac{\nabla \rho \nabla \rho^{\top}}{\rho^{3/2}}   \right) \\
        &= \frac{1}{2\sqrt{\rho}} \tr \left(  \frac{\nabla^{2} \rho}{\sqrt{\rho}} - \frac{1}{2} \frac{\nabla \rho \nabla \rho^{\top}}{\rho^{3/2}}   \right)\\
        &=\frac{1}{2\rho} \tr \nabla^{2} \rho - \frac{1}{4}  \left\lVert \frac{\nabla \rho}{\rho} \right\rVert_{2}^{2} \\
        &= \frac{1}{2} \tr \frac{\nabla^{2} \rho}{\rho} - \frac{1}{4}  \left\lVert \nabla \log \rho \right\rVert_{2}^{2} = \frac{1}{2} \Delta \log \rho + \frac{1}{2} \left\lVert \nabla \log \rho \right\rVert_{2}^{2} - \frac{1}{4}  \left\lVert \nabla \log \rho \right\rVert_{2}^{2}
        \end{align*}
        Thus, 
    \begin{align*}
        &\frac{\delta I}{\delta \rho} [\rho] = -2 \Delta \log \rho -  \left\lVert \nabla \log \rho \right\rVert_{2}^{2} = -4\, \frac{\Delta \sqrt{ \rho}}{\sqrt{\rho}}
    \end{align*}
\end{proof}

\section{Bures-Wasserstein Flows.}\label{sec:BW-flows}

While the JKO scheme \eqref{eq:JKO} rarely admits analytic solutions, the work \cite{Halder2017} solves the special case of linear filtering problems in Bures-Wasserstein space, $\mathrm{BW}(\mathbb{R}^{d})\subset{P}_{ac}(\mathbb{R}^{d})$. This offers a unique test-bed for the case where the update to JKO is known analytically. In particular for a symmetric Hurwitz matrix $A$, \cite{Halder2017} consider the following energy functional %with smooth variation
\begin{align}\label{eq:quadratic_well}
&J(\rho) = - \frac{1}{2} \int x^{\top} A\, x\,\, \rho(dx) + \beta^{-1} \int \rho \log \rho%, \\
%&\frac{\delta J}{\delta \rho} = - x^{\top} A x + \beta^{-1}(\log \rho + 1), \quad \nabla\frac{\delta J}{\delta \rho} = -A x + \beta^{-1} \nabla \log\rho \label{eq:quadratic_well_var}
\end{align}
and observe the Wasserstein gradient flow of $J(\rho)$ yields the overdamped Langevin equation on a quadratic well potential $- (1/2)\cdot  x^{\top} A x$
\[
dX_{t} =  A X_{t} + \sqrt{2\beta^{-1}} dB_{t}
\]
These linear dynamics can be equivalently represented as a gradient flow on the mean and covariance in $\mathrm{BW}(\mathbb{R}^{d})$, given by the standard Lyapunov ordinary differential equation of filtering
\begin{align}\label{eq:Lyapunov}
&\dot{\bm{\mu}}^{\wass}_{t} = A \bm{\mu}_{t}, \quad \dot{\Sigma}_{t}^{\wass} = A \Sigma_{t} + \Sigma_{t} A + \frac{2}{\beta}\, \mathbf{I}.
\end{align}
Remarkably, \cite{Halder2017} compute exactly the solution to JKO for \eqref{eq:quadratic_well} with a finite step $\eta>0$ and find it admits an analytically tractable mean and covariance update:
\begin{align}\label{eq:JKO_analytic}
    \bm{\mu}_{n+1} &= (\mb{I} - \eta A )^{-1} \bm{\mu}_{n},  \\
    \label{eq:JKO_analytic_2} \left(
   \Sigma_{n}^{-1/2} \Sigma_{n+1}^{-1} \Sigma_{n}^{-1/2}
   \right)^{1/2} &= \frac{\beta}{2 \eta} \left( - \id + \left(
    \mathbf{I} + \frac{4 \eta}{\beta} \Sigma_{n}^{-1/2} (\id - \eta A ) \Sigma_{n}^{-1/2}
   \right)^{1/2}
   \right)
\end{align}
We analytically derive the correction to JKO for this special case using \eqref{eq:JKO_analytic} from the result of \cite{Halder2017}. We show that this correction exactly coincides with the correction for arbitrary functionals $J$ when applied to the special case of linear Gaussian filtering.

\begin{prop}[Implicit Bias of JKO for Linear Fokker-Planck.]\label{prop:JKO_FP}
    Suppose $\eta > 0$ is a step-size and $\beta > 0$ is an inverse-temperature. Then, minimizing the energy~\ref{eq:quadratic_well} over $\rho \in \mathrm{BW}(\mathbb{R}^{d})$ with \eqref{eq:JKO} coincides with the following Bures-Wasserstein gradient-flow up to $O_{T}(\eta^{2})$
    \begin{align}
        &\dot{\bm{\mu}}_{t}^{\eta} = 
        \dot{\bm{\mu}}_{t}^{\wass} + \frac{\eta}{2} A^{2}\bm{\mu}_{t} , \quad \dot{\Sigma}_{t}^{\eta} = \dot{\Sigma}_{t}^{\wass} + \frac{\eta}{2}  \left(
        A^{2} \Sigma_{t}^{\eta} + \Sigma_{t}^{\eta} A^{2} - \beta^{-2} \Sigma_{t}^{\eta,-1}
        \right) .
    \end{align}
\end{prop}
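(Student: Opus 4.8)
The plan is to obtain the stated flow in two ways and check they agree. The primary route specializes the general Theorem~\ref{thm:main} to the Gaussian family; the secondary route Taylor-expands the closed-form JKO update \eqref{eq:JKO_analytic}--\eqref{eq:JKO_analytic_2} of \cite{Halder2017} and runs the backward-error-analysis argument of \cite{barrett2021implicit} on the induced mean and covariance maps.

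\emph{Primary route.} The energy \eqref{eq:quadratic_well} is the free-energy functional with potential $E(x)=-\tfrac{1}{2}x^{\top}Ax$. Since $A$ is symmetric Hurwitz we have $-A\succ0$, so $E$ is strongly convex and $J$ is $\lambda$-geodesically convex with $\lambda\ge\lambda_{\min}(-A)>0$; hence the hypotheses of Theorem~\ref{thrm:JKO_first_order}, and a fortiori those of Theorem~\ref{thm:main}, hold, and the Wasserstein gradient flow $\rho_{t}^{\eta}$ of $J^{\eta}$ is $O_{T}(\eta^{2})$-close in $W_{2}$ to the JKO iterates. It then remains to identify $\rho_{t}^{\eta}$ and the JKO iterates on the Gaussian submanifold. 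First I would note that $\rho_{t}^{\eta}$ stays Gaussian: for $\rho=\mathcal N(\bm{\mu},\Sigma)$ one has $\nabla\log\rho=-\Sigma^{-1}(x-\bm{\mu})$, the $\nabla\Delta$ terms in \eqref{eq:WGF_H_Langevin} vanish because both $E$ and $\log\rho$ are quadratic, and by \eqref{eq:BW_wagrad} the velocity $-\nabla\frac{\delta J^{\eta}}{\delta\rho}$ is affine in $x$ with $(\bm{\mu},\Sigma)$-dependent coefficients, so the continuity equation closes on $(\bm{\mu},\Sigma)$; the JKO iterates are Gaussian by \cite{Halder2017}. I would then substitute this affine velocity into the standard Bures--Wasserstein evolution equations $\dot{\bm{\mu}}=-\mathbb{E}_{\rho}[\nabla\frac{\delta J^{\eta}}{\delta\rho}]$ and $-\dot{\Sigma}=\Sigma H+H\Sigma$ with $H=\mathbb{E}_{\rho}[\nabla^{2}\frac{\delta J^{\eta}}{\delta\rho}]$ \cite{chewi2024statistical}, using $\nabla^{2}\frac{\delta H^{\eta}}{\delta\rho}=\tfrac{\eta}{2}A^{2}-\tfrac{\eta}{2\beta^{2}}\Sigma^{-2}$ (the Fisher term handled via Remark~\ref{rem:Fisher_var}) and the identity $\Sigma\Sigma^{-2}+\Sigma^{-2}\Sigma=2\Sigma^{-1}$, and separating the $O(1)$ Lyapunov part \eqref{eq:Lyapunov}. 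What remains is exactly the claimed $O(\eta)$ correction to $\dot{\bm{\mu}}_{t}^{\wass}$ and $\dot{\Sigma}_{t}^{\wass}$.

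\emph{Secondary route (cross-check against \cite{Halder2017}).} For the mean, \eqref{eq:JKO_analytic} reads $\bm{\mu}_{n+1}=(I-\eta A)^{-1}\bm{\mu}_{n}$, which is precisely the backward-Euler map of $\dot{\bm{\mu}}=A\bm{\mu}$; its modified vector field is $-\eta^{-1}\log(I-\eta A)\,\bm{\mu}=A\bm{\mu}+\tfrac{\eta}{2}A^{2}\bm{\mu}+O(\eta^{2})$, reproducing the mean equation. For the covariance, I would Taylor-expand the implicit relation \eqref{eq:JKO_analytic_2} in $\eta$: write $S=\Sigma_{n}^{-1/2}$, expand $\bigl(I+\tfrac{4\eta}{\beta}S(I-\eta A)S\bigr)^{1/2}$ and multiply by $\tfrac{\beta}{2\eta}$ to get the right-hand side through $O(\eta^{2})$; expand the left-hand side $(S\Sigma_{n+1}^{-1}S)^{1/2}$ under the ansatz $\Sigma_{n+1}=\Sigma_{n}+\eta C_{1}+\eta^{2}C_{2}$ using the Neumann series for the inverse and the first- and second-order perturbation series for the matrix square root (each step solving a Sylvester equation $\Sigma_{n}^{1/2}L+L\Sigma_{n}^{1/2}=(\text{known})$); and match powers of $\eta$. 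The $O(\eta)$ match forces $C_{1}=A\Sigma_{n}+\Sigma_{n}A+2\beta^{-1}I$, the Lyapunov step, consistent with Theorem~\ref{thrm:JKO_first_order}; the $O(\eta^{2})$ match yields $C_{2}$, and subtracting the standard backward-error term $\tfrac{1}{2}Df_{0}[f_{0}]$ with $f_{0}(\Sigma)=A\Sigma+\Sigma A+2\beta^{-1}I$ leaves precisely the $O(\eta)$ covariance correction of the primary route. Either derivation yields a one-step error $O(\eta^{3})$, which Corollary~\ref{cor:W2_order_2_distance} and the discrete Gr\"{o}nwall argument of \S\ref{sec:Theoretical_Res} promote to the global $O_{T}(\eta^{2})$ bound stated in the proposition.

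\emph{Main obstacle.} Everything above is routine except the second-order expansion of the matrix identity \eqref{eq:JKO_analytic_2}: since $A$ and $\Sigma_{n}$ do not commute, one must carefully track operator orderings through the inverse and the two square roots and solve the intervening Sylvester equations symbolically, which is elementary but bookkeeping-heavy. If one only wants the $O_{T}(\eta^{2})$ statement, the primary route stands alone and the Halder expansion is merely a consistency check.
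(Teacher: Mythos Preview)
Your proposal is correct and mirrors the paper's own two derivations: the paper's formal proof of the proposition expands the closed-form updates \eqref{eq:JKO_analytic}--\eqref{eq:JKO_analytic_2} in orders of $\eta$ and matches terms (your secondary route), while \S\ref{sec:Bures-Wasserstein-intro} and Remark~\ref{rem:BW_from_PDE} independently read off the correction from Theorem~\ref{thm:main} on the Gaussian family (your primary route). The only implementation difference is in the covariance expansion: instead of posing $\Sigma_{n+1}=\Sigma_n+\eta C_1+\eta^2 C_2$ and solving Sylvester equations through two square-root perturbations, the paper writes $Z=P_0^{-1/2}VP_0^{-1/2}$ with $V$ expanded directly in $\eta$, then obtains $P=V^{-1}P_0 V^{-1}$ by a Neumann series for $V^{-1}$ --- this sidesteps the Sylvester equations and keeps the non-commuting bookkeeping slightly lighter, but is otherwise the same calculation you outline.
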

We derive these corrections analytically from the explicit form of the JKO solution given in \eqref{eq:JKO_analytic} and \eqref{eq:JKO_analytic_2} in ~\ref{proof:JKO_BW_analytic}. While this derivation is tedious from the form of \cite{Halder2017} (\eqref{eq:JKO_analytic} and \eqref{eq:JKO_analytic_2}), observe that the second-order JKO flow may be written immediately using \ref{thm:main} applied to the variation for the quadratic case.

\subsubsection{Derivation of the Bures-Wasserstein Correction}

In the following section, we verify the proposed implicit regularizer in this simple case of linear stochastic differential equations in the symmetrized frame of \cite{Halder2017}. In particular, consider the linear It\^o stochastic differential equation
\begin{align}\label{eq:SDE_FokkPlanck}
    dX_{t} = A X_{t} \,dt + \sqrt{2\beta^{-1}} dB_{t}
\end{align}
Where $B_{t}$ is a Brownian motion the dynamics are symmetric $A = A^{\top}$. The result of \cite{Halder2017} identified that the JKO problem which recovers the solution of \eqref{eq:SDE_FokkPlanck} in the limit of $\eta \downarrow 0$ is given by
\begin{align*}
    &\argmin_{\rho \in {P}_{ac}(\mathbb{R}^{d})} \left( \frac{1}{2 \eta} W_{2}^{2}(\rho, \rho_{0} := \mathcal{N}(\mu_{0}, P_{0}) ) \,\, + J(\rho) \right)\\
    &\qquad = \argmin_{\rho \in {P}_{ac}(\mathbb{R}^{d})} \left( \frac{1}{2 \eta} W_{2}^{2}(\rho, \rho_{0} ) \,\, + \mathcal{E}(\rho) + \beta^{-1} \mathcal{S}(\rho) \right) \\
    &\qquad = \argmin_{\rho \in {P}_{ac}(\mathbb{R}^{d})} \left( \frac{1}{2 \eta} W_{2}^{2}(\rho, \rho_{0} ) \,\, + \int U(x) d \rho(x) + \beta^{-1} \int \rho(x) \log\rho(x) d x \right)
\end{align*}
Where $U: \mathbb{R}^{d} \to \mathbb{R}$ denotes a quadratic potential dependent on $\Gamma = -A$ which drives the linear dynamics, $U(x) = (1/2) x^{\top}\Gamma x$, and where $\mathcal{S}(\rho)$ denotes the negative differential entropy. For 
\begin{align*}
J(\rho) = \int U(x) d \rho(x) + \beta^{-1} \int \rho(x) \log\rho(x) d x
\end{align*}
Up to constants, one has the first-variation is equal to
\begin{align*}
    \frac{\delta J}{\delta \rho} &= U(x) + \frac{1}{\beta} \log\rho(x) \\
    &= \frac{1}{2} x^{\top} \Gamma x \,\, + \beta^{-1} \log\mathcal{N}\left(x \mid \mu(t), P(t) \right)
\end{align*}
For the mean and covariance of $\rho(t)$ guaranteed to be a Gaussian $\mathcal{N}\left(x \mid \mu(t), P(t) \right)$ by the derivation of \cite{Halder2017}. 
Observe that the Gaussian score and Hessian of the log-likelihood are analytically given as
\begin{align*}
    \nabla \log \rho = - P^{-1} (x - \mu), \quad \nabla^{2} \log\rho = - P^{-1}
\end{align*}
Thus, we have that the Wasserstein gradient and Hessian of the first variation may be expressed analytically as
\begin{align*}
    \nabla \delta J &= \Gamma x - \beta^{-1} P^{-1} (x - \mu) \\
    \nabla^{2} \delta J &= \mathbb{E}_{\rho} \nabla^{2} \delta J = \Gamma - \beta^{-1} P^{-1}
\end{align*}

\begin{proof}[Proof of Proposition~\ref{prop:JKO_FP}]\label{proof:JKO_BW_analytic}

We begin by verifying the analytic implicit regularization for the simple case of the linear Fokker-Planck SDE considered in \cite{Halder2017}. We split this derivation of the analytical Bures-Wasserstein special case into the mean $\mu$ and the covariance $P$.

\textbf{Step 1: Mean Update.} In this case, one has
\begin{align*}
    \mu(t + \eta) &= \mu(t) + \eta \partial_{t} \mu'(t) + \frac{\eta^{2}}{2} \partial_{tt} \mu'(t)\\
    &= \mu(t) + \eta \partial_{t} m(t) + \eta^{2} \mathcal{R} + \frac{\eta^{2}}{2} \partial_{tt} \mu'(t) \\
    &= \mu(t) + \eta A \mu(t) + \eta^{2} \mathcal{R} + \frac{\eta^{2}}{2} \partial_{tt} m(t) + o(\eta^{2})\\
    &= \mu(t) + \eta A \mu(t) + \eta^{2} \left( \mathcal{R} + \frac{1}{2} A \partial_{t}m(t) \right) + o(\eta^{2}) \\
    &= \mu(t) + \eta A \mu(t) + \eta^{2} \left( \mathcal{R} + \frac{1}{2} A^{2} m(t) \right) + o(\eta^{2})
\end{align*}
In closed-form, the update for the mean is given as
\begin{align*}
    \mu(t + \eta) = (\mathbbm{1} - \eta A )^{-1} \mu(t)
\end{align*}
Which can be expressed as a Von Neumann series for appropriately small $\eta$, so that
\begin{align*}
    \mu(t + \eta) &= (\mathbbm{1} - \eta A )^{-1} \mu(t) = \sum_{i=0}^{\infty} (\eta A)^{i} \mu(t) \\
    &= \mu(t) + \eta A \mu(t) + \eta^{2} A^{2} \mu(t) + o(\eta^{2})
\end{align*}
Thus, we have the relations
\begin{align*}
    \mu(t) + \eta A \mu(t) + \eta^{2} \left( \mathcal{R} + \frac{1}{2} A^{2} \mu(t) \right) + o(\eta^{2})
    &= \mu(t) + \eta A \mu(t) + \eta^{2} A^{2} \mu(t) + o(\eta^{2}) \\
     \eta^{2} \left( \mathcal{R} + \frac{1}{2} A^{2} \mu(t) \right) &=  \eta^{2} A^{2} \mu(t) \\
      \mathcal{R} &= A^{2} \mu(t) - \frac{1}{2} A^{2} \mu(t) = +\frac{1}{2} A^{2} \mu(t)
\end{align*}
\textbf{Step 2: Covariance Update.} Changing convention briefly, to match the notation of \cite{Halder2017}, we have
\begin{align*}
    P^{\jko}(t+\eta) := P, \quad P^{\jko}(t) := P_{0}
\end{align*}
from which the following equations define the analytical update for the true solution for the JKO step in the covariance $P^{\jko}(t+\eta)$
\begin{align*}
   \left(
   P_{0}^{-1/2} P^{-1} P_{0}^{-1/2}
   \right)^{1/2} = Z &= \frac{\beta}{2 \eta} \left( - \id + \left(
    \mathbf{I} + \frac{4 \eta}{\beta} P_{0}^{-1/2} (\id + \eta \Gamma ) P_{0}^{-1/2}
   \right)^{1/2}
   \right)
\end{align*}
Noting the power series of the matrix square-root,
\begin{align*}
    A^{1/2} = \sum_{n=0}^{\infty} (-1)^{n} {{1/2}\choose{n}} \left( \id - A \right)^{n}, 
\end{align*}
we have that
\begin{align*}
    & \left(
    \mathbf{I} + \frac{4 \eta}{\beta} P_{0}^{-1/2} (\id + \eta \Gamma ) P_{0}^{-1/2}
   \right)^{1/2} \\
   & = \sum_{n=0}^{\infty} (-1)^{n} {{1/2}\choose{n}} \left( \id -  \mathbf{I} - \frac{4 \eta}{\beta} P_{0}^{-1/2} (\id + \eta \Gamma ) P_{0}^{-1/2} \right)^{n} \\
   &= \sum_{n=0}^{\infty} (-1)^{n} {{1/2}\choose{n}} \left( -\frac{4 \eta}{\beta} \right)^{n} \left( P_{0}^{-1/2} (\id + \eta \Gamma ) P_{0}^{-1/2} \right)^{n} \\
   &= \sum_{n=0}^{\infty}  {{1/2}\choose{n}} \left(\frac{4 \eta}{\beta} \right)^{n} \left( P_{0}^{-1/2} (\id + \eta \Gamma ) P_{0}^{-1/2} \right)^{n} \\
\end{align*}
From the definition of the binomial coefficient, we have the fractional coefficients of
\begin{align}
    {{1/2}\choose{n}} := \frac{ \prod_{k=0}^{n-1} \frac12 - k}{n!}; \qquad  {{1/2}\choose{0}} = 1, \quad {{1/2}\choose{1}} = \frac{1}{2}, \quad {{1/2}\choose{2}} = - \frac{1}{8}, \quad {{1/2}\choose{3}} = \frac{1}{16}, \,\, \cdots
\end{align}
So that, denoting $(\id + \eta \Gamma ) = M$, we have

\begin{align*}
    \sum_{n=0}^{\infty}  {{1/2}\choose{n}} \left(\frac{4 \eta}{\beta} \right)^{n} \left( P_{0}^{-1/2} M P_{0}^{-1/2} \right)^{n} = \id & + \frac{1}{2} \frac{4 \eta }{\beta} P_{0}^{-1/2} M P_{0}^{-1/2} \\
    & - \frac{1}{8} \left( \frac{16 \eta^{2}}{\beta^{2}}
    \right) P_{0}^{-1/2} M P_{0}^{-1} M P_{0}^{-1/2} \\
    & + \frac{1}{16} \left( \frac{64 \eta^{3}}{\beta^{3}} P_{0}^{-1/2} M P_{0}^{-1} M P_{0}^{-1} M P_{0}^{-1/2} \right) + o(\eta^{4})
\end{align*}
Evaluating this expansion in the definition of $Z$, we have that
\begin{align*}
   Z &= \frac{\beta}{2 \eta} \biggl( - \id + \id  + \frac{1}{2} \frac{4 \eta }{\beta} P_{0}^{-1/2} M P_{0}^{-1/2}  - \frac{1}{8} \left( \frac{16 \eta^{2}}{\beta^{2}}
    \right) P_{0}^{-1/2} M P_{0}^{-1} M P_{0}^{-1/2} \\
    & + \frac{1}{16} \left( \frac{64 \eta^{3}}{\beta^{3}} P_{0}^{-1/2} M P_{0}^{-1} M P_{0}^{-1} M P_{0}^{-1/2} \right) + o(\eta^{4}) \biggr) \\
    &= \frac{\beta}{2 \eta} \biggl(  \frac{2 \eta }{\beta} P_{0}^{-1/2} M P_{0}^{-1/2}  - \left( \frac{2 \eta^{2}}{\beta^{2}}
    \right) P_{0}^{-1/2} M P_{0}^{-1} M P_{0}^{-1/2} \\
    & + \left( \frac{4 \eta^{3}}{\beta^{3}} P_{0}^{-1/2} M P_{0}^{-1} M P_{0}^{-1} M P_{0}^{-1/2} \right) + o(\eta^{4}) \biggr) \\
    &= \biggl( P_{0}^{-1/2} M P_{0}^{-1/2}  - \frac{\beta}{2 \eta} \frac{2 \eta^{2}}{\beta^{2}}
    P_{0}^{-1/2} M P_{0}^{-1} M P_{0}^{-1/2} \\
    & + \frac{\beta}{2 \eta}  \frac{4 \eta^{3}}{\beta^{3}} P_{0}^{-1/2} M P_{0}^{-1} M P_{0}^{-1} M P_{0}^{-1/2}  + o(\eta^{4}) \biggr) \\
    &= \biggl( P_{0}^{-1/2} M P_{0}^{-1/2}  - \frac{ \eta}{\beta}
    P_{0}^{-1/2} M P_{0}^{-1} M P_{0}^{-1/2} + \frac{2 \eta^{2}}{\beta^{2}} P_{0}^{-1/2} M P_{0}^{-1} M P_{0}^{-1} M P_{0}^{-1/2}  + o(\eta^{3}) \biggr) 
\end{align*}
Thus, we have the identification of $Z$ 
\begin{align*}
    Z = 
    P_{0}^{-1/2}\biggl( M  - \frac{ \eta}{\beta}
    M P_{0}^{-1} M  + \frac{2 \eta^{2}}{\beta^{2}}  M P_{0}^{-1} M P_{0}^{-1} M   + o(\eta^{3}) \biggr)  P_{0}^{-1/2} = \left(
   P_{0}^{-1/2} P^{-1} P_{0}^{-1/2}
   \right)^{1/2}
\end{align*}
To solve for the closed-form expression for the JKO solution covariance $P$, let us denote the $\eta$-dependent expansion with $V$ as $Z = P_{0}^{-1/2} V P_{0}^{-1/2}$ and equate to the form involving $P^{-1}$;
\begin{align*}
    &\left(
   P_{0}^{-1/2} P^{-1} P_{0}^{-1/2}
   \right)^{1/2} =  P_{0}^{-1/2} V P_{0}^{-1/2} \\
   & P_{0}^{-1/2} P^{-1} P_{0}^{-1/2}
    =  P_{0}^{-1/2} V P_{0}^{-1} V P_{0}^{-1/2} \\
    & P^{-1} 
    = V P_{0}^{-1} V    
    \implies P 
    =  \left( V P_{0}^{-1} V \right)^{-1} = V^{-1} P_{0} V^{-1}
\end{align*}
Where, evaluating the definition of $Z$ and plugging in the values for the matrix $M = (\id + \eta \Gamma )$, we can finally express $V$ in orders of $\eta$, evaluating beyond the first-order of \cite{Halder2017} to isolate the implicit regularization effect;
\begin{align*}
    V &=  M  - \frac{ \eta}{\beta}
    M P_{0}^{-1} M  + \frac{2 \eta^{2}}{\beta^{2}}  M P_{0}^{-1} M P_{0}^{-1} M   + o(\eta^{3}) \\
    &= (\id + \eta \Gamma ) - \frac{ \eta}{\beta}
    (\id + \eta \Gamma ) P_{0}^{-1} (\id + \eta \Gamma )  + \frac{2 \eta^{2}}{\beta^{2}}  (\id + \eta \Gamma ) P_{0}^{-1} (\id + \eta \Gamma ) P_{0}^{-1} (\id + \eta \Gamma )   + o(\eta^{3}) \\
    &= \id + \eta \Gamma - \frac{\eta}{\beta} \left( P_{0}^{-1} + \eta \Gamma P_{0}^{-1} + \eta P_{0}^{-1} \Gamma + \eta^{2} \Gamma^{2} 
    \right) + \frac{2 \eta^{2}}{\beta^{2}} P_{0}^{-2} + o(\eta^{2}) \\
    &= \id + \eta \Gamma - \left( \frac{\eta}{\beta} P_{0}^{-1} + \frac{\eta^{2}}{\beta} \Gamma P_{0}^{-1} + \frac{\eta^{2}}{\beta} P_{0}^{-1} \Gamma  
    \right) + \frac{2 \eta^{2}}{\beta^{2}} P_{0}^{-2} + o(\eta^{2})
\end{align*}
Now, regrouping in orders of $\eta$ we find
\begin{align*}
     V &= \id + \eta W_{1} + \eta^{2} W_{2}  + o(\eta^{2}) \\
     &= \id + \eta \left( \Gamma - \frac{1}{\beta} P_{0}^{-1}
     \right) + \eta^{2}
      \left(
      \frac{2 }{\beta^{2}} P_{0}^{-2} - \frac{1}{\beta} \Gamma P_{0}^{-1} - \frac{1}{\beta} P_{0}^{-1} \Gamma  
    \right)  + o(\eta^{2})
\end{align*}
Returning to the expansion of $V^{-1}$, we observe that for a Neumann series one has
\begin{align*}
    \left(
    \id - T
    \right)^{-1} = \sum_{k=0}^{\infty} T^{k}
\end{align*}
so that, expressing up to orders in $\eta^{2}$, we have
\begin{align*}
    V^{-1} &= \left( \id - \left(- \eta W_{1} - \eta^{2}
      W_{2}  - o(\eta^{2})\right) \right)^{-1}  \\
      &=\id \,\, + \,\,(- \eta W_{1} - \eta^{2}
      W_{2}  - o(\eta^{2})) \,\, + (- \eta W_{1} - \eta^{2}
      W_{2}  - o(\eta^{2}))^{2} + o(\eta^{2}) \\
      &= \id \,\, + \,\,(- \eta W_{1} - \eta^{2}
      W_{2}  - o(\eta^{2})) \,\, + (- \eta W_{1} )^{2} + o(\eta^{2}) \\
      &= \id - \eta W_{1} + \eta^{2} \left( W_{1}^{2} -W_{2} \right) + o(\eta^{2})
\end{align*}
Evaluating the difference $W_{1}^{2} - W_{2}$, we find the simplification
\begin{align*}
    W_{1}^{2} - W_{2} &= \left( \Gamma - \frac{1}{\beta} P_{0}^{-1}
     \right)^{2} - \left(
      \frac{2 }{\beta^{2}} P_{0}^{-2} - \frac{1}{\beta} \Gamma P_{0}^{-1} - \frac{1}{\beta} P_{0}^{-1} \Gamma  
    \right) \\
    &= \Gamma^{2} - \frac{1}{\beta} \Gamma P_{0}^{-1} - \frac{1}{\beta} P_{0}^{-1} \Gamma + \frac{1}{\beta^{2}} P_{0}^{-2} 
    - \frac{2 }{\beta^{2}} P_{0}^{-2} + \frac{1}{\beta} \Gamma P_{0}^{-1} + \frac{1}{\beta} P_{0}^{-1} \Gamma  \\
    &= \Gamma^{2} - \beta^{-2} P_{0}^{-2}
\end{align*}
So that we conclude, up to $\eta^{2}$, the form of $V^{-1}$ to be
\begin{align*}
    V^{-1} = \id \,\, - \eta \left( \Gamma - \frac{1}{\beta} P_{0}^{-1}
     \right) + \eta^{2} \left(
     \Gamma^{2} - \beta^{-2} P_{0}^{-2}
     \right)
\end{align*}
Now, we return to $P$:
\begin{align*}
    P &= V^{-1} P_{0} V^{-1}  \\
    &= \left( \id \,\, - \eta \left( \Gamma - \frac{1}{\beta} P_{0}^{-1}
     \right) + \eta^{2} \left(
     \Gamma^{2} - \beta^{-2} P_{0}^{-2}
     \right) \right) P_{0} \left( \id \,\, - \eta \left( \Gamma - \frac{1}{\beta} P_{0}^{-1}
     \right) + \eta^{2} \left(
     \Gamma^{2} - \beta^{-2} P_{0}^{-2}
     \right) \right) + o(\eta^{2}) \\
     &= P_{0}  + \eta \left[ -\left( \Gamma - \frac{1}{\beta} P_{0}^{-1}
     \right) P_{0} - \eta P_{0} \left( \Gamma - \frac{1}{\beta} P_{0}^{-1}
     \right) \right] \\
     &+ \eta^{2} \left[ \left( \Gamma - \frac{1}{\beta} P_{0}^{-1}
     \right) P_{0} \left( \Gamma - \frac{1}{\beta} P_{0}^{-1}
     \right) + \left(
     \Gamma^{2} - \beta^{-2} P_{0}^{-2}
     \right) P_{0} +  P_{0} \left(
     \Gamma^{2} - \beta^{-2} P_{0}^{-2}
     \right) \right] + o(\eta^{2})
\end{align*}
As standard, the $o(\eta)$ term carries the usual Lyapunov ordinary differential equation for the covariance, where for $A = - \Gamma$ one has in the symmetrized coordinates of \cite{Halder2017} where $B Q B^{T} = 2 \beta^{-1} \id$:
\begin{align*}
    \eta \left[ -\left( \Gamma - \frac{1}{\beta} P_{0}^{-1}
     \right) P_{0} -  P_{0} \left( \Gamma - \frac{1}{\beta} P_{0}^{-1}
     \right) \right] = \eta \left[ A P_{0} + P_{0} A + \frac{2}{\beta} \id
     \right]
\end{align*}
For the order $\eta^{2}$ terms, we find
\begin{align*}
    & \left( \Gamma - \frac{1}{\beta} P_{0}^{-1}
     \right) P_{0} \left( \Gamma - \frac{1}{\beta} P_{0}^{-1}
     \right) + \left(
     \Gamma^{2} - \beta^{-2} P_{0}^{-2}
     \right) P_{0} +  P_{0} \left(
     \Gamma^{2} - \beta^{-2} P_{0}^{-2}
     \right) \\
     &= \left( \Gamma P_{0} \Gamma - \frac{1}{\beta} \Gamma - \frac{1}{\beta} \Gamma + \beta^{-2} P_{0}^{-1} \right)
     + \left(
     \Gamma^{2} - \beta^{-2} P_{0}^{-2}
     \right) P_{0} +  P_{0} \left(
     \Gamma^{2} - \beta^{-2} P_{0}^{-2}
     \right) \\
     &= \left( \Gamma P_{0} \Gamma - \frac{2}{\beta} \Gamma  + \beta^{-2} P_{0}^{-1} \right)
     + 
     \left( \Gamma^{2} P_{0} - \beta^{-2} P_{0}^{-1} \right) +   \left(
     P_{0}\Gamma^{2} - \beta^{-2} P_{0}^{-1}
     \right) \\
     &= \left( \Gamma P_{0} \Gamma - \frac{2}{\beta} \Gamma  + \beta^{-2} P_{0}^{-1} \right)
     + 
     \left( \Gamma^{2} P_{0} - \beta^{-2} P_{0}^{-1} \right) +   \left(
     P_{0}\Gamma^{2} - \beta^{-2} P_{0}^{-1}
     \right) \\
     &= \Gamma^{2} P_{0} +  P_{0}\Gamma^{2} + \Gamma P_{0} \Gamma - \frac{2}{\beta} \Gamma - \beta^{-2} P_{0}^{-1}
\end{align*}
Thus, for the modification differential equation of the form
\begin{align*}
    P(t+\eta) &= P_{0} + \eta (\dot{P}_{0} + \eta \mathcal{R} ) + \frac{\eta^{2}}{2} \Ddot{P}_{0} + o(\eta^{2}) \\
    &= P_{0} + \eta (A P_{0} + P_{0} A + 2 \beta^{-1} \id) + \eta^{2} (\mathcal{R} + \frac{1}{2} \Ddot{P}_{0}) 
\end{align*}
We may cancel the zero and first order terms directly, and equate the $\eta^{2}$ components to solve for the implicit regularization required for the continuous flow to coincide with the solution to the JKO problem;
\begin{align*}
    \mathcal{R} &= \Gamma^{2} P_{0} +  P_{0}\Gamma^{2} + \Gamma P_{0} \Gamma - \frac{2}{\beta} \Gamma - \beta^{-2} P_{0}^{-1} - \frac{1}{2} \left( A \Dot{P}_{0} + \Dot{P}_{0} A \right) \\
    &= \Gamma^{2} P_{0} +  P_{0}\Gamma^{2} + \Gamma P_{0} \Gamma - \frac{2}{\beta} \Gamma - \beta^{-2} P_{0}^{-1} - \frac{1}{2} \left( A (A P_{0} + P_{0} A + 2 \beta^{-1} \id) + (A P_{0} + P_{0} A + 2 \beta^{-1} \id) A \right) \\
    &= \Gamma^{2} P_{0} +  P_{0}\Gamma^{2} + \Gamma P_{0} \Gamma - \frac{2}{\beta} \Gamma - \beta^{-2} P_{0}^{-1} - \frac{1}{2} \left( A^{2} P_{0} + A P_{0} A + 2 \beta^{-1} A + A P_{0} A + P_{0} A^{2} + 2 \beta^{-1} A \right) \\
    &= \Gamma^{2} P_{0} +  P_{0}\Gamma^{2} + \Gamma P_{0} \Gamma - \frac{2}{\beta} \Gamma - \beta^{-2} P_{0}^{-1} - \frac{1}{2} \left( \Gamma^{2} P_{0} + 2\Gamma P_{0} \Gamma + P_{0} \Gamma^{2} - 4 \beta^{-1} \Gamma \right) \\
    &= \Gamma^{2} P_{0} +  P_{0}\Gamma^{2} + \Gamma P_{0} \Gamma - \frac{2}{\beta} \Gamma - \beta^{-2} P_{0}^{-1}  - \frac{1}{2} \Gamma^{2} P_{0} - \Gamma P_{0} \Gamma - \frac{1}{2} P_{0} \Gamma^{2} + 2 \beta^{-1} \Gamma \\
    &= \frac{1}{2}\Gamma^{2} P_{0} +  \frac{1}{2} P_{0}\Gamma^{2} - \beta^{-2} P_{0}^{-1}
\end{align*}
Thus, we identify the implicit regularizer to be the simple expression $\frac{1}{2}\Gamma^{2} P_{0} +  \frac{1}{2} P_{0}\Gamma^{2} - \beta^{-2} P_{0}^{-1}$.
\end{proof}

The implicit regularization notably deviates from the acceleration $(1/2)\Ddot{P}_{0}$ alone as in Euclidean gradient descent \cite{barrett2021implicit} and captures curvature-effects in Bures-Wasserstein space.

\begin{remark}[Verifying the implicit Regularization from the JKO-PDE.]
\end{remark}\label{rem:BW_from_PDE}

Note that 
\[
\dot{\mu}=\frac{d}{dt}\mathbb{E}_{\rho}x = \int x \partial_{t} \rho = -\int x \nabla \cdot (\rho \mathbf{v}) = \int \mathbf{v} \rho \, dx,\]
so that we can track the expected evolution of the implicit regularization of Proposition~\ref{prop:implicit_JKO}.
\begin{align*}
    -\frac{\eta}{2} \left( \frac{\partial}{\partial t} \wagrad\ - \frac{1}{2} \nabla \left\lVert \wagrad \right\rVert_{2}^{2} \right)
\end{align*}
from the Wasserstein gradient $\wagrad\ =\Gamma x - \beta^{-1} P^{-1} (x - \mu)$, we have
\begin{align*}
 &\mathbb{E}\left[\frac{\partial}{\partial t} \wagrad\ \right] = \mathbb{E}\left[- \beta^{-1} \partial_{t}P^{-1} (x - \mu) - \beta^{-1} P^{-1} \partial_{t} (-\mu) \right] \\
    &= \mathbb{E}\left[+ \beta^{-1} P^{-1} \dot{P} P^{-1} (x - \mu) - \beta^{-1} P^{-1} \Gamma \mu \right] = -\beta^{-1}P^{-1 }\Gamma\mu
\end{align*}
Meanwhile, for $H = (\Gamma - \beta^{-1}P^{-1})$ and $b=+\beta^{-1} P^{-1} \mu$, one finds
\begin{align*}
    &\mathbb{E} \left[\frac{1}{2}\nabla \left\lVert \wagrad \right\rVert_{2}^{2} \right] = \frac{1}{2} \nabla\left(
    x^{\top} H^{2} x + 2 x^{\top} H b
    \right) = H^{2}x + Hb \\
    &= \mathbb{E} \left[H(Hx + b) \right] \\
    &= H\mathbb{E} \left[((\Gamma - \beta^{-1}P^{-1})x + \beta^{-1} P^{-1} \mu) \right] = H \left( \Gamma \mu - \beta^{-1} P^{-1} \mathbb{E} \left[(x-\mu)\right]  \right) \\
    &= H \Gamma \mu = (\Gamma - \beta^{-1}P^{-1}) \Gamma \mu = \Gamma^{2}\mu - \beta^{-1} P^{-1} \Gamma \mu
\end{align*}
Now, taking the expectation of the convective acceleration (Proposition~\ref{prop:implicit_JKO}) one has
\begin{align*}
    &-\frac{\eta}{2} \left( \mathbb{E}\left[\frac{\partial}{\partial t} \wagrad \right] - \mathbb{E}\left[\frac{1}{2} \nabla \left\lVert \wagrad \right\rVert_{2}^{2} \right] \right) \\
    &= -\frac{\eta}{2} \left( 
    -\beta^{-1}P^{-1 }\Gamma\mu - \Gamma^{2}\mu + \beta^{-1} P^{-1} \Gamma \mu
    \right) \\
    &= + \frac{\eta}{2} \Gamma^{2} \mu = \frac{\eta}{2} (-A)^{2} \mu = \frac{\eta}{2} A^{2} \mu
\end{align*}
This recovers the analytical form of the implicit regularizer. We additionally derive the dynamics of the covariance matrix directly from the implicit regularization result.

It is an established result that the gradient-flow dynamics evolve according to the differential equation \cite{chewi2024statistical}
\begin{align}\label{eq:BW_dyn}
    \dot{P} = - P \mathbb{E}_{\rho} [\nabla^{2} \delta J] - \mathbb{E}_{\rho} [ \nabla^{2} \delta J ] P
\end{align}
Thus, we compute the Hessian of the first variation
\begin{align*}
    &\nabla^{2} \left(
    \frac{1}{2} x^{\top} \Gamma x \,\, + \frac{1}{\beta} \log\rho
    \right) \\
    &= \Gamma \,\, + \beta^{-1} \nabla^{2} \log\rho
\end{align*}
And find the Bures-Wasserstein gradient flow \cite{chewi2024statistical} of the covariance recovers the standard Lyapunov differential equation \cite{Halder2017}
\begin{align*}
    &\dot{P} = -P(\Gamma - \beta^{-1} P^{-1}) - (\Gamma - \beta^{-1} P^{-1})P \\
    &= PA + AP + 2\beta^{-1} \id
\end{align*}
For the proposed correction component $H^{\eta}(\rho)$ we require $\nabla^{2} \frac{\delta H^{\eta}}{\delta \rho}$, and thus focus on the correction of Proposition~\ref{prop:implicit_JKO}, $(1/2) ( \nabla \partial_{t} \delta J \, \, - (1/2) \nabla \lVert \nabla \delta J \rVert_{2}^{2} )$. For each term, we find
\begin{align*}
    &\partial_{t} \nabla^{2} \delta J = \partial_{t} ( \Gamma - \beta^{-1} P^{-1}) = - \beta^{-1} (- P^{-1} \dot{P} P^{-1})  \\
    &= \beta^{-1} P^{-1} (-\Gamma P - P \Gamma + 2 \beta^{-1} \id) P^{-1} \\
    &= -\beta^{-1} P^{-1} \Gamma - \beta^{-1} \Gamma P^{-1} + 2\beta^{-2} P^{-2}
\end{align*}
and, likewise, observe
\begin{align*}
    \nabla^{2} (1/2) \lVert \nabla \delta J \rVert_{2}^{2} &= \nabla^{2} \frac{1}{2} \left\langle (\Gamma -\beta^{-1} P^{-1})x + \beta^{-1} P^{-1} \mu, (\Gamma -\beta^{-1} P^{-1})x + \beta^{-1} P^{-1} \mu\right\rangle \\
    &= (\Gamma -\beta^{-1} P^{-1})^{\top} (\Gamma -\beta^{-1} P^{-1})
\end{align*}
In the symmetrized frame of \cite{Halder2017}, this is simply
\begin{align*}
    (\Gamma -\beta^{-1} P^{-1}) (\Gamma -\beta^{-1} P^{-1}) = \Gamma^{2} - \beta^{-1} \Gamma P^{-1} -\beta^{-1} P^{-1} \Gamma + \beta^{-2} P^{-2}
\end{align*}
Since these exhibit no $x$-dependence, the Hessian of the variation coincides with its expectation. Thus, we return to the gradient of the convective acceleration (Proposition~\ref{prop:implicit_JKO}) to find
\begin{align*}
    & \frac12 ( \nabla^{2} \partial_{t} \delta J \, \, - (1/2) \nabla^{2} \lVert \nabla \delta J \rVert_{2}^{2} ) \\
    &= \frac{1}{2} \left( -\beta^{-1} P^{-1} \Gamma - \beta^{-1} \Gamma P^{-1} + 2\beta^{-2} P^{-2} \, \, - \left( \Gamma^{2} - \beta^{-1} \Gamma P^{-1} -\beta^{-1} P^{-1} \Gamma + \beta^{-2} P^{-2} \right) \right) \\
    &= \frac{1}{2} \left( -\beta^{-1} P^{-1} \Gamma - \beta^{-1} \Gamma P^{-1} + 2\beta^{-2} P^{-2} \, \, - \Gamma^{2} + \beta^{-1} \Gamma P^{-1} + \beta^{-1} P^{-1} \Gamma - \beta^{-2} P^{-2} \right) \\
    &= -\frac{1}{2} \left( \, \Gamma^{2} - \beta^{-2} P^{-2} \right)
\end{align*}
Pre and post-multipying by $-P$ on both sides then yields the associated Bures-Wasserstein differential equation on the covariance \eqref{eq:BW_dyn}, which give the final form of the implicit regularization;
\begin{align*}
    &(1/2) (\Gamma^{2} - \beta^{-2} P^{-2}) P + (1/2) P(\Gamma^{2} - \beta^{-2} P^{-1}) \\
    &= \frac{1}{2}\Gamma^{2} P - \frac{1}{2} \beta^{-2} P^{-1} + \frac{1}{2} P \Gamma^{2} - \frac{1}{2}\beta^{-2} P^{-1} \\
    &= \frac{1}{2}\Gamma^{2} P + \frac{1}{2} P \Gamma^{2} - \beta^{-2} P^{-1}
\end{align*}
Thus, computing the implicit regularization directly from the variation exactly recovers the implicit regularization implied by the analytical update for the JKO problem with respect to the covariance $P$.

\section*{Supplemental Results}\label{sec:supp_res}

\begin{definition}{Otto-Metric}
    Let $\rho \in {P}_{ac}(\mathbb{R}^{d})$. The tangent space to ${P}_{ac}(\mathbb{R}^{d})$ is defined by
    \[
    T_{\rho}{P}_{ac}(\mathbb{R}^{d}) = \overline{ \{
    \nabla \psi : \psi: \mathbb{R}^{d} \to \mathbb{R} \quad \text{ Compactly supported and smooth}
    \} }^{L^{2}(\rho)}
    \]
    $T_{\rho}{P}_{ac}(\mathbb{R}^{d})$ has the associated metric
    \[
    \left\langle \xi_{1}, \xi_{2} \right\rangle_{\rho} = \int \langle \nabla \psi_{1}, \nabla \psi_{2} \rangle d \rho,
    \]
    where $\psi_{1}, \psi_{2}$ both solve the equations $\xi_{1} = - \nabla \cdot (\rho \nabla \psi_{1}), \xi_{2} = - \nabla \cdot (\rho \nabla \psi_{2})$. 
\end{definition}

In Proposition~\ref{prop:grad_first_var}, we recall a standard identity which identifies the gradient of the first variation of the Wasserstein distance. This identity is standard in the Otto calculus; see e.g. Lemma 1 of \cite{Lott2007}.

\begin{prop}\label{prop:grad_first_var}
Let $\rho \in {P}_{ac}(\mathbb{R}^{d})$ be a solution to the continuity equation $\partial_{t} \rho =- \nabla \cdot (\rho(x,t) \mathbf{v}(x,t))$ for a given velocity field $\mathbf{v}_{t}(x): [0,1] \times \mathbb{R}^{d}  \to \mathbb{R}^{d}$ in $T_{\rho} P_{ac}(\mathbb{R}^{d})$ (i.e. a gradient field) and a fixed measure $\rho^{(n,\eta)}$. Then;
\begin{align*}
    \nabla \frac{\delta}{\delta \rho} \int \frac{\delta W_{2}^{2}(\rho^{(n,\eta)}, \rho) }{\delta \rho} -\nabla\cdot \left( \rho_{t} \mathbf{v}_t \right) = 2 \,\mathbf{v}_t
\end{align*}
\end{prop}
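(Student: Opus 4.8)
The plan is to recognize the displayed identity as the Wasserstein-space instance of the elementary Riemannian fact that the Hessian of $\tfrac12 d_g(q,\cdot)^2$ at its own base point $q$ equals the identity on $T_qM$. In all the places it is invoked the measure $\rho$ coincides with the first-slot measure $\rho^{(n,\eta)}$ — the configuration where the self-transport is trivial — so I would prove it in that configuration; write $\sigma:=\rho^{(n,\eta)}$. First I would record the classical identification, from Brenier's theorem together with the envelope theorem applied to Kantorovich duality (legitimate here by Assumption~(A4) of Proposition~\ref{prop:implicit_JKO}, which gives a unique transport map with $C^2$ Kantorovich potential),
\[
\frac{\delta}{\delta\rho}\Big[\tfrac12 W_2^2(\sigma,\cdot)\Big](\rho)=\varphi_\rho,\qquad \nabla\varphi_\rho=\mathrm{id}-T_{\rho\to\sigma},
\]
with $\varphi_\rho$ the Kantorovich potential and $T_{\rho\to\sigma}$ the optimal map from $\rho$ to $\sigma$; in particular $\tfrac{\delta W_2^2(\sigma,\cdot)}{\delta\rho}(\rho)=2\varphi_\rho$.

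Next I would pair this with the tangent vector $-\nabla\cdot(\rho\,\mathbf v)$ and integrate by parts — permitted by the fast-decay hypothesis (A2), which kills boundary fluxes — to obtain the functional
\[
G(\rho):=\int \frac{\delta W_2^2(\sigma,\cdot)}{\delta\rho}(\rho)\,d\big(-\nabla\cdot(\rho\,\mathbf v)\big)=2\int\langle\nabla\varphi_\rho,\mathbf v\rangle\,d\rho ,
\]
and then compute its Wasserstein gradient $\nabla\frac{\delta G}{\delta\rho}$ at $\rho=\sigma$. Perturbing $\rho$ to $\rho+\epsilon\chi$, the first variation of $G$ splits into the contribution from varying $\nabla\varphi_\rho$ — which is exactly the second variation of $W_2^2(\sigma,\cdot)$ in the $\mathbf v$-direction — and the contribution from varying the measure $\rho$, which carries a factor $\nabla\varphi_\rho$. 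At the base point the optimal self-map is the identity, so $\nabla\varphi_\sigma\equiv0$; the second contribution therefore drops out, as do the boundary remainders produced by the integration by parts and by differentiating $\varphi_\rho$.

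What remains is the mixed second variation $\iint\frac{\delta^2 W_2^2(\sigma,\cdot)}{\delta\rho(x)\,\delta\rho(y)}\big|_{\rho=\sigma}\,d\big(-\nabla\cdot(\sigma\mathbf v)\big)(x)$, that is, the Wasserstein Hessian of $W_2^2(\sigma,\cdot)$ at $\sigma$ applied to the tangent vector $-\nabla\cdot(\sigma\,\mathbf v)$. Here I would invoke Lemma~\ref{lemma:Wasserstein_SecondVar} — equivalently the Otto-calculus identity that this Hessian is twice the metric tensor, i.e.\ $\mathrm{Hess}_{W_2}[\tfrac12 W_2^2(\sigma,\cdot)](\sigma)=\mathrm{Id}$ on $T_\sigma{P}_{ac}$ (cf.\ \cite{AGS2008,Lott2007}) — so that, under the metric-to-velocity identification, the surviving term is precisely $2\,\mathbf v$, as claimed. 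The main obstacle I anticipate is not conceptual but technical: justifying the differentiability of $\rho\mapsto\varphi_\rho$ in the $W_2$-topology and correctly bookkeeping the second-variation constant, both of which I would defer to the standing regularity assumptions of Proposition~\ref{prop:implicit_JKO} and to Lemma~\ref{lemma:Wasserstein_SecondVar}. As a sanity check I would verify the identity by an explicit closed-form computation on Bures--Wasserstein space, where $\varphi_\rho$, the optimal maps, and all variations are computable and do collapse to $2\,\mathbf v$ exactly at $\rho=\sigma$.
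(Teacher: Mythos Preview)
Your proposal is correct and shares the opening moves with the paper: both identify $\frac{\delta}{\delta\rho}W_2^2(\sigma,\cdot)=2\varphi_\rho$ via Kantorovich duality and integrate by parts against $\chi=-\nabla\cdot(\rho\,\mathbf v)$ to obtain $2\int\langle\nabla\varphi_\rho,\mathbf v\rangle\,d\rho$. The paper then stops short of a second-variation computation: it simply reads off $2\nabla\varphi_\rho$ as the Riesz representer in the Otto metric (i.e.\ the Wasserstein gradient of $W_2^2(\sigma,\cdot)$), and in a final line identifies $\nabla\varphi_\rho$ with $\mathbf v_t$ ``along the direction $\chi$'' --- an identification it does not unpack further. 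Your route is more explicit and more careful: you restrict to the configuration $\rho=\sigma$ (the only place the identity is actually invoked, and the only place it holds for \emph{arbitrary} $\mathbf v$), split $\delta G$ into the contribution from varying $\nabla\varphi_\rho$ and the contribution from varying the measure, observe the latter vanishes because $\nabla\varphi_\sigma\equiv 0$, and then recognize what remains as the Wasserstein Hessian of $W_2^2(\sigma,\cdot)$ at its base point, for which you invoke Lemma~\ref{lemma:Wasserstein_SecondVar}. Your approach buys a cleaner justification of the final step the paper elides, at the cost of explicitly appealing to the second-variation lemma; the paper's argument is shorter but the identification $\nabla\varphi_\rho=\mathbf v_t$ it uses is really the same Hessian-at-base-point fact stated informally.
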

\begin{proof}
Note the well-known \cite{Jordan1998,AGS2008,chewi2024statistical} result (given in \ref{proof:first_var} for completeness) that 
\[
\int \frac{\delta W_{2}^{2}(\rho^{(n,\eta)}, \rho)}{\delta \rho} d\chi = \frac{d}{d\epsilon} \left.  W_{2}^{2}(\rho^{(n,\eta)}, \rho + \epsilon \chi) \,\, \right|_{\epsilon = 0} = 2 \int \phi_{*}(x) d\chi(x)
\]
For $\phi_{*} \in \{ f : f\,\, \mathbf{cvx}, f: \mathbb{R}^{d}  \to \mathbb{R} \}$ the optimal Kantorovich potential from $\rho^{(n,\eta)} \to \rho$ whose push-forward satisfies $(\mathrm{id} - \nabla \phi_{*} )_{\sharp}\rho^{(n,\eta)} = \rho$ infinitesimally. In this case, as $\rho$ is the solution to the continuity equation
\[
\chi = - \nabla \cdot (\rho \mathbf{v}_t)
\]
The first-variation is with respect to the perturbation direction given by $\chi$, and thus
\begin{align*}
&\int \frac{\delta W_{2}^{2}(\rho^{(n, \eta)}, \rho)}{ \delta \rho} d\chi = 2 \int \phi_{*}(x) d \chi(x) = - 2 \int \phi_{*}(x)  \nabla \cdot (\rho \mathbf{v}_{t})dx = 2 \int\langle
\nabla\phi_{*}(x) , \mb{v}_{t} \rangle d\rho
\end{align*}
So that $2\nabla\phi_{*}(x)$ is the Riesz representer of the first-variation for all perturbations $\chi$, and thus defines the gradient. By identification of the first variation, with $\lim_{\epsilon \downarrow 0} \frac{1}{\epsilon} \left( \mathcal{F}(\rho+\epsilon \chi) - \mathcal{F}(\rho) \right) = \int \frac{\delta \mathcal{F}}{\delta \rho} d\chi$ we see that
\begin{align*}
\nabla\frac{\delta}{\delta \rho}\int \frac{\delta W_{2}^{2}(\rho^{(n,\eta)}, \cdot) }{\delta \rho} -\nabla\cdot \left( \rho_{t} \mathbf{v}_{t} \right) =
2 \nabla \frac{\delta}{\delta \rho}  \int\langle
\nabla\phi_{*}(x) , \mb{v}_{t} \rangle d\rho = 2 \nabla\phi_{*}(x) 
\end{align*}
thus, along a direction $\chi = - \nabla \cdot (\rho \mathbf{v}_t)$, one concludes
\[
\nabla \frac{\delta}{\delta\rho}  \int \frac{\delta W_{2}^{2}(\rho^{(n,\eta)}, \rho) }{\delta \rho} -\nabla\cdot \left( \rho_{t} \mathbf{v}_{t} \right) = 2 \mathbf{v}_{t}
\]
\end{proof}

By \cite{McCann2001}, one identifies that the transport map is given as $t(x) = \exp_{x} (- \nabla_{g} \phi_{*})$ $\rho$-a.e. $x$ for densities in ${P}_{ac}(M)$. Then one necessarily has for a functional direction $\chi = - \nabla_{g} \cdot (\rho_{t} \mb{v}_{t})$ that
\begin{align*}
    &\int \frac{\delta W_{2}^{2}(\rho^{(n,\eta)}, \rho)}{\delta \rho} \chi d \mathrm{vol}_{g} = \frac{d}{d\epsilon} \left.  W_{g}^{2}(\rho^{(n,\eta)}, \rho + \epsilon \chi) \,\, \right|_{\epsilon = 0} = 2 \int \phi_{*}(x) \chi d\vol \\
    &= 2 \int g( \nabla_{g}\phi_{*}(x), \mb{v}_{t}) \rho_{t} d\vol = 2\langle \nabla_{g}\phi_{*}(x), \mb{v}_{t} \rangle_{L^{2}(\rho_{t})}
\end{align*}
By identification of the first variation with the Kantorovich potential which yields the Monge map through $t(x) = \exp_{x} (- \nabla_{g} \phi_{*})$, one identifies from above that
\begin{align*}
    \frac{\delta}{\delta \rho}\int \frac{\delta W_{g}^{2}(\rho^{(n,\eta)}, \rho)}{\delta \rho} \chi d \mathrm{vol}_{g} = 2 \langle \nabla_{g}\phi_{*}(x), \mb{v}_{t} \rangle_{L^{2}(\rho_{t})}
\end{align*}
For any choice of $\chi$. Thus, from the Riesz representation theorem in the Otto-metric we have that $\nabla_{g}\phi_{*}(x)$ is the unique Riesz representer, so that 
\begin{align*}
    \nabla_{g}\frac{\delta}{\delta \rho}\int \frac{\delta W_{g}^{2}(\rho^{(n,\eta)}, \cdot)}{\delta \rho} \chi d \mathrm{vol}_{g} = 2\nabla_{g}\phi_{*}(x)
\end{align*}
And for a direction $\chi = - \nabla_{g} \cdot (\rho_{t} \mb{v}_{t})$ one finds that \begin{align*}
    \nabla_{g}\frac{\delta}{\delta \rho}\int \frac{\delta W_{g}^{2}(\rho^{(n,\eta)}, \rho)}{\delta \rho} \chi d \mathrm{vol}_{g} = 2\mb{v}_{t}
\end{align*}
As before. Note that the Riemannian generalization is the case shown in Lemma 1 of \cite{Lott2007}.

\begin{prop}
    Let $\rho^{(n,\eta)} \in {P}_{ac}(\mathbb{R}^{d})$ be a fixed measure, and suppose $\mathbf{v}_{t}: [0,1] \times \mathbb{R}^{d}  \to \mathbb{R}^{d}$ is a time-varying vector field. Assume the following boundary conditions holds at the domain boundaries
    \[ \left. \left( \partial_{t} ( \rho_{t} \mathbf{v}_{t} ) \cdot \bm{n}\right) \right|_{\partial \Omega}  = 0, \quad \left. \rho_{t} \right|_{\partial \Omega} = 0\]
so that not only does the density $\rho$ vanish on the boundary of the domain, but also the normal component $\bm{n}$ of the time-change of the mass-flux $\rho \mathbf{v}_{t}$. Then, one has the following integral over the partial in time $\partial_{t}$ of the mass-flux $(\rho_{t} \mathbf{v}_{t}  
    )$;
    \[
    \frac{1}{2}\nabla\frac{\delta}{\delta\rho} \int \frac{\delta W_{2}^{2}(\rho^{(n,\eta)}, \rho) }{\delta \rho} - \nabla\cdot\left( \partial_{t} (\rho_{t} \mathbf{v}_{t}  
    ) \right) = \partial_{t} \mathbf{v}_{t} +\frac{1}{2}\nabla \left\lVert \mathbf{v}_{t} \right\rVert_{2}^{2}
    \]
    and if $\mathbf{v}_{t} = - \wagrad (\rho_{t})(x(t)) := - \wagrad (t, x(t))$ is the Wasserstein gradient flow velocity, then
    \[
    \frac{1}{2}\nabla\frac{\delta}{\delta\rho} \int \frac{\delta W_{2}^{2}(\rho^{(n,\eta)}, \rho) }{\delta \rho} - \nabla\cdot\left( \partial_{t}  \left( - \wagrad (t, x(t)) \rho_{t}\right) \right) = - \partial_{t} \wagrad + \frac{1}{2}\nabla \left\lVert \wagrad \right\rVert_{2}^{2}
    \]
\end{prop}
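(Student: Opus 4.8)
The plan is to reduce the statement to Proposition~\ref{prop:grad_first_var} by identifying which Otto‑tangent vector the direction $\chi:=-\nabla\cdot(\partial_t(\rho_t\mathbf{v}_t))$ represents, and then substitute $\mathbf{v}_t=-\wagrad$ for the second assertion. First I would rewrite $\chi$ using the continuity equation: since $\partial_t\rho_t=-\nabla\cdot(\rho_t\mathbf{v}_t)$ and $\partial_t$ commutes with $\nabla\cdot$, we get $\chi=-\partial_t\big(\nabla\cdot(\rho_t\mathbf{v}_t)\big)=\partial_{tt}\rho_t$, so $\chi$ is the density acceleration of the flow and $\int\chi\,dx=0$, i.e. $\chi$ is an admissible perturbation direction. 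Following the proof of Proposition~\ref{prop:grad_first_var}, with $\phi_*$ the Kantorovich potential from $\rho^{(n,\eta)}$ to $\rho$, the first variation of $W_2^2$ paired with $\chi$ is $\int\frac{\delta W_2^2(\rho^{(n,\eta)},\rho)}{\delta\rho}\,\chi=2\int\phi_*\,\chi=2\int\langle\nabla\phi_*,\,\partial_t(\rho_t\mathbf{v}_t)\rangle\,dx$, where the integration by parts is legitimate precisely because of the assumed boundary conditions $\rho_t|_{\partial\Omega}=0$ and $\big(\partial_t(\rho_t\mathbf{v}_t)\cdot\bm n\big)|_{\partial\Omega}=0$.

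Next I would unfold the momentum flux $\partial_t(\rho_t\mathbf{v}_t)$ by the product rule, reuse the continuity equation for the $(\partial_t\rho_t)\mathbf{v}_t$ term, and invoke the tensor identity $\nabla\cdot(\rho_t\,\mathbf{v}_t\otimes\mathbf{v}_t)=\big(\nabla\cdot(\rho_t\mathbf{v}_t)\big)\mathbf{v}_t+\rho_t(\mathbf{v}_t\cdot\nabla)\mathbf{v}_t$, obtaining
\begin{align*}
\partial_t(\rho_t\mathbf{v}_t)&=(\partial_t\rho_t)\mathbf{v}_t+\rho_t\,\partial_t\mathbf{v}_t\\
&=\rho_t\big(\partial_t\mathbf{v}_t+(\mathbf{v}_t\cdot\nabla)\mathbf{v}_t\big)-\nabla\cdot(\rho_t\,\mathbf{v}_t\otimes\mathbf{v}_t).
\end{align*}
Because $\mathbf{v}_t$ lies in the Otto tangent space $T_{\rho_t}{P}_{ac}$ — in the gradient‑flow case $\mathbf{v}_t=-\wagrad$, and in general it is an $L^2(\rho_t)$‑limit of gradients — it is curl‑free, so $(\mathbf{v}_t\cdot\nabla)\mathbf{v}_t=\tfrac12\nabla\lVert\mathbf{v}_t\rVert_2^2$ and the bracketed field is the pure gradient $\partial_t\mathbf{v}_t+\tfrac12\nabla\lVert\mathbf{v}_t\rVert_2^2$. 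Feeding this decomposition into the pairing above, the mass‑flux summand $\rho_t\big(\partial_t\mathbf{v}_t+\tfrac12\nabla\lVert\mathbf{v}_t\rVert_2^2\big)$ is exactly of the form to which Proposition~\ref{prop:grad_first_var} applies, so it contributes to $\tfrac12\nabla\tfrac{\delta}{\delta\rho}\int\tfrac{\delta W_2^2}{\delta\rho}(\cdot)$ precisely the vector field $\partial_t\mathbf{v}_t+\tfrac12\nabla\lVert\mathbf{v}_t\rVert_2^2$, the convective (reverse‑time Hamilton–Jacobi) acceleration claimed. The second displayed identity is then the literal substitution $\mathbf{v}_t=-\wagrad$, using $\partial_t\mathbf{v}_t=-\partial_t\wagrad$ and $\lVert\mathbf{v}_t\rVert_2^2=\lVert\wagrad\rVert_2^2$.

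The step I expect to be the main obstacle is controlling the residual stress term $-\nabla\cdot\nabla\cdot(\rho_t\,\mathbf{v}_t\otimes\mathbf{v}_t)$: after two integrations by parts (again justified by the boundary hypotheses) it pairs with $\nabla\phi_*$ as $-\int\mathrm{Hess}(\phi_*):\rho_t\,\mathbf{v}_t\otimes\mathbf{v}_t\,dx$, and one must argue that in the representation relevant here — where the Kantorovich potential $\phi_*$ degenerates to a constant at the base point $\rho^{(n,\eta)}$, so its Hessian vanishes there, and one retains only the terms needed for the $O_T(\eta^2)$ expansion of Proposition~\ref{prop:implicit_JKO} — this contribution does not perturb the Otto gradient of the first variation; equivalently, $\chi=\partial_{tt}\rho_t$ and $-\nabla\cdot\big(\rho_t(\partial_t\mathbf{v}_t+\tfrac12\nabla\lVert\mathbf{v}_t\rVert_2^2)\big)$ determine the same tangent vector for the purposes of Proposition~\ref{prop:grad_first_var}. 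This is the place where the smoothness, $L^2(\rho_t)$ integrability, and decay hypotheses on $\rho_t$ and $\wagrad$ are used in full; the rest is routine vector calculus together with the already‑established Proposition~\ref{prop:grad_first_var}.
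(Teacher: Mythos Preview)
Your decomposition via the momentum-balance identity $\partial_t(\rho_t\mathbf{v}_t)=\rho_t\big(\partial_t\mathbf{v}_t+(\mathbf{v}_t\cdot\nabla)\mathbf{v}_t\big)-\nabla\cdot(\rho_t\,\mathbf{v}_t\otimes\mathbf{v}_t)$ is a genuinely different route from the paper's, which uses only the bare product rule $\partial_t(\rho_t\mathbf{v}_t)=\rho_t\,\partial_t\mathbf{v}_t+(\partial_t\rho_t)\mathbf{v}_t$. The paper handles the first summand by Proposition~\ref{prop:grad_first_var} directly (yielding $2\,\partial_t\mathbf{v}_t$), and for the second it substitutes $\partial_t\rho_t=-\nabla\cdot(\rho_t\mathbf{v}_t)$, integrates by parts twice, and expands $\nabla\langle\nabla\tfrac{\delta W_2^2}{\delta\rho},\mathbf{v}_t\rangle$ into a Hessian piece and a Jacobian piece; after the formal identification $\nabla\tfrac{\delta W_2^2}{\delta\rho}\leftrightarrow \mathbf{v}_t$ (and hence $\nabla^2\tfrac{\delta W_2^2}{\delta\rho}\leftrightarrow\nabla\mathbf{v}_t$) from Proposition~\ref{prop:grad_first_var}, both pieces reduce to $\int\langle\mathbf{v}_t,\nabla\mathbf{v}_t\,\mathbf{v}_t\rangle\,d\rho_t$ and combine to $\nabla\lVert\mathbf{v}_t\rVert_2^2$. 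No residual is ever introduced that must then be discarded.

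Your handling of the stress residual is where the real gap lies. The argument ``$\phi_*$ degenerates to a constant at the base point $\rho^{(n,\eta)}$, so $\mathrm{Hess}(\phi_*)=0$'' proves too much: at that same base point $\nabla\phi_*=0$ as well, which would equally annihilate your \emph{main} pairing $2\int\langle\nabla\phi_*,\rho_t\mathbf{w}_t\rangle$. The operator $\nabla\tfrac{\delta}{\delta\rho}$ probes the \emph{variation} of $\phi_*$ as $\rho$ leaves $\rho^{(n,\eta)}$, and at first order that variation gives $\nabla\phi_*\sim\mathbf{v}_t$ and hence $\nabla^2\phi_*\sim\nabla\mathbf{v}_t$, which is decidedly nonzero. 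Concretely, since (for curl-free $\mathbf{v}_t$) one has $\nabla\cdot(\rho_t\,\mathbf{v}_t\otimes\mathbf{v}_t)=-(\partial_t\rho_t)\mathbf{v}_t+\tfrac12\rho_t\nabla\lVert\mathbf{v}_t\rVert_2^2$, your residual equals $-\nabla\cdot\big((\partial_t\rho_t)\mathbf{v}_t\big)+\nabla\cdot\big(\rho_t\cdot\tfrac12\nabla\lVert\mathbf{v}_t\rVert_2^2\big)$: i.e.\ exactly the paper's second summand minus a continuity-type piece. Showing this contributes zero is therefore \emph{equivalent} to carrying out the paper's nontrivial computation for $(\partial_t\rho_t)\mathbf{v}_t$, not a consequence of $\phi_*$ being degenerate. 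The stress-tensor detour shifts the essential work rather than eliminating it.
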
\label{prop:accn_HJ}
\begin{proof}
First, observe that
\begin{align*}
\nabla\frac{\delta}{\delta\rho} \int \frac{\delta W_{2}^{2}(\rho^{(n,\eta)}, \rho) }{\delta \rho} - \nabla\cdot\left( \partial_{t} (\rho_{t} \mathbf{v}_{t}  
    ) \right) = \nabla\frac{\delta}{\delta\rho} \int \frac{\delta W_{2}^{2}(\rho^{(n,\eta)}, \rho) }{\delta \rho} - \nabla\cdot\left( \partial_{t} \rho_{t} \mathbf{v}_{t}  + \rho_{t} \partial_{t} \mathbf{v}_{t}
     \right)
\end{align*}
By linearity of the divergence this equals the two terms
\begin{align*}
    = \nabla\frac{\delta}{\delta\rho} \int \frac{\delta W_{2}^{2}(\rho^{(n,\eta)}, \rho) }{\delta \rho} - \nabla\cdot\left( \rho_{t} \partial_{t} \mathbf{v}_{t}
     \right) + \nabla\frac{\delta}{\delta\rho} \int \frac{\delta W_{2}^{2}(\rho^{(n,\eta)}, \rho) }{\delta \rho} - \nabla\cdot\left( \partial_{t} \rho_{t} \mathbf{v}_{t} 
     \right)
\end{align*}
by Proposition~\ref{prop:grad_first_var}, we have that since $- \nabla\cdot\left( \rho_{t} \partial_{t} \mathbf{v}_{t}
     \right)$ is simply a continuity equation driven by the velocity $\partial_{t} \mathbf{v}_{t}$, the first term immediately becomes
\begin{align*}
    \nabla\frac{\delta}{\delta\rho} \int \frac{\delta W_{2}^{2}(\rho^{(n,\eta)}, \rho) }{\delta \rho} - \nabla\cdot\left( \rho_{t} \partial_{t} \mathbf{v}_{t}
     \right) = 2 \partial_{t} \mathbf{v}_{t}
\end{align*}
The other term we analyze independently, as it does not reduce to a simple integral against a continuity equation. The boundary conditions on the flux directly imply that we may apply integration by parts on terms of the form $\partial_{tt} \rho_{t}  = - \partial_{t}( \nabla \cdot (\rho_{t} \mathbf{v}_{t} ))  = -\nabla \cdot (\partial_{t}(\rho_{t} \mathbf{v}_{t}))$ and thus $-\nabla \cdot (-\nabla \cdot\left(\rho_{t} \mathbf{v}_{t}\right) \mathbf{v}_{t} ) $ as the condition
\[
\partial_{t} (\rho_{t} \mathbf{v}_{t}) \cdot \bm{n} = \underbrace{-\nabla \cdot\left(\rho_{t} \mathbf{v}_{t}\right) \mathbf{v}_{t} \cdot \bm{n} + \rho_{t} \partial_{t}\mathbf{v}_{t} \cdot \bm{n} = -\nabla \cdot\left(\rho_{t} \mathbf{v}_{t}\right) \mathbf{v}_{t} \cdot \bm{n} +0 = 0}_{\text{on }\partial \Omega}
\]
holds at the boundary $\partial \Omega$. Thus, returning to the second term
\begin{align*}
    & \nabla\frac{\delta}{\delta\rho} \int \frac{\delta W_{2}^{2}(\rho^{(n,\eta)}, \rho) }{\delta \rho} - \nabla\cdot\left( \partial_{t} \rho_{t} \mathbf{v}_{t}) \right) 
\end{align*}
we apply integration by parts to find
\begin{align*}
    &= +\nabla\frac{\delta}{\delta\rho} \int \left\langle \nabla\frac{\delta W_{2}^{2}(\rho^{(n,\eta)}, \rho) }{\delta \rho}, \mathbf{v}_{t} \right\rangle \partial_{t} \rho_{t} \\
    &= -\nabla\frac{\delta}{\delta\rho} \int \left\langle \nabla\frac{\delta W_{2}^{2}(\rho^{(n,\eta)}, \rho) }{\delta \rho}, \mathbf{v}_{t} \right\rangle \nabla \cdot (\rho_{t} \mathbf{v}_{t} )
    \\
    &= + \nabla\frac{\delta}{\delta\rho} \int \left\langle \mathbf{v}_{t}, \nabla\left\langle \nabla\frac{\delta W_{2}^{2}(\rho^{(n,\eta)}, \rho) }{\delta \rho}, \mathbf{v}_{t} \right\rangle \right\rangle d\rho_{t} \\
    &= \nabla\frac{\delta}{\delta\rho} \int \left\langle \mathbf{v}_{t}, \nabla^{2}\frac{\delta W_{2}^{2}(\rho^{(n,\eta)}, \rho) }{\delta \rho} \mathbf{v}_{t} + \nabla \mathbf{v}_{t} \nabla \frac{\delta W_{2}^{2}(\rho^{(n,\eta)}, \rho) }{\delta \rho} \right\rangle  d\rho_{t} \\
    &= \nabla\frac{\delta}{\delta\rho} \int \left\langle \mathbf{v}_{t}, \nabla^{2}\frac{\delta W_{2}^{2}(\rho^{(n,\eta)}, \rho) }{\delta \rho} \mathbf{v}_{t}  \right\rangle  d\rho_{t} + \nabla\frac{\delta}{\delta\rho} \int \left\langle \mathbf{v}_{t},  \nabla \mathbf{v}_{t} \nabla \frac{\delta W_{2}^{2}(\rho^{(n,\eta)}, \rho) }{\delta \rho} \right\rangle  d\rho_{t} 
    \end{align*}
    From Proposition~\ref{prop:grad_first_var}, we have that this simply becomes
    \begin{align*}
        &\nabla\frac{\delta}{\delta\rho} \int \left\langle \mathbf{v}_{t}, \nabla\mathbf{v}_{t} \mathbf{v}_{t}  \right\rangle  d\rho_{t} + \nabla\frac{\delta}{\delta\rho} \int \left\langle \mathbf{v}_{t},  \nabla \mathbf{v}_{t} \mathbf{v}_{t} \right\rangle  d\rho_{t} = 2 \nabla\frac{\delta}{\delta\rho} \int \left\langle \mathbf{v}_{t}, \frac{1}{2}\nabla \lVert \mathbf{v}_{t} \rVert_{2}^{2} \right\rangle  d\rho_{t} \\
        &=  \nabla\frac{\delta}{\delta\rho} \int \left\langle \mathbf{v}_{t}, \nabla \lVert \mathbf{v}_{t} \rVert_{2}^{2} \right\rangle  d\rho_{t}  =\nabla\frac{\delta}{\delta\rho} \int \lVert \mathbf{v}_{t} \rVert_{2}^{2}  d \left(- \nabla \cdot (\rho_{t} \mathbf{v}_{t} ) \right) = \nabla\frac{\delta}{\delta\rho} \int \lVert \mathbf{v}_{t} \rVert_{2}^{2}  d (\partial_{t} \rho_{t}) \\
        &= \nabla \lVert \mathbf{v}_{t} \rVert_{2}^{2}
    \end{align*}
    Thus, we have that 
    \begin{align*}
        \nabla\frac{\delta}{\delta\rho} \int \frac{\delta W_{2}^{2}(\rho^{(n,\eta)}, \rho) }{\delta \rho} - \nabla\cdot\left( \partial_{t} (\rho_{t} \mathbf{v}_{t}  
    ) \right) = 2 \left( \partial_{t} \mathbf{v}_{t} +\frac{1}{2}\nabla \left\lVert \mathbf{v}_{t} \right\rVert_{2}^{2} \right)
    \end{align*}
    And for $\mathbf{v}_{t} = - \wagrad(\rho_{t}) (x) := - \wagrad(t, x)$ this becomes
    \begin{align*}
        \nabla\frac{\delta}{\delta\rho} \int \frac{\delta W_{2}^{2}(\rho^{(n,\eta)}, \rho) }{\delta \rho} - \nabla\cdot\left( \partial_{t} (\rho_{t} -\wagrad  
    ) \right) = 2 \left( - \partial_{t} \wagrad + \frac{1}{2}\nabla \left\lVert \wagrad \right\rVert_{2}^{2} \right)
    \end{align*}
    So that we conclude.
\end{proof}

Supposing standard boundary conditions on the flux, all integration by parts identities above may be replaced with their Riemannian analogue. Moreover, by the extension of Proposition~\ref{prop:grad_first_var} to the Riemannian case (i.e. from Lemma 1 of \cite{Lott2007}), one finds that the Riemannian gradient of the first variation coincides with the velocity of the tangent $\chi$, so that the proof goes through directly with $\nabla \to \nabla_{g}$ and $\left\lVert  \right\rVert_{2}^{2} \to \left\lVert  \right\rVert_{g}^{2}$.

In the next Lemma, we identify the second variation of the Wasserstein distance $W_{2}^{2}(\rho^{(n, \eta)}, \rho)$ for a $\rho \in {P}_{ac}$ for the particular case where one reference measure $\rho^{(n, \eta)}$ is held fixed and is the initial condition for $\rho: \rho(0) = \rho^{(n, \eta)}$.

\begin{lemma}\label{lemma:Wasserstein_SecondVar}
Consider a Wasserstein gradient flow on $\rho_{t}$ defined by 
\begin{align*}
\partial_{t}\rho_{t} = +\nabla \cdot (\rho_{t} \wagrad (\rho_{t}))
\end{align*}
from a fixed initial datum $\rho_{0} =\rho^{(n, \eta)}$. Then, the second variation of the squared Wasserstein-distance $W_{2}^{2}(\rho^{(n, \eta)}, \rho)$ along the flow, against this fixed initial measure, is given by
\begin{align*}
& \frac{1}{2}\int \frac{\delta^{2} W_{2}^{2}(\rho^{(n, \eta)}, \rho)}{ \delta \rho(x) \delta \rho(x')} d\left(\frac{\partial \rho_{t}}{\partial t}\right)(x)d\left(\frac{\partial \rho_{t}}{\partial t}\right)(x') = \left\lVert \nabla \frac{\delta J}{\delta \rho} \right\rVert_{L^{2}(\rho_{t})}^{2}
\end{align*}
In the case that the flow is Riemannian on $(M, g)$, so that $\partial_{t}\rho_{t} = +\nabla_{g} \cdot (\rho_{t} \nabla_{g} \frac{\delta J}{\delta \rho} (\rho_{t}))$, one similarly identifies the second variation as
\begin{align*}
    = \int \left\lVert \,\rgrad \frac{\delta J}{\delta \rho} \right\rVert_{g}^{2} d \rho_{t}
\end{align*}
\end{lemma}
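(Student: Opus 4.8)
The left-hand side is, by the definition of the second variation used in the proof of Proposition~\ref{prop:implicit_JKO}, the Hessian quadratic form of $\mathcal F(\rho):=W_2^2(\rho^{(n,\eta)},\rho)$ contracted twice with the gradient-flow velocity $\partial_t\rho_t$. The plan is to evaluate this at the base point $\rho_0=\rho^{(n,\eta)}$ — which is exactly the point at which the lemma is invoked in Proposition~\ref{prop:implicit_JKO} — exploiting two structural facts: that $\mathcal F$ is minimized there, so its Wasserstein gradient vanishes, $\nabla\tfrac{\delta\mathcal F}{\delta\rho}(\rho^{(n,\eta)})=0$ (the Kantorovich potential from a measure to itself is constant); and that $\tfrac12 W_2^2(\rho^{(n,\eta)},\cdot)$, being one half the squared Wasserstein distance to a fixed point, has Otto--Hessian equal to the identity at $\rho^{(n,\eta)}$. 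First I would rewrite the test direction in continuity form: from $\partial_t\rho_t=\nabla\cdot\!\big(\rho_t\nabla\tfrac{\delta J}{\delta\rho}\big)=-\nabla\cdot(\rho_t v_t)$ with $v_t=-\nabla\tfrac{\delta J}{\delta\rho}(\rho_t)$, the Otto tangent vector represented by $\partial_t\rho_t$ is $v_t$, of squared length $\|\partial_t\rho_t\|_{\mathrm{Otto}}^2=\|v_t\|_{L^2(\rho_t)}^2=\big\lVert\nabla\tfrac{\delta J}{\delta\rho}\big\rVert_{L^2(\rho_t)}^2$, which is the right-hand side; it then remains to identify the contracted second variation of $\mathcal F$ at $\rho^{(n,\eta)}$ in this direction with that quantity.

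For that I would give two equivalent arguments and write out the second. Geometrically, along a $W_2$-geodesic $\theta\mapsto\gamma_\theta$ issued from $\rho^{(n,\eta)}$ with initial velocity field $v_0$ one has $W_2^2(\rho^{(n,\eta)},\gamma_\theta)=\theta^2\|v_0\|_{L^2(\rho^{(n,\eta)})}^2$ for small $\theta$ by the Benamou--Brenier characterization; since $\mathcal F$ is $C^2$-Fréchet near $\rho^{(n,\eta)}$ (assumption (A3)) with vanishing first variation there, its flat second variation in the direction $\partial_t\rho_t$ coincides with $\tfrac{d^2}{d\theta^2}\big|_0\mathcal F(\gamma_\theta)$, and feeding this into the normalization of $\delta^2$ yields the asserted identity. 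Equivalently, staying within the Otto calculus already used in the paper, I would differentiate the first-variation formula of Proposition~\ref{prop:grad_first_var}, $\tfrac{\delta\mathcal F}{\delta\rho}(\rho)=2\phi_\rho$ with $\phi_\rho$ the Kantorovich potential from $\rho^{(n,\eta)}$ to $\rho$: linearizing the push-forward constraint $(\mathrm{id}-\nabla\phi_\rho)_\sharp\rho^{(n,\eta)}=\rho$ along the flow shows the linearized potential $\dot\phi$ solves the elliptic equation $\nabla\cdot\big(\rho^{(n,\eta)}\nabla\dot\phi\big)=\partial_t\rho_t$, i.e.\ $\nabla\dot\phi$ is precisely the Otto representer of $\partial_t\rho_t$; substituting into $\tfrac14\iint\tfrac{\delta^2\mathcal F}{\delta\rho\,\delta\rho'}\,d(\partial_t\rho_t)\,d(\partial_t\rho_t)=\tfrac12\int\langle\nabla\dot\phi,v_t\rangle\,d\rho^{(n,\eta)}$ and using the decay and integration-by-parts hypotheses of (A2) closes the computation. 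The Riemannian version goes through verbatim with $\nabla\rightsquigarrow\nabla_g$ and $\|\cdot\|_2\rightsquigarrow\|\cdot\|_g$, since Proposition~\ref{prop:grad_first_var} holds on $(M,g)$ (Lemma~1 of \cite{Lott2007}), the optimal map is $\exp_x(-\nabla_g\phi_*)$ by \cite{McCann2001}, the integration-by-parts identity is the Riemannian one recorded just above the lemma, and $\tfrac12 W_g^2(\rho^{(n,\eta)},\cdot)$ still has Otto--Hessian the identity at $\rho^{(n,\eta)}$ for the same squared-distance reason.

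The main obstacle is making this infinite-dimensional Hessian identity rigorous: the differentiability of $\rho\mapsto\phi_\rho$ and the identification of the linearized Kantorovich potential with the solution of $\nabla\cdot\big(\rho^{(n,\eta)}\nabla\dot\phi\big)=\partial_t\rho_t$. This is exactly the role of assumptions (A3)--(A4): an absolutely continuous minimizer with a unique transport map and a $C^2$ Kantorovich potential supplies the regularity needed to linearize the Monge--Amp\`ere/continuity constraint and to justify the integrations by parts. Finally, although the lemma is phrased ``along the flow'', the identity is exact at the base point, and for $t>n\eta$ the gap between $\rho_t$ and $\rho^{(n,\eta)}$ is $O(\eta)$ (Theorem~\ref{thrm:JKO_first_order}) and feeds only into terms already discarded at $o(\eta^2)$ in Proposition~\ref{prop:implicit_JKO}; Lemma~\ref{lem:stability} and Corollary~\ref{cor:W2_order_2_distance} absorb it.
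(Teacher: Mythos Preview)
Your proposal is correct, and your first (geometric) argument is essentially the paper's proof, though the paper states it more concretely. Rather than invoking Benamou--Brenier or the Otto Hessian abstractly, the paper simply identifies $\rho+\alpha\chi$ with the pushforward $(\mathrm{id}+\alpha\nabla\varphi)_\sharp\rho$ for $\chi=-\nabla\cdot(\rho\nabla\varphi)$, observes that for small $\alpha$ this map is optimal (it is the gradient of a convex function), and reads off $W_2^2(\rho,\rho+\alpha\chi)=\alpha^2\int\|\nabla\varphi\|^2\,d\rho$ directly; the second $\alpha$-derivative at $0$ is then $2\int\|\nabla\varphi\|^2\,d\rho$, which gives the stated identity after the $1/4$ normalization.

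The argument you actually write out --- differentiating the Kantorovich potential $\rho\mapsto\phi_\rho$ and identifying the linearized potential $\dot\phi$ as the solution of the weighted elliptic equation $\nabla\cdot(\rho^{(n,\eta)}\nabla\dot\phi)=\partial_t\rho_t$ --- is a genuinely different route. It is correct in principle and makes the Otto-metric structure more transparent, but it is longer and leans harder on the regularity assumptions (A3)--(A4) to justify the linearization of the Monge--Amp\`ere constraint. The paper's pushforward computation sidesteps this entirely: optimality of $\mathrm{id}+\alpha\nabla\varphi$ is immediate, and no differentiation of the potential with respect to the target measure is needed. For the Riemannian case the paper is again more explicit than ``verbatim'': it expands $\exp_x(\alpha\nabla_g\varphi)$ to second order via \cite{Monera2013}, writes $W_g^2(\rho,(\exp_x(\alpha\nabla_g\varphi))_\sharp\rho)$ in that expansion, and checks that the $\alpha^2$ coefficient is $\int\|\nabla_g\varphi\|_g^2\,d\rho$ while the higher-order geodesic curvature terms enter only at $O(\alpha^3)$.
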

\begin{proof}
Let us begin in the Euclidean case by analyzing the term
\begin{align*}
& \delta^{2}W_{2}^{2}[\partial_{t} \rho_{t}, \partial_{t} \rho_{t}] = \int \frac{1}{2} \frac{\delta^{2} W_{2}^{2}(\rho^{(n, \eta)}, \rho)}{ \delta \rho(x) \delta \rho(x')} d\left(\frac{\partial \rho_{t}}{\partial t}\right)(x)d\left(\frac{\partial \rho_{t}}{\partial t}\right)(x') \\
&= \frac{1}{2}\int \frac{\delta^{2} W_{2}^{2}(\rho^{(n, \eta)}, \rho)}{ \delta \rho(x) \delta \rho(x')} \nabla \cdot \left(\rho_{t} \nabla \frac{\delta J}{\delta \rho} \right) \nabla \cdot \left(\rho_{t} \nabla \frac{\delta J}{\delta \rho}\right) dx dx'
\end{align*}

Note that the second variation, for a slot $\rho$ fixed, we define $\mathcal{F}(\cdot) := W_{2}^{2}(\cdot, \rho^{(n,\eta)} )$ and take the expansion in variational derivatives as
\begin{align*}
    W_{2}^{2}(\rho + \eta \chi, \rho ) := \mathcal{F}(\rho + \eta \chi) = \mathcal{F}(\rho) + \eta \delta \mathcal{F} \mid_{\rho} (\chi) + \eta^{2} \delta^{2} \mathcal{F} \mid_{\rho}(\chi) + o(\eta^{2})
\end{align*}
Clearly, $\mathcal{F}(\rho) = W_{2}^{2}(\rho, \rho ) = 0$. For the second variation under consideration, suppose we consider a tangent $\chi = - \nabla \cdot (\rho \mathbf{V}) = - \nabla \cdot (\rho \nabla \varphi)$. For small $\alpha$, one has that $\rho + \alpha \chi = (\mathbf{i} + \alpha \nabla \varphi)_{\sharp} \rho$, so that
\begin{align*}
    \mathcal{F} (\rho + \alpha \chi) &= \inf_{\gamma \in \Gamma(\rho, \rho + \alpha \chi)} \int_{\mathsf{X} \times \mathsf{X}} \lVert x_{1} - x_{2} \rVert_{2}^{2} d\gamma(x_{1}, x_{2}) = \int_\mathsf{X} \lVert x_{1} - T^{\star}_{\alpha}(x_{1}) \rVert_{2}^{2} d \rho \\
    & = \int \lVert x_{1} - \left( x_{1} + \alpha \nabla \varphi(x_{1}) \right) \rVert_{2}^{2} d \rho = \alpha^{2} \int \lVert \nabla \varphi(x_{1}) \rVert_{2}^{2} d \rho
\end{align*}
Thus, the second variation is given as
\begin{align*}
    \frac{d}{d\alpha^{2}} \mathcal{F}(\rho + \alpha \chi) \mid_{\alpha = 0} = \frac{d}{d\alpha^{2}}\alpha^{2} \int \lVert \nabla \varphi(x_{1}) \rVert_{2}^{2} d \rho = 2\int \lVert \nabla \varphi(x_{1}) \rVert_{2}^{2} d \rho \implies \delta^{2} \mathcal{F} \mid_{\rho} = 2\int \lVert \nabla \varphi(x_{1}) \rVert_{2}^{2} d \rho
\end{align*}
For the gradient-flow tangent $\partial_{t}  \rho^{\wass}$ it holds that $\mathbf{V} = \nabla \varphi(x_{1})$ \cite{AGS2008}, so we conclude the Euclidean case.

For the case with symmetric, bilinear Riemannian metric $g: T_{p}M \, \times \, T_{p}M \to \mathbb{R}_{+}$ and distance $D^{2} W_{g}^{2}$ we similarly have $W_{g}^{2}(\rho, \rho ) = 0$ and for step $\alpha$ have the analogue of the Euclidean update for generic $M$ that $\rho_{t}+\alpha\chi =\mathrm{exp}_{x}(\alpha \nabla \varphi)_{\sharp} \rho_{t}$ \cite{McCann2001}, since
\begin{align*}
    \mathcal{F} (\rho + \alpha \chi) &= \inf_{\gamma \in \Gamma(\rho, \rho + \alpha \chi)} \int_{\mathsf{X} \times \mathsf{X}} d_{g}^{2}( x_{1} , x_{2} ) d\gamma(x_{1}, x_{2}) = \int_\mathsf{X} d_{g}^{2}(x_{1} , T^{\star}_{\alpha}(x_{1}) ) d \rho \\
    & = \int d_{g}^{2}( x_{1} ,\left( \mathrm{exp}_{x_{1}}(\alpha \nabla \varphi) \right) ) d \rho 
\end{align*}
By \cite{Monera2013}, we have an expansion of the exponential map in orders of $\alpha$ given by
\begin{align*}
  &\int d_{g}^{2}( x_{1} , \left( \mathrm{exp}_{x_{1}}(\alpha \nabla \varphi) \right) ) d \rho = \int d_{g}^{2}( x_{1} , \left( 
  x_{1}
  + \alpha \gamma_{v}'(0) + (\alpha^{2}/2) \gamma_{v}''(0) + \cdots
   \right) ) d \rho \\
   &= \int g\left( 
  \alpha \gamma_{v}'(0) + (\alpha^{2}/2) \gamma_{v}''(0) + \cdots, \,\, \alpha \gamma_{v}'(0) + (\alpha^{2}/2) \gamma_{v}''(0) + \cdots
   \right) d \rho
\end{align*}
Using that the first derivative of the geodesic $\gamma_{v}$ coincides with the gradient of the potential $\gamma_{v}'(0)=\nabla_{g}\varphi$, and using bilinearity and homogenity of $g$, we have that the previous display equals
\begin{align*}
   & \int g\left( 
  \alpha \gamma_{v}'(0), \alpha \gamma_{v}'(0)\right) + g\left(\alpha \gamma_{v}'(0), (\alpha^{2}/2) \gamma_{v}''(0) \right)  \\
  &\qquad +g\left((\alpha^{2}/2) \gamma_{v}''(0), \alpha \gamma_{v}'(0)\right)
  + g\left( (\alpha^{2}/2) \gamma_{v}''(0),  (\alpha^{2}/2) \gamma_{v}''(0)
  \right) d \rho + o(\alpha^{2}) \\
  &= \int \left( \alpha^{2} g\left( 
  \rgrad \varphi,  \rgrad \varphi\right) + \alpha^{3}g\left( \rgrad \varphi,  \gamma_{v}''(0) \right) 
  + (\alpha^{2}/2)^{2}g\left( \gamma_{v}''(0), \gamma_{v}''(0)
  \right) \right) d \rho + o(\alpha^{2}) 
\end{align*}
Thus, it evidently holds that
\begin{align*}
    \frac{d}{d\alpha^{2}} \mathcal{F}(\rho + \alpha \chi) \mid_{\alpha = 0} = \frac{d}{d\alpha^{2}}\alpha^{2} \int \lVert \rgrad \varphi(x_{1}) \rVert_{g}^{2} d \rho = 2\int \lVert \rgrad \varphi(x_{1}) \rVert_{g}^{2} d \rho \implies \delta^{2} \mathcal{F} \mid_{\rho} = 2\int \lVert \rgrad \varphi(x_{1}) \rVert_{g}^{2} d \rho
\end{align*}
So the conclusion generalizes to the Riemannian case with minimal change.
\end{proof}

\begin{proof}[First Variation of the Wasserstein-Distance.]\label{proof:first_var}
\begin{align*}
    \frac{\delta}{\delta \rho} W_{2}^{2}(\rho, \rho^{(n,\eta)}) = \lim_{\epsilon \downarrow 0} \frac{W_{2}^{2}(\rho + \epsilon \chi, \rho^{(n,\eta)})}{\epsilon} = 2 \lim_{\epsilon \downarrow 0} \frac{1}{\epsilon} \sup_{\psi \in \Psi^{c}} \int \psi d(\rho + \epsilon \chi) + \int \psi^{c} d\rho^{(n,\eta)}
\end{align*}
Where $\psi^{c}(y) = \inf\{ \lVert x - y \rVert_{2}^{2} - \psi(x) \}$ is the c-conjugate of $\psi$ and $\Psi^{c}$ denotes the set of $\psi$ such that $\psi(x) + \psi^{c}(y) \leq \lVert x - y \rVert_{2}^{2}$. For $\psi^{*}$ optimal for the problem above, we have
\begin{align*}
    = 2 \lim_{\epsilon \downarrow 0} \frac{1}{\epsilon} \int \psi^{*} d(\rho + \epsilon \chi) + \int \psi^{*,c} d\rho^{(n,\eta)} = 2 \int \psi^{*} d\chi
\end{align*}
So that $2 \psi^{*}$, for $\psi^{*}$ the optimal potential from $\rho \to \rho^{(n,\eta)}$, is equal to $\delta W_{2}^{2}(\rho, \rho^{(n,\eta)})/\delta \rho$, recapitulating the standard result \cite{AGS2008, Jordan1998, chewi2024statistical}.
\end{proof}

\section{Implicit Regularization of Riemannian Gradient Descent}\label{sec:Riemann_IR}

\subsection{Background on Optimization over Riemannian Manifolds.} Let $M$ denote a Riemannian manifold. For a point $x \in M$ we denote the tangent space of $M$ at $x$ by $T_{x} M$ with cotangent space $T_{x}^{\star} M$. Let the Riemannian metric $\langle \cdot, \cdot\rangle \mid_{T_{x}M} := g(\cdot,\cdot)$ denote the norm on $T_{x} M$ given by $g: T_{x} M \times T_{x}^{\star} M \to \mathbb{R}_{+}$, which may be expressed as the two-form $g_{x} = \sum_{ij} g_{ij}(x)\,\, dx^{i} \mid_{x} \otimes\, dx^{j} \mid_{x}$. Consider a smooth loss function $J: M \to \mathbb{R}$ over the manifold. Riemannian optimization concerns finding a minimum $x^{\star}$ for the problem
\begin{align}\label{eq:manifold_loss}
    x^{\star}:=\argmin_{x \in M} J(x)
\end{align}
Riemannian gradient descent and Riemannian gradient flow arise as two natural approaches arise for this problem. In both cases, one relies on the \emph{Riemannian gradient}, $ \rgrad J \in T_{x} M $, which is the vector uniquely defined by $D J(x) [V] = g( V, \rgrad J )$ for the differential $DJ (x) [V]$ the linear map defined by $DJ(x) [V] = (J \circ \gamma)'(0)$ for a smooth curve $\gamma \in M$ with $\gamma(0) = x, \,\, \gamma'(0) = V$. In Riemannian gradient flow, one minimizes $J$ through the following ordinary differential equation on $M$
\begin{align}
    \dot{x} = - \rgrad J (x) \label{eq:RGF-1}
\end{align}
While Riemannian gradient flow is guaranteed to remain on $M$, unlike in the case of Euclidean gradient descent, the discretization of \ref{eq:RGF-1} given by $x^{(k+1)} = x^{(k)} - \eta \rgrad J$ generally fails to remain. To formalize a discretization which ensures feasibility, for any $p,q \in M$ denote the Riemannian geodesic distance between the points by $d_{M}(p,q)$ by the infimal length of a curve $\gamma \in C^{\infty}[0,1]$ parametrized by arc length which satisfies $\gamma(0) = p$ and $\gamma(1) = q$.
\begin{align}
    d(p,q) \,\,:=\inf_{\substack{\gamma \in C^{\infty}[0,1], \\
     \gamma(0) = p, \,\gamma(1) = q}} \int_{0}^{1} g( \dot{\gamma}(s), \dot{\gamma}(s)) ds
\end{align}
In \emph{Riemannian gradient descent} one minimizes \ref{eq:manifold_loss} by choosing a step-size $\eta \in \mathbb{R}_{+}$ and solving the following proximal-point implicit step for iterations $k \in [T]$
\begin{align}\label{eq:RGD_Implicit}
    x^{(k+1)} \gets \argmin_{x \in M} J(x) + \frac{1}{2\eta} d_{M}(x, x^{(k)})
\end{align}
Which ensures that the sequence of iterates $(x^{(t)})_{t=1}^{T} \subset M$. An equivalent \emph{explicit} form for this step is given by the \emph{exponential map}. Let $(x, v) \in TM$ denote a point in the tangent bundle. The exponential map $\mathrm{exp}_{x}(v): T M \to M$ is defined by $\mathrm{exp}_{x}(v) = \gamma_{v}(1)$ where $\gamma_{v}: [0,1] \to M$ is the unique geodesic p.a.l. satisfying $\gamma_{v}(0) = x$ and $\gamma_{v}'(0) = v$. Given this definition, the equivalent explicit step for \ref{eq:RGD_Implicit} is given by the update
\begin{align}\label{eq:RGD_Explicit}
    x^{(k+1)} \gets \mathrm{exp}_{x^{(k)}} \left(
    -\eta \rgrad J (x)
    \right)
\end{align}
Which constitutes the Riemannian analogue of the explicit Euclidean gradient descent update: in Euclidean space, $\mathrm{exp}_{x^{(k)}}(-\eta v) = x^{(k)} - \eta v$.

\subsection{Derivation of the Forward-Euler Case}

As Riemannian gradient descent offers explicit updates, demonstrating its implicit bias follows a similar recipe to that of gradient descent over the Euclidean manifold $\mathbb{R}^{d}$.

Suppose that we are given a Riemannian manifold $(M, g)$, where $M \subset \mathbb{R}^{d}$ is a smooth manifold and $g$ denotes its Riemannian metric. In particular, as standard, for $L^{2}(T_{x} M; \mathbb{R})$ the space of bilinear functions $\phi: T_{x} M \times T_{x} M \to \mathbb{R}$ one can express any such function in terms of basis elements of the bilinear forms $dx^{i} \mid_{x} \otimes \, d x^{j} \mid_{x}$ where each $dx^{i}$ forms a basis of the dual space $(T_{x} M)^{*}$ to the tangent space $T_{x} M$. This is to say, for any such $\phi$ one may write $\phi = \sum_{i,j} \alpha_{i,j} \, dx^{i} \otimes dx^{j}$. The Riemannian metric $g$ defines, for all points $x \in M$, a bilinear form $x \in M \to g_{x} \in L^{2}(T_{x} M; \mathbb{R})$ such that (1) symmetry holds $g_{x}(X,Y) = g_{x}(Y,X)$, (2) the metric is positive-definite $g_{x}(X,X) > 0$ for all non-zero $X$, and written in the basis of bilinear forms $dx^{i} \mid_{x} \otimes \, d x^{j} \mid_{x}$ one has the Riemannian metric at $x$, $g_{x} = \sum_{i,j} g_{i,j} (x)\, dx^{i} \mid_{x} \otimes \,\, dx^{j} \mid_{x}$, is differentiable so that $\frac{\partial}{\partial x^{k}} g_{ij}(x)$ are well-defined.

For a gradient-descent procedure on this Riemannian manifold, we require a smooth loss function defined on $M$, which we denote by $J: M \to \mathbb{R}$. The unique Riemannian gradient of $J$, $\rgrad J$, is defined by:
\[
\rgrad J:  \quad D J(X)[V] = g(V, \rgrad J (X))
\qquad \forall \, (V,X) \in T_{x} M
\]
Supposing we operate in local coordinates $(x^{1}, \cdots, x^{n})$ where we have the partials $\frac{\partial J}{\partial x^{i}}$, for the co-vector (1-form) $D J(\cdot)$ we have the coordinate representation
\begin{align*}
& D J(X) = \sum_{i} \frac{\partial J}{\partial x^{i}}(X) dx^{i}  = g(\cdot, \rgrad J(X))
\end{align*}
Focusing on the basis vectors $\partial_{j}$ of the tangent space, where any $v \in T_{p}M \implies v = v^{i} \partial_{i}$, we have
\begin{align*}
& D J(X) [\partial_{j}] = \sum_{i} \frac{\partial J}{\partial x^{i}}(X) dx^{i}  [\partial_{j}] = \sum_{i} \frac{\partial J}{\partial x^{i}}(X) \delta_{ij} = \frac{\partial J}{\partial x^{j}} 
\end{align*}
and
\begin{align*}
&g(\rgrad J(X), \partial_{j}) =
\left( \sum_{i,k} g_{i,k} \, dx^{i} \otimes dx^{k}  \right)[\rgrad J(X), \partial_{j}] \\
&= \sum_{i,k} \delta_{k,j} g_{i,k} dx^{i} [\rgrad J (X)] 
= \sum_{i} g_{i,j} dx^{i} [\rgrad J (X)]
=
g_{i,j} \rgrad J(X)^{i}
\end{align*}
so that in local coordinates, one may write the Riemannian gradient as
\begin{align*}
    g_{i,j} \rgrad J(X)^{i} = \frac{\partial J}{\partial x^{j}} \implies \rgrad J(X)^{i} = \sum_{j} g^{ij} \frac{\partial J}{\partial x^{j}}
\end{align*}
where the raised indices $g^{ij}$ correspond to the inverse metric-tensor; thus, one may write the Riemannian gradient in local coordinates succinctly in terms of the standard gradient as
\[
\rgrad J := (G)^{-1} \nabla J (X)
\]
where $G$ makes explicit the matrix-form of the 2-tensor at the point of interest. Riemannian gradient flow is given by the expression
\begin{align}\label{eq:RGF}
\Dot{X} = - \rgrad J (X)
\end{align}
whose flow can be shown to remain on $M$ exactly. For discrete steps over the manifold, discrete steps on velocities in the tangent space may lead to iterates $x_{k} \in \mathbb{R}^{d} \setminus M$, requiring the notion of either a retraction or the exponential map. For $x \in M$, the exponential map, denoted by $\exp_{x}: T_{x}M \to M$, is defined as
\begin{align*}
\exp_{x} (v) = \gamma_v(1) 
\end{align*}
for $v \in T_{x} M$ a tangent vector at point $x$, and $\gamma_{v}(t)$ the unique geodesic p.a.l. satisfying $\gamma_{v}(0) = x$, $\Dot{\gamma_{v}}(0) = v$. Given this definition of $\expp$, in Riemannian gradient descent one takes the canonical step for a step-size $\eta > 0$ of
\begin{align*}
    x_{n+1} = \exp_{x_{n}} (- \eta \rgrad J (x_{n})),
\end{align*}
where $\eta$ is sufficiently small so $- \eta \rgrad J (x_{n})$ remains in the injectivity radius of the exponential map about $x_{n}$. In Euclidean space, $\exp_{x_{n}}(-\eta \nabla J (x_{n})) := x_{n} - \eta \nabla J (x_{n})$, which reproduces standard gradient descent.

\begin{proof}[Proof of Implicit Bias of Riemannian Gradient Descent (Forward-Euler Case).]
Let us seek a continuous solution to a modified flow
\[
\Dot{\tilde{x}} = \mathcal{F}(\tilde{x})
\]
whose solution matches Riemannian gradient up to second-order in the step-size $\eta$. In particular, in a parallel manner to \cite{barrett2021implicit} we propose the Ansatz for the solution form for this modified flow of
\[
\Dot{\tilde{x}} = -\rgrad J (\tilde{x}) + \eta \Fimp (\tilde{x}) + o(\eta)
\]
where we scale $n \eta = t, (n+1)\eta = t+\eta$ and suppose the initial condition of the modified flow at time $t$ equals $\tilde{x}(t) :=x_{n}$. By definition, for $\lambda \in (0,1)$ $\tilde{x}(t+\lambda \eta)$ may not reside in $M$, but only stipulate that up to $o(\eta^{2})$ the final condition resides on the manifold, so that $\tilde{x}(t+\eta) \simeq_{o(\eta^{2})} x_{n+1} \in M$. Thus, the expansion of the corrected continuous $\tilde{x}(t + \eta)$ proceeds directly as
\begin{align*}
    & \tilde{x}(t + \eta) = \tilde{x}(t) + \eta \mathcal{F} (\tilde{x}(t)) + \frac{\eta^{2}}{2} \frac{d}{ d t} \left[ \mathcal{F} (\tilde{x}(t)) \right] + o(\eta^{2}) \\
    & = \tilde{x}(t) + \eta \mathcal{F} (\tilde{x}(t)) + \frac{\eta^{2}}{2} \nabla \mathcal{F} (\tilde{x}(t)) \Dot{\tilde{x}}(t) + o(\eta^{2}) \\
    &= \tilde{x}(t) + \eta \left( -\rgrad J (\tilde{x}(t)) + \eta \Fimp (\tilde{x}(t)) + o(\eta^{2}) \right) + \frac{\eta^{2}}{2} \nabla \mathcal{F} (\tilde{x}(t)) \mathcal{F} (\tilde{x}(t)) + o(\eta^{2}) \\
    &= \tilde{x}(t) - \eta \rgrad J (\tilde{x}) + \eta^{2} \left( \Fimp (\tilde{x}(t)) + \frac{1}{2} \nabla (-\rgrad J(\tilde{x}(t))) (- \rgrad J(\tilde{x}(t)))
    \right) + o(\eta^{2})
\end{align*}
The primary distinction is that the solution to the discrete iterate $\tilde{x}(t + \eta)$ is no longer given by the direct gradient step, but by the exponential map as
\begin{align*}
    \tilde{x}(t+\eta) + o(\eta^{2}) = x_{n+1} := \exp_{x_{n}} (- \eta \rgrad J (x_{n}))
\end{align*}
Thus, we find the identification of the implicit correction term to be
\begin{align*}
     &\exp_{x_{n}} (- \eta \rgrad J (x_{n})) 
     \\ &= \tilde{x}(t) - \eta \rgrad J (\tilde{x}) + \eta^{2} \left( \Fimp (\tilde{x}(t)) + \frac{1}{2} \nabla (-\rgrad J(\tilde{x}(t))) (- \rgrad J(\tilde{x}(t)))
    \right) + o(\eta^{2})
\end{align*}
Since $\exp_{x_{n}} (- \eta \rgrad J (x_{n}))$ coincides with the unique geodesic $\gamma_{v}:[0,1] \to M$ satisfying
\[
\gamma_{v}(0) = x_{n}, \quad \gamma_{v}(1) = x_{n+1} := \exp_{x_{n}} (- \eta \rgrad J (x_{n})), \quad \gamma_{v}'(0) = -\eta \rgrad J (x_{n})
\]
it holds that its expansion coincides with the geodesic \cite{Monera2013}, so that
\[\exp_{x_{n}} (- \eta \rgrad J (x_{n})) = \gamma_{v}(1) = \gamma_{v}(0) + \eta \gamma_{v}'(0) + \frac{\eta^2}{2} \gamma_{v}''(0) + \cdots := x_{n} - \eta \rgrad J (x_{n}) + \frac{\eta^{2}}{2} \gamma_{v}''(0) + o(\eta^{2})
\]

Moreover, \cite{Monera2013} identifies the second derivative of the geodesic at time $t=0$ to be the normal component of the directional derivative at a point $x$, $(D_{X} Y)^{\perp}$. In particular, decomposing $T_{x} \mathbb{R}^{d} = T_{x} M \oplus (T_{x} M)^{\perp} = T_{x} M \oplus N_{x} M$ one identifies this component as a bilinear map from the tangent bundle to the space of smooth sections $\Gamma(N M)$ over the normal bundle, $\alpha: T M \times T M \to \Gamma(N M)$ defined by 
\begin{align*}
    \alpha(X, Y) = (D_{X} Y)^{\perp}
\end{align*}
Denoting $\nu$ to be the Gauss map which gives the normal component of the vector field, i.e. if $M$ is given by a hypersurface element, one has that
\begin{align*}
    &(D_{X} Y)^{\perp} = \langle D_{X} Y, \nu \rangle \nu  = \II(X,Y) \nu
\end{align*}
where $\II(X,Y)$ denotes the second fundamental form and $D_{X}$ the standard covariant derivative. Thus, $\gamma_{v}''(t) = \II_{x}[v(t), v(t)] \nu(t)$ and at $t=0$ one has that $\gamma_{v}''(0) = \II_{x}[v,v] \nu$ for $v = \rgrad J(x_{n})$. Thus for the Riemannian gradient descent step one has that the correction is 
\[
\II_{x}[v,v]\nu := \II_{x_{n}}[- \rgrad J (x_{n}), - \rgrad J (x_{n})]\nu
\]
We then arrive at the identification
\begin{align*}
    &x_{n} - \eta \rgrad J (x_{n}) + \frac{\eta^{2}}{2} \II_{x_{n}}[- \rgrad J (x_{n}), - \rgrad J (x_{n})] \nu + o(\eta^{2}) \\
    &= x_{n} - \eta \rgrad J (x_{n}) + \eta^{2} \left( \Fimp (x_{n}) + \frac{1}{2} \nabla (-\rgrad J(x_{n})) (- \rgrad J(x_{n}))
    \right) + o(\eta^{2})
\end{align*}
Canceling the zero and first order terms, we arrive at the identification for $\Fimp$ (up to $o(\eta^{2})$) of
\begin{align*}
    \frac{\eta^{2}}{2} \II_{x_{n}}[- \rgrad J (x_{n}), - \rgrad J (x_{n})] \nu = \eta^{2} \left( \Fimp (x_{n}) + \frac{1}{2} \nabla (-\rgrad J(x_{n})) (- \rgrad J(x_{n}))
    \right)
\end{align*}
Thus, rearranging the terms, we see the implicit regularizer is given as the difference between the second-order component of the exponential map expansion and the naive second-order component of the expansion on the original flow
\begin{align*}
    \Fimp (x_{n})  = \frac{1}{2} \left[ \II_{x_{n}}[- \rgrad J (x_{n}), - \rgrad J (x_{n})]\nu - \nabla (\rgrad J(x_{n})) (\rgrad J(x_{n})) \right] 
\end{align*}
To simplify this and express the regularization intrinsically, and introducing $\nabla_{X}$ as the covariant derivative with respect to vector field $X$, we note that
\begin{align*}
    \II_{x_{n}}[- \rgrad J (x_{n}), - \rgrad J (x_{n})]\nu = D_{(- \rgrad J (x_{n}))} {(- \rgrad J (x_{n}))} - \nabla_{(- \rgrad J (x_{n}))} (- \rgrad J (x_{n}))
\end{align*}
by linearity of the covariant and directional derivatives, one has
\begin{align*}
    = D_{\rgrad J (x_{n})} { \rgrad J (x_{n})} - \nabla_{\rgrad J (x_{n})} \rgrad J (x_{n})
\end{align*}
and as $\gamma_{v}$ defines a geodesic, one has that the covariant derivative vanishes as $\nabla_{\dot{\gamma}_{v}} \dot{\gamma}_{v} = 0$, and identifies that $D_{\Dot{\gamma}_{v}} \Dot{\gamma}_{v} = \Ddot{\gamma}_{v}$. Thus, defining $x^{i}(t) = \gamma_{v}(t)$ as our coordinates, and writing the standard geodesic equation (for $f$ the hypersurface element for the manifold $M$), it immediately follows that
\begin{align*}
    & D_{\Dot{\gamma}} \Dot{\gamma} - \nabla_{\Dot{\gamma}} \Dot{\gamma} = \sum_{k} \Ddot{x}^{k}(t) \frac{\partial f}{\partial x^{k}} =D_{\Dot{\gamma}} \Dot{\gamma} - \sum_{k} \biggl( \Ddot{x}^{k}(t) + \sum_{ij} \Dot{x}^{i} (t) \Dot{x}^{j}(t) \Gamma_{ij}^{k}(\gamma(t)) \biggr) \frac{\partial f}{\partial x^{k}} \\
    & =\sum_{k} \Ddot{x}^{k}(t) \frac{\partial f}{\partial x^{k}} - \sum_{k} \biggl( \Ddot{x}^{k}(t) + \sum_{ij} \Dot{x}^{i} (t) \Dot{x}^{j}(t) \Gamma_{ij}^{k}(\gamma(t)) \biggr) \frac{\partial f}{\partial x^{k}} \\
    & = - \sum_{ijk} \Dot{x}^{i} (t) \Dot{x}^{j}(t) \Gamma_{ij}^{k}(\gamma(t)) \frac{\partial f}{\partial x^{k}} 
\end{align*}
Now, we evaluate this normal component to the acceleration at time $t=0$ to evaluate the Taylor expansion of the exponential map. By noting that
\[
\gamma(0) := x_{n}, \quad \gamma'(0) = \Dot{x}(0) := \rgrad J (x_{n})
\]
One finds the regularization is
\begin{align}
    \left[ - \sum_{ijk} \Dot{x}^{i} (t) \Dot{x}^{j}(t) \Gamma_{ij}^{k}(\gamma(t)) \frac{\partial f}{\partial x^{k}} \right]_{t=0} = - \sum_{ijk} {\rgrad J (x_{n})}^{i} \rgrad J (x_{n})^{j} \Gamma_{ij}^{k}(x_{n}) \partial_{k}
\end{align}
Thus, the implicit regularizer is written coordinate-wise in $k$ as
\begin{align*}
     \Fimp_{k} (x_{n}) & =   - \frac{1}{2} \sum_{ij} \Gamma_{ij}^{k} \rgrad J(x_{n})^{i} \rgrad J(x_{n})^{j} \\ & \qquad \qquad -\frac{1}{2}  (\nabla \rgrad J(x_{n})\rgrad J(x_{n}))^{k} \\
    &= - \frac{1}{2} \left( \Gamma_{ij}^{k}  \Dot{X}^{i} \Dot{X}^{j} +  (\nabla \Dot{X} )^{ki} \Dot{X}_{i} \right) 
\end{align*}
And thus, for $\mathrm{Hess}_{g} \, J$ the Hesse $(1,1)$-tensor, one has
\begin{align*}
    = - \frac{1}{2} \left( 
    \mathrm{Hess}_{g} \,J \, [ \dot{X} ] - 
    \Gamma_{ij}^{k}  \Dot{X}^{i} \Dot{X}^{j} \right)
\end{align*}
where we use the notation of Equation~\ref{eq:RGF} that $\Dot{X} = - \rgrad J (X)$ is the solution to the continuous Riemannian gradient flow, to distinguish it from the modified flow $\Dot{\tilde{x}} = \mathcal{F}(\tilde{x})$. One may rewrite this flow concisely with a simple operator which does not require evaluation of the Christoffel symbols $\Gamma_{ij}^{k}$. Note that the Christoffel symbols of the second kind, $\Gamma_{ij}^{k}$, are given as
\begin{align*}
    \Gamma_{ij}^{k} = \frac{1}{2} g^{km} \left(
    \partial_{j} g_{mi} + \partial_{i} g_{mj} - \partial_{m} g_{ij}
    \right)
\end{align*}
so that we have
\begin{align*}
    \Dot{X}^{i} \Dot{X}^{j} \Gamma_{ij}^{k} = \frac{1}{2} g^{km} \left(
    \partial_{j} g_{mi} \Dot{X}^{i} \Dot{X}^{j} + \partial_{i} g_{mj} \Dot{X}^{i} \Dot{X}^{j} - \partial_{m} g_{ij} \Dot{X}^{i} \Dot{X}^{j}
    \right)
\end{align*}
where contraction against the $ij$ indices implies the symmetry of the terms
\begin{align*}
    \partial_{j} g_{mi} \Dot{X}^{i} \Dot{X}^{j} = \partial_{i} g_{mj} \Dot{X}^{i} \Dot{X}^{j} 
\end{align*}
so the above expression becomes
\begin{align*}
    \Dot{X}^{i} \Dot{X}^{j} \Gamma_{ij}^{k} =  g^{km} \left[ \partial_{i} g_{mj} \Dot{X}^{i} \Dot{X}^{j} - \frac{1}{2} \partial_{m} g_{ij} \Dot{X}^{i} \Dot{X}^{j} \right]
\end{align*}
Returning to $(\nabla \Dot{X} )^{ki} \Dot{X}_{i}$ and applying $g^{km} g_{mj} = \delta^{k}_{j}$ we find
\begin{align*}
    (\nabla \Dot{X} )^{ki} \Dot{X}_{i} = g^{km} g_{mj} (\nabla \Dot{X} )^{ji} \Dot{X}_{i} = g^{km} g_{mj} \Ddot{X}^{j}
\end{align*}
So the implicit regularization may be expressed as
\begin{align*}
    - \frac{1}{2} \left( \Gamma_{ij}^{k}  \Dot{X}^{i} \Dot{X}^{j} +  (\nabla \Dot{X} )^{ki} \Dot{X}_{i} \right) &= - \frac{1}{2} g^{km} \left[ \partial_{i} g_{mj} \Dot{X}^{i} \Dot{X}^{j} - \frac{1}{2} \partial_{m} g_{ij} \Dot{X}^{i} \Dot{X}^{j} + g_{mj} \Ddot{X}^{j} \right] \\
    &= - \frac{1}{2} g^{km} \left[ \partial_{i} g_{mj} \Dot{X}^{i} \Dot{X}^{j} + g_{mj} \Ddot{X}^{j} - \frac{\partial}{\partial X^{m}} \left( \frac{1}{2} g_{ij} \Dot{X}^{i} \Dot{X}^{j} \right) \right] \\
    &= - \frac{1}{2} g^{km} \left[ \frac{\partial g_{mj}}{\partial X^{i}} \frac{d X^{i}}{d t} \Dot{X}^{j} + g_{mj} \frac{d}{d t} \Dot{X}^{j} - \frac{\partial}{\partial X^{m}} \left( \frac{1}{2} g_{ij} \Dot{X}^{i} \Dot{X}^{j} \right) \right] \\
    &= - \frac{1}{2} g^{km} \left[ \frac{d}{dt} \left( g_{mj} \Dot{X}^{j} \right) - \frac{\partial}{\partial X^{m}} \left( \frac{1}{2} g_{ij} \Dot{X}^{i} \Dot{X}^{j} \right) \right] \\
    &= \frac{1}{2} g^{km} \left[ 
    \frac{\partial}{\partial X^{m}} \left( \frac{1}{2} g_{ij} \Dot{X}^{i} \Dot{X}^{j} \right)
    -    \frac{d}{dt} \frac{\partial}{\partial \Dot{X}^{m}} \left( \frac{1}{2} g_{ij} \Dot{X}^{i} \Dot{X}^{j} \right)  \right] \\
    &= \frac{1}{2} g^{km} \left( 
    \frac{\partial}{\partial X^{m}} 
    -    \frac{d}{dt} \frac{\partial}{\partial \Dot{X}^{m}}  \right) \left( \frac{1}{2} g_{ij} \Dot{X}^{i} \Dot{X}^{j} \right)\\
    &:= \frac{1}{2} g^{km} \left( 
    \frac{\partial}{\partial X^{m}} 
    -    \frac{d}{dt} \frac{\partial}{\partial \Dot{X}^{m}}  \right) \mathcal{L}
\end{align*}
Where $( \partial_{X^{m}} -    \frac{d}{dt} \partial_{ \Dot{X}^{m}} )$ denotes the Euler-Lagrange operator on the $m$-th component and $g^{km}$ is the contravariant metric raise, as in standard Riemannian descent on $M$. The flow is given by a descent in the EL operator on the Lagrangian $\mathcal{L} = \frac{1}{2} g_{ij} \Dot{X}^{i} \Dot{X}^{j}$ which denotes the kinetic energy with respect to a Riemannian metric $g_{ij}$. Since we have that $\Dot{\tilde{x}}(t) \simeq_{o(\eta)} \Dot{X}(t)$, it holds that for the kinetic energy
\begin{align*}
\frac{\eta}{4} g_{ij} \Dot{\tilde{x}}^{i} \Dot{\tilde{x}}^{j} = \frac{\eta}{4} g_{ij} (\Dot{X}^{i} + O(\eta) ) (\Dot{X}^{j} + O(\eta) ) = \frac{\eta}{4} g_{ij} \Dot{X}^{i} \Dot{X}^{j} + O(\eta^{2})
\end{align*}
thus the corrected flow may be entirely given in the modified variable $\tilde{x}$ as
\begin{align*}
 -\rgrad J (\tilde{x}) &= - g^{km} \partial_{m} J = \Dot{X} \\
 \Dot{\tilde{x}}^{k} &= - g^{km} \left( \frac{\partial}{\partial \tilde{x}^{m}} J(\tilde{x}) - \frac{\eta}{2} \left( 
    \frac{\partial}{\partial \tilde{x}^{m}} 
    -    \frac{d}{dt} \frac{\partial}{\partial \Dot{\tilde{x}}^{m}}  \right) \mathcal{L} \right) \\
     \Dot{\tilde{x}}^{k} &= - g^{km} \left( \frac{\partial}{\partial \tilde{x}^{m}} J(\tilde{x}) - \left( 
    \frac{\partial}{\partial \tilde{x}^{m}} 
    -    \frac{d}{dt} \frac{\partial}{\partial \Dot{\tilde{x}}^{m}}  \right) \left[ \frac{\eta}{4} g_{ij} \Dot{\tilde{x}}^{i} \Dot{\tilde{x}}^{j} \right] \right) 
\end{align*}
Since a potential energy or loss function is independently defined only in the parameters $\tilde{x}$ as $J: M \to \mathbb{R}$ we have that
\[
\frac{\partial}{\partial \Dot{\tilde{x}}^{m}} J(\tilde{x}) = 0
\]
Thus, one may extend the Euler-Lagrange operator to both the potential and kinetic contributions
\begin{align*}
    &- g^{km} \left( \left( \frac{\partial}{\partial \tilde{x}^{m}} - \frac{d}{dt} \frac{\partial}{\partial \Dot{\tilde{x}}^{m}} \right) J(\tilde{x}) - \left( 
    \frac{\partial}{\partial \tilde{x}^{m}} 
    -    \frac{d}{dt} \frac{\partial}{\partial \Dot{\tilde{x}}^{m}}  \right) \left[ \frac{\eta}{4} g_{ij} \Dot{\tilde{x}}^{i} \Dot{\tilde{x}}^{j} \right] \right).
\end{align*}
We have 
\[
E_{m} = \left( 
    \frac{\partial}{\partial \tilde{x}^{m}} 
    -    \frac{d}{dt} \frac{\partial}{\partial \Dot{\tilde{x}}^{m}}  \right)
\]
is the Euler-Lagrange operator and a Step-Dependent Lagrangian,
\[
\mathcal{L}^{\eta}(\tilde{x}, \dot{\tilde{x}}) =  \underbrace{\frac{\eta}{4} g_{ij} \Dot{\tilde{x}}^{i} \Dot{\tilde{x}}^{j}}_{\textrm{Kinetic Bias }(T)} - \underbrace{J(\tilde{x})}_{\textrm{Potential }(V) 
}. 
\]
Here, we distribute the signs to maintain the standard notation that $\mathcal{L} = T - V$ for $T$ and $V$ representing kinetic and the potential energies. To order $\eta^{2}$, one then has that the modified continuous flow of Riemannian gradient descent is as an application of the Euler-Lagrange operator on $\mathcal{L}^{\eta}(\tilde{x}, \dot{\tilde{x}})$:
\begin{align*}
&\Dot{\tilde{x}}^{k} = \,g^{km}\,\Bigl(
\partial_{\tilde{x}^{m}}
-\tfrac{d}{dt}\,\partial_{\dot{\tilde{x}}^{m}}
\Bigr)
\Bigl[\frac{\eta}{4}\,g_{ij}\dot{\tilde{x}}^{i}\dot{\tilde{x}}^{j}\;-\;
J(\tilde{x})
\Bigr]
\;+\;O(\eta^{2}). \\
& = \,g^{km}E_{m}\mathcal{L}^{\eta}(\tilde{x}, \dot{\tilde{x}})\;+\;O(\eta^{2}).
\end{align*}
This concludes the proof.
\end{proof}

\textbf{Remark.}
This proposition recovers the gradient-descent regularization of \cite{smith2021on, barrett2021implicit}: for Euclidean space $(M, g) = (\mathbb{R}^{d}, \mathrm{id})$ the Riemannian gradient reduces to the Euclidean gradient so $\rgrad J (\tilde{x} ) =  \nabla J(\tilde{x})$ as $g_{ij} = \delta_{ij}$. 
\begin{align*}
    &- \nabla_{\mb{id}} J + \frac{\eta}{4}  (\mb{id})^{-1}\left( 
    \nabla_{\tilde{x}} - \frac{d}{dt} \nabla_{\Dot{\tilde{x}}}
    \right) \langle \dot{\tilde{x}}, (\mb{id})_{ij} \dot{\tilde{x}} \rangle + O(\eta^{2}) = - \nabla J + \frac{\eta}{4} \left( 
    \nabla_{\tilde{x}} - \frac{d}{dt} \nabla_{\Dot{\tilde{x}}}
    \right) \lVert \dot{\tilde{x}} \rVert_{2}^{2} + O(\eta^{2}) \\
    &= - \nabla J - \frac{\eta}{4}  
     \,\frac{d}{dt} \nabla_{\Dot{\tilde{x}}}\lVert \dot{\tilde{x}} \rVert_{2}^{2} + O(\eta^{2}) = - \nabla J - \frac{\eta}{4}  \nabla \lVert \nabla J \rVert_{2}^{2} + O(\eta^{2}),
\end{align*}
where we used that $\frac{d}{dt} \nabla_{\Dot{\tilde{x}}}\lVert \dot{\tilde{x}} \rVert_{2}^{2} = \frac{d}{dt} \dot{\tilde{x}}(\tilde{x}(t)) = \nabla \dot{\tilde{x}} \,\dot{\tilde{x}} = (1/2)\nabla \lVert \dot{\tilde{x}} \rVert_{2}^{2}$.

\section{Details on Numerical Validation}

\subsection{Bures-Wasserstein Dynamics}\label{sec:bw_numerics}

We numerically validate the second-order accuracy of the proposed JKO correction. We specialize to $M=\mathbb{R}^{3}$, and to avoid alignment of the drift with the coordinate axes generate a random Gaussian matrix $Z_{ij} \sim \mathcal{N}(0,1)$, orthogonalize $Z = Q R^{\top}$ via QR decomposition, and use the orthogonal factor $Q$ to rotate a diagonal drift spectrum
\[
\bm{d} =\begin{pmatrix}
    -0.2 \\ -0.6 \\ -1.2
\end{pmatrix}
\]
to form a random negative-definite drift $A \prec 0$
\[
A = Q \mathrm{diag}(\bm{d}) Q^{\top}
\]
As $A$ is real, symmetric, and strictly negative definite, we ensure the system dynamics \eqref{eq:SDE_FokkPlanck} are stable. This also prevents alignment of $A$ with the initial covariance $P_{0}$, which is more representative of a non-commuting flow. This covariance $P_{0}$ is generated by sampling a random Gaussian matrix $M_{ij} \sim \mathcal{N}(0,1)$ and constructing the positive-definite matrix
\[
P_{0} = MM^{\top} + \epsilon \mb{I}
\]
For $\epsilon = 0.5$ guaranteeing $P_{0} \succ 0$. We set $\beta = 1$ in \eqref{eq:SDE_FokkPlanck} and initialize $\bm{\mu}_{0} \sim \mathcal{N}(\mb{0}, \mb{I})$. We integrate Wasserstein-gradient flow, the second-order flow for $200$ RK4 steps and compare to the JKO step in $W_2$ distance, in $\ell_{2}$ error on the mean, and Frobenius-norm error on the covariance. Values for this experiment are reported in Table~\ref{tab:jko_error_scaling}. We additionally plot these curves in Figure~\ref{fig:jko-bw}, and show a non-symmetric and rotational set of Gaussian dynamics on JKO, Wasserstein gradient flow, and the second-order JKO flow in the Top Left sub-panel of Figure~\ref{fig:jko-bw}.

\begin{table}[ht]
\centering
\caption{Error scaling of vanilla and modified (JKO) flows with step size $\eta$.}
\begin{tabular}{c cccccc}
\toprule
$\eta$ 
 & $\|\mu_{\mathrm{v}}-\mu_{\mathrm{JKO}}\|$ 
 & $\|\mu_{\mathrm{m}}-\mu_{\mathrm{JKO}}\|$ 
 & $\|P_{\mathrm{v}}-P_{\mathrm{JKO}}\|_F$ 
 & $\|P_{\mathrm{m}}-P_{\mathrm{JKO}}\|_F$ 
 & $W_2(\mathrm{v},\mathrm{JKO})$ 
 & $W_2(\mathrm{m},\mathrm{JKO})$ \\
\midrule
0.250000 & $2.91{\times}10^{-2}$ & $5.62{\times}10^{-3}$ & $2.98{\times}10^{-1}$ & $6.60{\times}10^{-2}$ & $7.62{\times}10^{-2}$ & $1.95{\times}10^{-2}$ \\
0.125000 & $9.00{\times}10^{-3}$ & $8.68{\times}10^{-4}$ & $1.02{\times}10^{-1}$ & $1.12{\times}10^{-2}$ & $2.46{\times}10^{-2}$ & $3.35{\times}10^{-3}$ \\
0.062500 & $2.52{\times}10^{-3}$ & $1.22{\times}10^{-4}$ & $3.03{\times}10^{-2}$ & $1.67{\times}10^{-3}$ & $7.13{\times}10^{-3}$ & $5.23{\times}10^{-4}$ \\
0.031250 & $6.67{\times}10^{-4}$ & $1.62{\times}10^{-5}$ & $8.29{\times}10^{-3}$ & $2.29{\times}10^{-4}$ & $1.94{\times}10^{-3}$ & $7.52{\times}10^{-5}$ \\
0.015625 & $1.72{\times}10^{-4}$ & $2.09{\times}10^{-6}$ & $2.17{\times}10^{-3}$ & $3.02{\times}10^{-5}$ & $5.07{\times}10^{-4}$ & $1.02{\times}10^{-5}$ \\
0.007812 & $4.36{\times}10^{-5}$ & $2.66{\times}10^{-7}$ & $5.54{\times}10^{-4}$ & $3.87{\times}10^{-6}$ & $1.30{\times}10^{-4}$ & $1.33{\times}10^{-6}$ \\
\bottomrule
\end{tabular}
\label{tab:jko_error_scaling}
\end{table}

\begin{figure}
    \centering
    \includegraphics[width=1.0\linewidth]{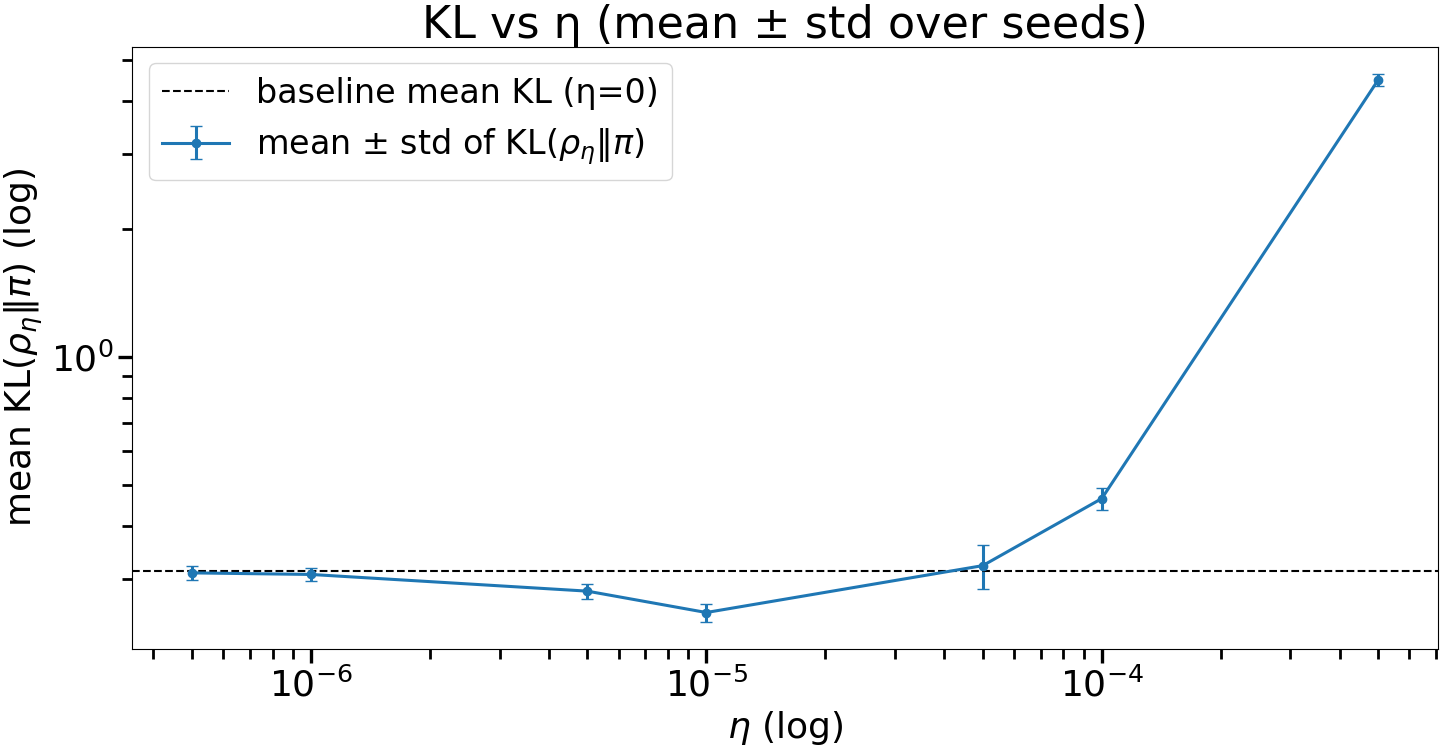}
    \caption{KL-distance between target measure and both JKO-flow and Wasserstein gradient flow numerically integrated for $2000$ steps with $h=2e-3$ and a range of $\eta$ values. Mean and standard deviation are over $500$ random seeds. Several JKO strengths $\eta$ in the range $[1e-6, 1e-4]$ show a consistent improvement over the $\eta=0$ baseline.}
    \label{fig:jko-wgf}
\end{figure}

\end{document}